\theoremstyle{plain}
\newtheorem{theorem}{Theorem}[section]
\newtheorem{proposition}[theorem]{Proposition}
\theoremstyle{definition}
\newtheorem{definition}[theorem]{Definition}
\theoremstyle{remark}
\def\id{\mathrm{id}}
\def\barcalN{\overline{\mathcal{N}}}
\def\mmax{{\mathrm{max}}}
\def\mmin{{\mathrm{min}}}
\def\inv{\mathrm{inv}}
\def\Hilbert{\mathrm{Hilbert}}
\def\Sym{\mathrm{Sym}}
\def\Log{{\mathrm{Log}}}
\def\eqdef{{:=}}
\def\equaldef{{:=}}
\def\det{\mathrm{det}}
\def\sp{\mathrm{sp}}
\def\Sinh{\mathrm{Sinh}}
\def\LERP{\mathrm{LERP}}
\def\Cosh{\mathrm{Cosh}}
\def\sinh{\mathrm{sinh}}
\def\cosh{\mathrm{cosh}}
\def\arccosh{\mathrm{arccosh}}
\def\FR{\mathrm{FR}}
\def\calM{\mathcal{M}}
\def\Sym{\mathrm{Sym}}
\def\dP{\mathrm{d}P}
\def\dtheta{\mathrm{d}\theta}
\def\trace{\mathrm{trace}}
\def\proj{\mathrm{proj}}
\def\barN{{\overline{\mathcal{N}}}}
\def\dP{\mathrm{d}P}
\def\CO{\mathrm{CO}}
\def\GL{\mathrm{GL}}
\def\KL{\mathrm{KL}}
\def\diag{\mathrm{diag}}
\def\SPD{\mathrm{SPD}}
\def\Aff{\mathrm{Aff}}
\def\bbR{\mathbb{R}}
\def\Fisher{\mathrm{Fisher}}
\def\vectortwo#1#2{{\left[\begin{array}{l}#1 \cr #2\end{array}\right]}}
\def\mattwotwo#1#2#3#4{{\left[\begin{array}{ll}#1 & #2\cr #3 & #4\end{array}\right]}}
\def\matthreethree#1#2#3#4#5#6#7#8#9{{\left[\begin{array}{lll}#1 & #2 & #3
\cr #4 & #5 & #6\cr 
#7 & #8 & #9\end{array}\right]}}
\def\calE{\mathcal{E}}
\def\Ball{\mathrm{Ball}}
\def\tr{\mathrm{tr}}
\def\inner#1#2{{\langle #1, #2\rangle}}
\def\calN{\mathcal{N}}
\def\dx{\mathrm{d}x}
\def\SPD{\mathrm{SPD}}
\def\calP{\mathcal{P}}
\def\Length{\mathrm{Len}}
\def\dt{\mathrm{d}t}
\def\dim{\mathrm{dim}}
\def\Aff{\mathrm{Aff}}
\def\std{\mathrm{std}}
\def\barP{{\bar P}}
\def\dSigma{\mathrm{d}\Sigma}
\def\dmu{\mathrm{d}\mu}
\def\ds{\mathrm{d}s}
\def\st{\ :\ }
\def\bbR{\mathbb{R}}
\newtheorem{property}{Property}
\newtheorem{example}{Example}
\begin{document}

\title{Fisher-Rao distance and pullback SPD cone distances between multivariate normal distributions\footnote{A preliminary version of this work appeared as ``Fisher-Rao and pullback Hilbert cone distances on the multivariate Gaussian manifold with applications to simplification and quantization of mixtures,'' at the 2nd Annual Topology, Algebra, and Geometry in Machine Learning Workshop (TAG-ML) of ICML'23.}}
 
\author{Frank Nielsen\orcidlink{0000-0001-5728-0726}\\
\ \\
Sony Computer Science Laboratories Inc.\\ Tokyo, Japan}

\date{}
 
\maketitle

\begin{abstract}
Data sets consisting of multivariate normal distributions abound in many scientific areas like diffusion tensor imaging, structure tensor computer vision, radar signal processing, machine learning, etc.
In order to process those normal data sets for downstream tasks like filtering, classification, or clustering, one needs to define proper notions of dissimilarities between normals and smooth paths linking them. 
The Fisher-Rao distance defined as the Riemannian geodesic distance induced by the Fisher information metric is such a principled metric distance which however is not known in closed-form excepts for a few particular cases including the univariate and same-mean/same-covariance cases.
In this work, we first report a fast and robust method to approximate the Fisher-Rao distance between multivariate normal distributions with a guaranteed $1+\epsilon$ factor for any $\epsilon>0$.
Second, we introduce a class of distances based on diffeomorphic embeddings of the normal manifold into a submanifold of the higher-dimensional symmetric positive-definite cone corresponding to the manifold of centered normal distributions. 
We show that the projective Hilbert distance on the cone yields a metric distance on the embedded normal submanifold,
 and we pullback that cone distance with its associated straight line Hilbert cone geodesics to obtain a distance and smooth paths between normal distributions.
Compared to the Fisher-Rao distance approximation, the pullback Hilbert cone distance is computationally light since it requires to compute only the extreme minimal and maximal eigenvalues of matrices.
Finally, we show how to use those distances and paths in clustering tasks.
\end{abstract}

\section{Introduction}

Data sets of multivariate normal distributions (MVNs) are increasing frequent  in many scientific areas like medical imaging (diffusion tensor imaging~\cite{MVNGeodesicShooting-2014}), computer vision (image segmentation~\cite{carson2002blobworld} or structure tensor imaging~\cite{CovarianceTracking-2006}), signal processing (covariance matrices~\cite{barbaresco2012information} in radar or brain computer interfaces~\cite{barachant2011multiclass}), and machine learning (Gaussian mixtures or kernel density estimators~\cite{davis2006differential}). These data sets can be viewed as (weighted) point sets on a Gaussian manifold and Riemannian and information-geometric structures~\cite{Skovgaard-1984,IG-MVN-1999} on that manifold allows one to define geodesics and distances or divergences which allows on to build algorithms like filtering, classification, clustering or optimization techniques~\cite{tuzel2008pedestrian,absil2008optimization,Sra-2015}.
For example, we may simplify a Gaussian mixture model~\cite{davis2006differential,goldberger2008simplifying,zhang2010simplifying} (GMM) with $n$ components by viewing the mixture as a weighted point set and simplify the mixture by clustering the point set into $k$ clusters using $k$-means or $k$-medioids~\cite{davis2006differential} (as known as discrete $k$-means). We may also consider $n$ GMMs with potentially different components and build a codebook of all mixture components to quantize and compress the representation of these GMMs.  
In this work, we consider two kinds of metric distances and metric geodesics: The Fisher-Rao distance~\cite{strapasson2016clustering} and a new distance obtained by pulling back the Hilbert cone projective distance on an embedding of the Gaussian manifold into the higher-dimensional symmetric positive-definite matrix cone~\cite{SDPMVN-1990}.

The paper is organized as follows: In Section~\ref{sec:FRMVN}, we recall the Fisher-Rao geodesic distance and mention its lack of general closed-form formula on the Gaussian manifold. We then build on a recent breakthrough which studied the MVN Fisher-Rao geodesics~\cite{kobayashi2023geodesics} (\S\ref{sec:FRgeo}) to design an approximation method  which upper bounds the true Fisher-Rao distance with guaranteed $(1+\epsilon)$ precision for any $\epsilon>0$ (Theorem~\ref{thm:guar} in \S\ref{sec:FRapprox}). We present applications to simplification and quantization of GMMs in Section~\ref{sec:clustering} using fast clustering methods relying on nearest neighbor query data structures~\cite{yianilos1993data} and smallest enclosing balls in metric spaces.
In Section~\ref{sec:Hilbert}, we introduce the novel pullback Hilbert cone distance which is fast to compute and enjoys simple  expression of geodesics. 

\section{Fisher-Rao geodesics and distances}\label{sec:FRMVN}

A normal distribution $N(\mu,\Sigma)$ has probability density function (pdf) $p_{\mu,\Sigma}(x)$ defined on the full support $\bbR^d$  given by:
\begin{eqnarray*}
p_{\mu,\Sigma}(x)=
\frac{(2\pi)^{-\frac{d}{2}}}{\sqrt{\det(\Sigma)}} \exp\left(-\frac{(x-\mu)^\top\Sigma^{-1}(x-\mu)}{2}\right).
\end{eqnarray*}

Consider  the statistical model consisting of all $d$-variate normal distributions:
$$
\calN(d)=
\left\{ N(\lambda) \st \lambda=(\mu,\Sigma)\in\Lambda(d)=\bbR^d\times \Sym_+(d,\bbR)
\right\},
$$  
where $\Sym_+(d,\bbR)$ denote the set of $d\times d$ positive-definite matrices.
The dimension of $\calN(d)$ is $m=\dim(\Lambda(d))=d+\frac{d(d+1)}{2}=\frac{d(d+3)}{2}$.
When the dimension is clear from context, we omit to specify the dimension and write $\calN$ for short.

Let $l_\lambda(x)=\log p_\lambda(x)$ denote the log-likelihood function.
The MVN model is both identifiable (bijection between $\lambda$ and $p_\lambda$) and regular~\cite{calin2014geometric,IG-2016}, i.e.,
 the  Fisher information matrix $I(\theta)=-E_\theta[\nabla^2 l_\theta(x)]$ is positive-definite and defines a Riemannian metric 
$g_\Fisher$ called the Fisher information metric.
The Riemannian manifold $(\calN,g_\Fisher)$ is called the Fisher-Rao Gaussian manifold with squared infinitesimal length element~\cite{Skovgaard-1984}  $\ds_\Fisher^2$ at $(\mu,\Sigma)$ given by:
$$
\ds_\Fisher^2= \dmu^\top \Sigma^{-1} \dmu + \frac{1}{2}\tr\left(\left(\Sigma^{-1}\dSigma\right)^2\right),
$$ 
where $\dmu\in\bbR^d$ and $\dSigma\in\Sym(d,\bbR)$ (vector space of symmetric $d\times d$ matrices).

The Fisher-Rao length of a smooth curve $c(t)$ with $t\in [a,b]$ is defined by integrating the infinitesimal Fisher-Rao length element along the curve:
$\Length(c)=\int_a^b \ds_\Fisher(c(t))\, \dt$,
and the Fisher-Rao distance~\cite{Hotelling-1930,Rao-1945} between $N_0=N(\mu_0,\Sigma_0)$ and $N_1=N(\mu_1,\Sigma_1)$ is the geodesic distance on $(\calN,g_\Fisher)$, i.e., the length of the Riemannian geodesic $\gamma_\FR^\calN(N_0,N_1;t)$:
\begin{equation}
\rho_\FR(N_0,N_1)=\int_0^1 \ds_\Fisher(\gamma_\FR^\calN(N_0,N_1;t))\,\dt.
\end{equation}
In Riemannian geometry~\cite{godinho2012introduction}, geodesics with boundary conditions $N_0=\gamma_\FR^\calN(N_0,N_1;0)$ and 
$N_1=\gamma_\FR(N_0,N_1;1)$ are length minimizing curves among all curves $c(t)$ satisfying $c(0)=N(\mu_0,\Sigma_0)$ and $c(1)=N(\mu_1,\Sigma_1)$:
$$
\rho_\FR(N_0,N_1)=\inf_{\substack{c(t)\\ c(0)=p_{\mu_0,\Sigma_0}\\ c(1)=p_{\mu_1,\Sigma_1}}}  \, \left\{\Length(c)\right\}.
$$
Riemannian geodesics are parameterized by (normalized) arc length $t$
which ensures that
\begin{equation}
\rho(\gamma_\rho(P_0,P_1;s),\gamma_\rho(P_0,P_1;t))=|s-t|\, \rho(P_0,P_1).
\end{equation}

More generally, geodesics in differential geometry~\cite{calin2014geometric} are auto-parallel curves with respect to an affine connection $\nabla$: $\nabla_{\dot\gamma} \dot\gamma=0$,
where $\dot\ =\frac{d}{\dt}$ and $\nabla_X Y$ is the covariant derivative induced by the connection.
In Riemannian geometry, the default connection is the unique {\em Levi-Civita metric connection}~\cite{godinho2012introduction} $\nabla^g$ induced by the metric $g$.

In general, the Fisher-Rao distance between MVNs is not known in closed-form~\cite{FRMVNReview-2020,nielsen2023simple}.
However, there are two main cases where closed-form formula are known:  
\begin{itemize} 
\item The case $d=1$:
The Fisher-Rao distance between univariate normal distributions~\cite{Yoshizawa-1972} $N_0=N(\mu_0,\sigma_0^2)$ 
and $N_1=N(\mu_1,\sigma_1^2)$ is
\begin{equation}\label{eq:FR1D}
\rho_{\calN}(N_0,N_1)=\sqrt{2}\log \left(\frac{1+\Delta(\mu_0,\sigma_0;\mu_1,\sigma_1)}{1-\Delta(\mu_0,\sigma_0;\mu_1,\sigma_1)}\right)
,
\end{equation}
where for $(a,b,c,d)\in\bbR^4\backslash\{0\}$, 
\begin{equation}
\Delta(a,b;c,d)=\sqrt{\frac{(c-a)^2+2(d-b)^2}{(c-a)^2+2(d+b)^2}}  
\end{equation} 
is a M\"obius distance~\cite{burbea1982entropy}.

\item The case where MVNs $N_0$ and $N_1$ share the same mean~\cite{DoubleCone-James-1973,Skovgaard-1984}, i.e., they belong to some submanifold $\calN_\mu=\{N(\mu,\Sigma)\st\Sigma\in\Sym_+(d,\bbR)\}$.
When $\mu=0$, we let $\calP(d)=\calN_0(d)$.
We have:
\begin{eqnarray}\label{eq:FRsamemu}
\rho_{\calN_\mu}(N_0,N_1)&=& \sqrt{\frac{1}{2} \sum_{i=1}^d \log^2 \lambda_i(\Sigma_0^{-\frac{1}{2}}\Sigma_1\Sigma_0^{-\frac{1}{2}})},\label{eq:RaoCMVN}
\end{eqnarray}
where $\lambda_i(M)$ denotes the $i$-th largest eigenvalue of matrix $M$.
Observe that matrix $\Sigma_0^{-1}\Sigma_1$ may not be symmetric but $\Sigma_0^{-\frac{1}{2}}\Sigma_1\Sigma_0^{-\frac{1}{2}}$ is always SPD and $\lambda_i(\Sigma_0^{-1}\Sigma_1)=\lambda_i(\Sigma_0^{-\frac{1}{2}}\Sigma_1\Sigma_0^{-\frac{1}{2}})$.
The submanifolds $\calN_\mu$ are totally geodesic in $\calN$, and the Fisher-Rao geodesics are known in closed form:
$\gamma_\FR^\calN(N_0,N_1;t)=N(\mu,\Sigma_t)$
with
\begin{eqnarray}
\Sigma_t &=& \Sigma_0^{\frac{1}{2}}\, (\Sigma_0^{-\frac{1}{2}}\Sigma_1\Sigma_0^{-\frac{1}{2}})^t\, \Sigma_0^{\frac{1}{2}}.
\end{eqnarray}
The geodesic $\gamma(\Sigma_0,\Sigma_1;t)=\Sigma_t$ (with $t\in [0,1]$) midpoint is thus 
$$
\gamma(\Sigma_0,\Sigma_1;\frac{1}{2})=\Sigma_{\frac{1}{2}}=\Sigma_0^{\frac{1}{2}}\, (\Sigma_0^{-\frac{1}{2}}\Sigma_1\Sigma_0^{-\frac{1}{2}})^{\frac{1}{2}}\, \Sigma_0^{\frac{1}{2}}.
$$  
See also~Appendix~\ref{sec:ahm}.
\end{itemize}

Notice that all submanifolds $\calN_\mu$ are non-positive curvature manifolds (NPC)~\cite{bridson2013metric,cheng2016recursive}, i.e. sectional curvatures are non-positive.
However, $\calN(d)$ is not a NPC manifold when $d>1$ since some sectional curvatures can be positive~\cite{Skovgaard-1984}. 
NPC property is important for designing optimization algorithms on manifolds with guaranteed convergence~\cite{cheng2016recursive}.
In a NPC manifold $(M,g)$, we can write the Riemannian distance using the Riemannian logarithm map $\Log_p: M\rightarrow T_pM$:
$\rho_g(p_1,p_2)=\|\Log_{p_1}(p_2)\|_{p_1}$,
where $\|v\|_p=\sqrt{g_p(v,v)}$.
On the NPC SPD cone $\Sym_+(d,\bbR)$ equipped with the trace metric~\cite{moakher2005differential,EllipticIsometrySPD-2021}
$$
g_P^\trace(P_1,P_2):=\tr(P^{-1}P_1P^{-1}P_2),
$$
the length element at $P$ is $\ds_P=\|P^{-\frac{1}{2}} \dP P^{-\frac{1}{2}}\|_F$, and the Riemannian logarithm map is expressed using the matrix logarithm $\Log$, and we have
$$
\rho_{g_\trace}(P_1,P_2)=\|\Log_{P_1}(P_2)\|_{P_1}=\|\Log(P_1^{-1}P_2)\|_F,
$$
where $\|X\|_F=\sqrt{\inner{X}{X}_F}=\sqrt{\sum_{i,j} X_{ij}^2}$ is the  Frobenius norm induced by the Frobenius inner product: $\inner{A}{B}_F=\tr(A^\top B)$ (Hilbert-Schmidt inner product). The Fr\"obenius norm can be calculated in quadratic time as 
$\|M\|_F=\sqrt{\sum_{i,j=1}^d m_{ij}^2}$ for a $d\times d$ matrix $M=[m_{ij}]$, and is equivalent to the $\ell_2$ norm on the vectorization of $M$: $\|M\|_F=\|\mathrm{vec}(M)\|_2$, where $\mathrm{vec}(M)=[m_{11},\ldots,m_{1d},\ldots,m_{d1},\ldots,m_{dd}]^\top \in\bbR^{d^2}$.
The SPD cone is also a Bruhat-Tits space~\cite{lang1999bruhat}.

Historically, the SPD Riemannian trace metric distance was studied   by Siegel~\cite{Siegel-1964} in the wider context of the complex manifold of symmetric complex square matrices with positive-definite imaginary part:
The so-called Siegel upper half space~\cite{friedland2004revisiting} which generalizes the Poincar\'e upper plane.
It was shown recently  that the Siegel upper half space is NPC~\cite{cabanes2021classification}.
Another popular distance in machine learning in the Wasserstein distance for which the underlying geometry on the Gaussian space was studied in~\cite{Takatsu-2011}.

\subsection{Invariance under action of the positive affine group}

The length element $\ds_\Fisher$ is invariant under the action of the positive affine group~\cite{Eriksen-MVNGeodesic-1986,eriksen1987geodesics}
$$
\Aff_+(d,\bbR)\equaldef \left\{(a,A) \st a\in\bbR^d, A\in\GL_+(d,\bbR)\right\},
$$ 
where $\GL_+(d,\bbR)$ denotes the group of $d\times d$ matrices with positive determinant.
The group identity element of $\Aff_+(d,\bbR)$ is $e=(0,I)$ and the group operation is $(a_1,A_1).(a_2,A_2)=(a_1+A_1a_2,A_1A_2)$ with inverse operation $(a,A)^{-1}=(-A^{-1}a,A^{-1})$). The positive affine group may be handled as a matrix group by mapping elements $(a,A)$ to $(d+1)\times (d+1)$ matrices 
$M_{(a,A)}\equaldef \mattwotwo{A}{a}{0}{1}$.
Then the matrix group operation is the matrix multiplication and inverse operation is given by the matrix inverse.
Let us consider the following group action (denoted by the dot $.$) of the positive affine group on the Gaussian manifold $\calN$:
$$
(a,A).N(\mu,\Sigma)=N(a+A\mu,A\Sigma A^\top).
$$
This action corresponds to the affine transformation of random variables: $Y=a+AX\sim N(a+A\mu,A\Sigma A^\top)$ where $X\sim N(\mu,\Sigma)$.
The statistical model $\calN$ can thus be interpreted as a group with identity element the standard MVN $N_\std=N(0,I)$:
$\calN(d)=\left\{ (a,A).N_\std\st (a,A)\in\Aff_+(d)\right\}$.
We get a Lie group differential structure on $\calN$~\cite{kwon2009visual} which moreover extends to a statistical Lie group structure in information geometry~\cite{furuhata2021characterization}.

It can be checked that the Fisher-Rao length element is invariant under the action of $\Aff_+(d,\bbR)$
and therefore the Fisher-Rao distance is also invariant: 
$\rho_\FR((a,A).N_0:(a,A).N_1)=\rho_\FR(N_0,N_1)$.
It follows that the Fisher-Rao geodesics in $\calN$ are equivariant~\cite{Eriksen-MVNGeodesic-1986,kobayashi2023geodesics} 
$\gamma_\FR^\calN(B.N_0,B.N_1;t)=B.\gamma_\FR^\calN(N_0,N_1;t)$ for any $B\in\Aff_+(d,\bbR)$ and we can therefore consider without loss of generality that $N_0$ is the standard normal distribution and
$N_1\rightarrow N_1'=N\left(\Sigma_0^{-\frac{1}{2}}(\mu_1,-\mu_0),\Sigma_0^{-\frac{1}{2}}\Sigma_1\Sigma_0^{-\frac{1}{2}} \right)$.

\subsection{Fisher-Rao geodesics with boundary conditions}\label{sec:FRgeo}

The Fisher-Rao geodesic Ordinary Differential Equation (ODE) for MVNs was first studied by Skovgaard~\cite{Skovgaard-1984}:
\begin{equation}\label{eq:geodesicODE}
\left\{ \begin{array}{lcl}
\ddot\mu-\dot\Sigma\Sigma^{-1}\dot\mu &=& 0,\\
\ddot\Sigma+\dot\mu\dot\mu^\top-\dot\Sigma\Sigma^{-1}\dot\Sigma &=& 0.
\end{array}
\right.
\end{equation}

\begin{figure}
\centering

\begin{tabular}{cc}
\fbox{\includegraphics[width=0.45\columnwidth]{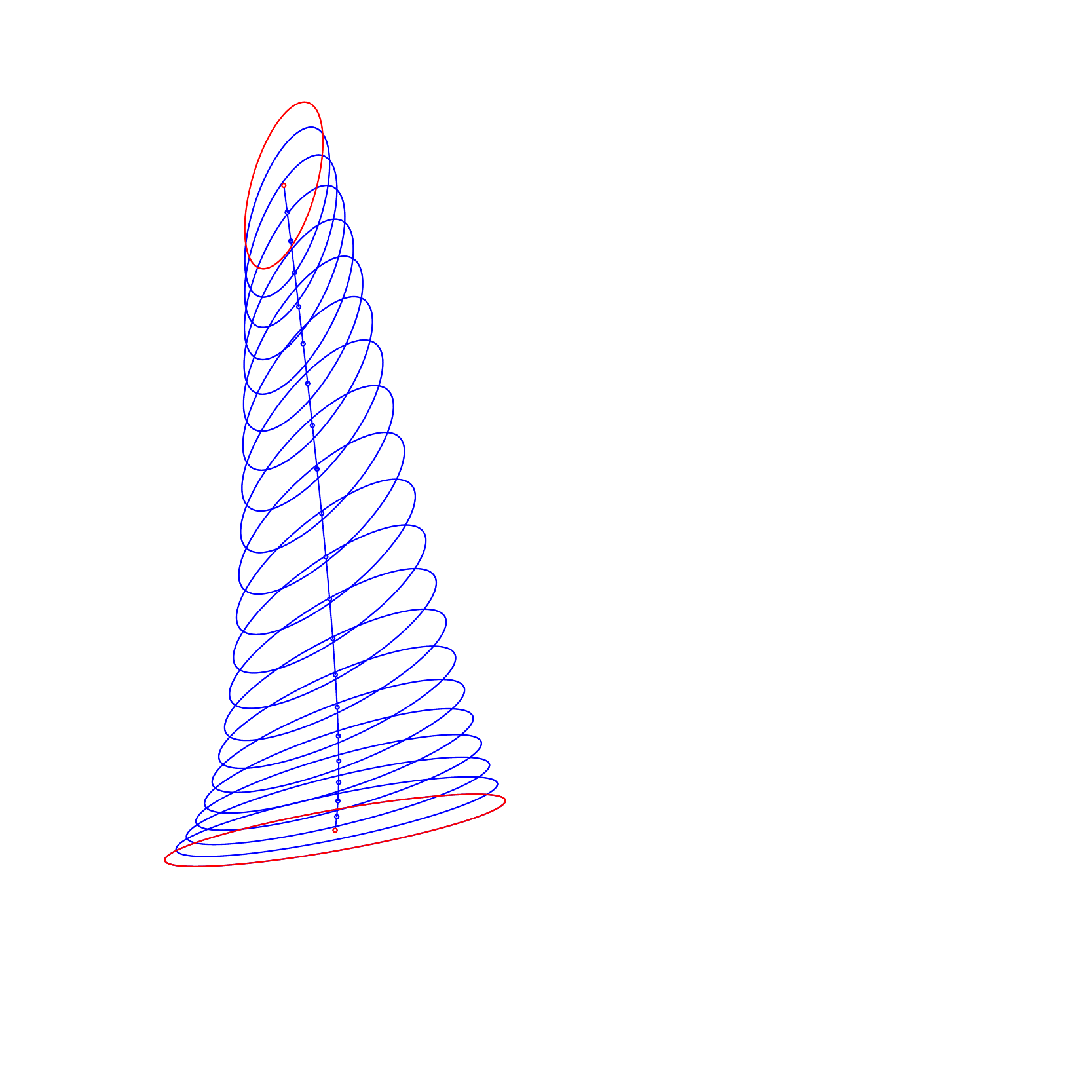}} &
\fbox{\includegraphics[width=0.45\columnwidth]{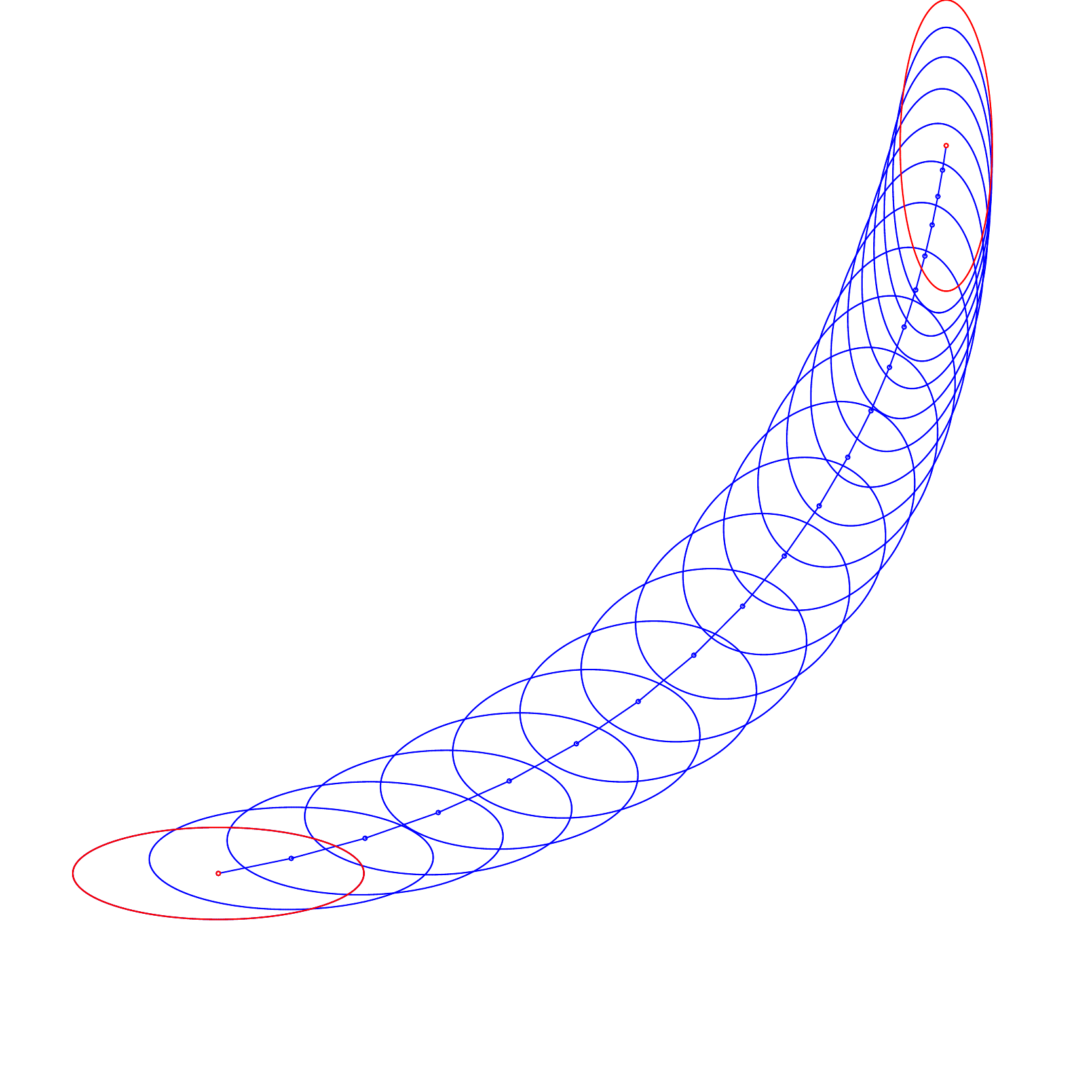}}\cr
(a) & (b)\\
\fbox{\includegraphics[width=0.45\columnwidth]{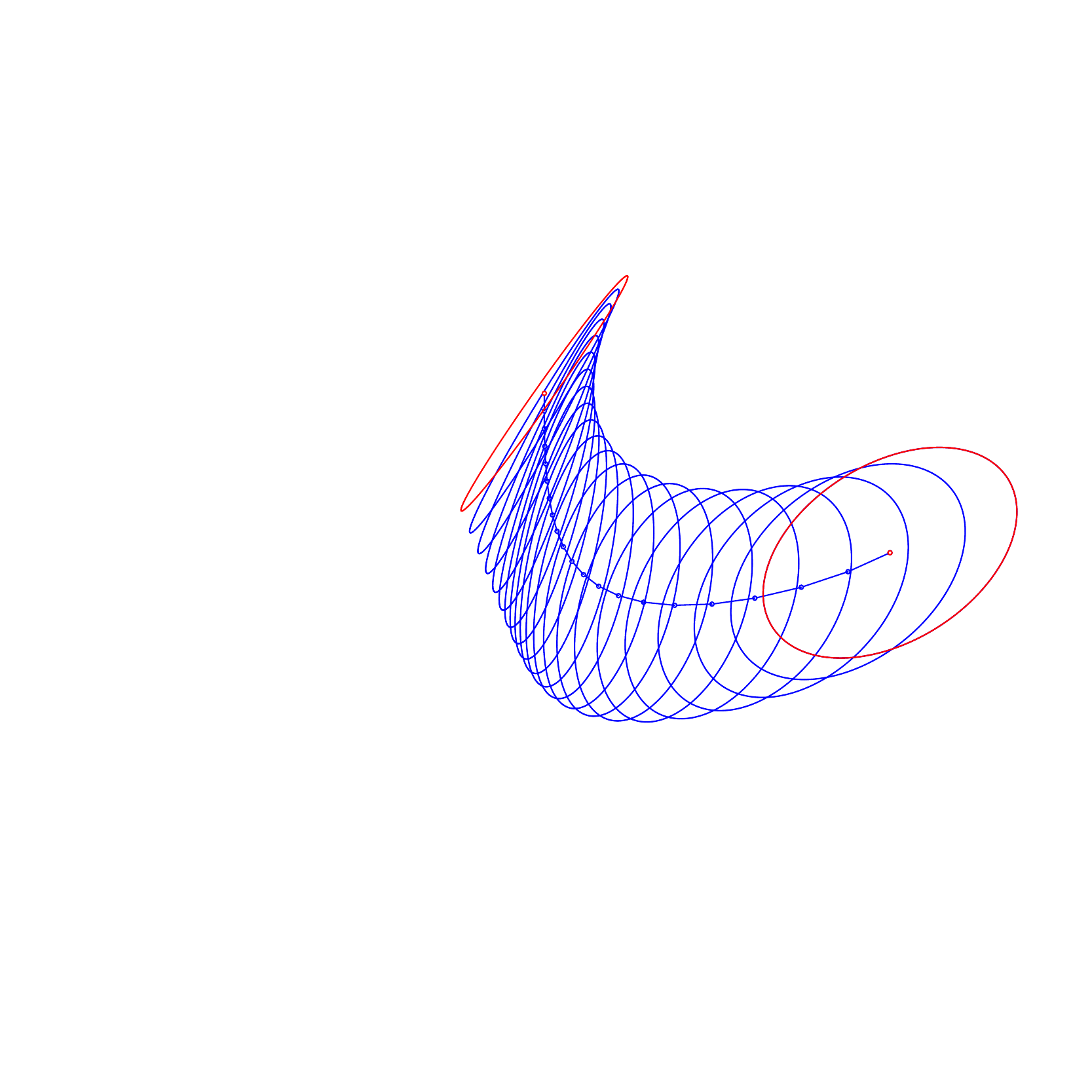}} &
\fbox{\includegraphics[width=0.45\columnwidth]{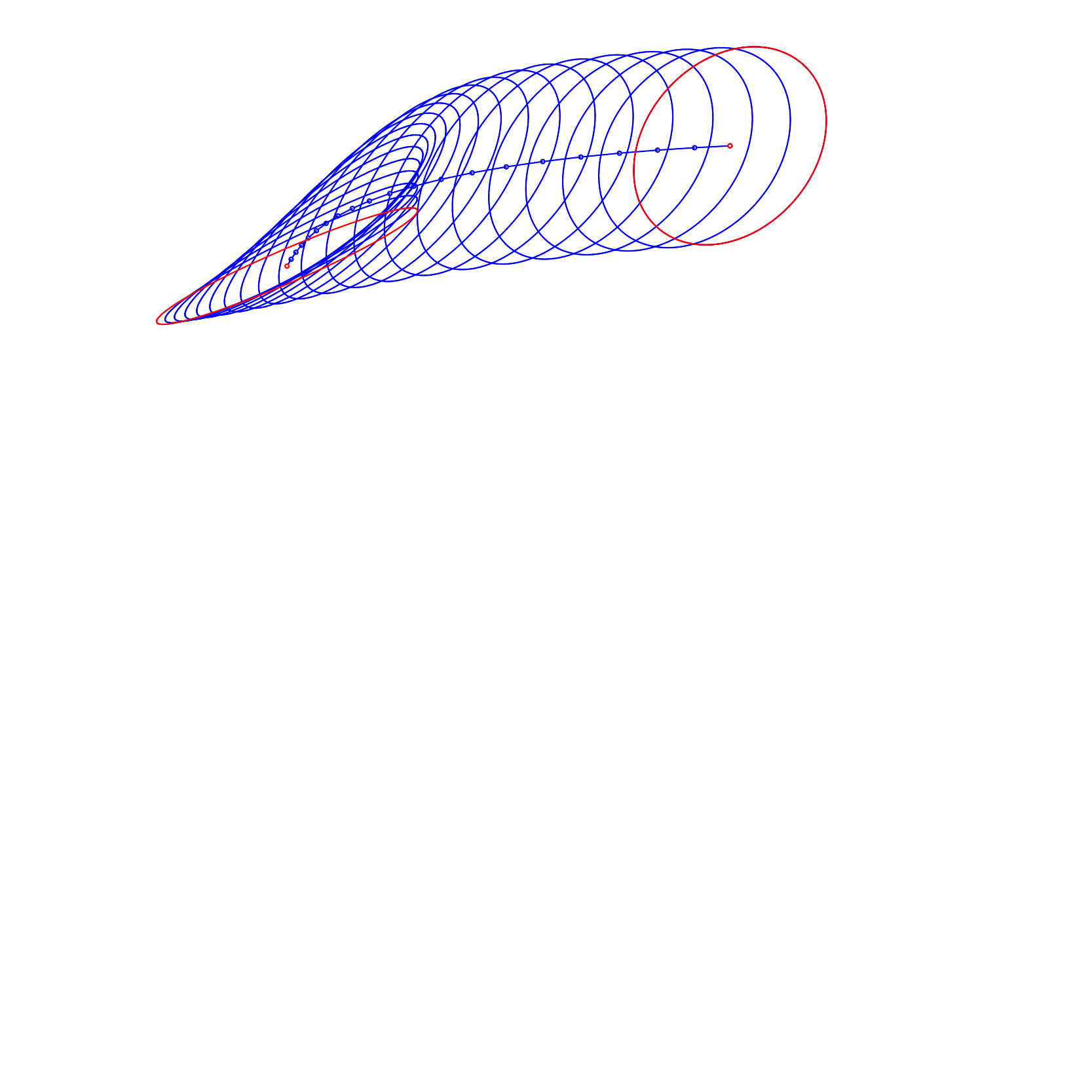}}\\
(c) & (d)
\end{tabular}


\caption{Some Fisher-Rao geodesics $\gamma_\FR^\calN(N_0,N_1)$ with boundary conditions (bivariate normals $N_0$ and $N_1$ indicated by ellipsoids displayed in red) on the bivariate Gaussian manifold.
The sample space $\bbR^2$ is visualized for the range $[-0.3, 1.2]\times [-0.3, 1.2]$.
\label{fig:FRgeodesicBC}}

\end{figure}

Eriksen~\cite{Eriksen-MVNGeodesic-1986} first reported a solution of the geodesic equation with initial conditions:
That is Fisher-Rao geodesics emanating from source $N_0$ with initial prescribed tangent vector $v_0=\dot\gamma(0)$ in the
 tangent plane $T_{N_0}$. The initial tangent vector $v_0=(a_0,B_0)\in T_{N_0}$ consists of a vector part $a_0$ and a symmetric matrix part $B_0$. 
Eriksen's solution required to compute a matrix exponential of a matrix of size $(2d+1)\times (2d+1)$ and the inner use of square matrices of dimension $2d+1$ was not fully geometrically elucidated~\cite{imai2011remarks}.
See Appendix~\ref{app:Eriksen} for details of Eriksen's method which gives a pregeodesic $\underline{\gamma}_\FR^\calN(N_0,v_0;u)$ and not a geodesic ${\gamma}_\FR^\calN(N_0,v_0;t)$ which is parameterized by arclength (constant speed).
Geodesics can be obtained from pregeodesics by smooth reparameterization with an auxiliary function $u(\cdot)$: 
${\gamma}_\FR^\calN(N_0,v_0;t)=\underline{\gamma}_\FR^\calN(N_0,v_0;u(t))$.

Calvo and Oller~\cite{calvo1991explicit} later studied a more general differential equation system than in Eq.~\ref{eq:geodesicODE} and reported a  closed-form solution without using extra dimensions (see Appendix~\ref{app:CO}).
For many years, the Fisher-Rao geodesics with boundary conditions $N_0$ and $N_1$ were not known in closed-form and had to be approximated using geodesic shooting methods~\cite{MVNGeodesicShooting-2014,GeodesicShooting-2016}:
Those geodesic shooting methods were time consuming and numerically unstable, thus limiting their use in applications~\cite{MVNGeodesicShooting-2014}.
A recent breakthrough by Kobayashi~\cite{kobayashi2023geodesics} full explains and extends geometrically the rationale of Eriksen and obtains a method to compute in closed-form the Fisher-Rao geodesic with boundary conditions.
Namely, Kobayashi~\cite{kobayashi2023geodesics} proved that the Fisher-Rao geodesics can be obtained by a Riemannian submersion of horizontal geodesics of the non-compact Riemannian symmetric space of dimension $2d+1$. 
We report concisely below the recipe which we extracted from  Kobayashi's principled geometric method to derive $N_t=\gamma_\FR^\calN(N_0,N_1;t)$ as follows:
\\
\ \\
\vskip 0.5cm
\noindent\fbox{
\vbox{
\noindent\underline{Algorithm~1. Fisher-Rao geodesic $N_t=N(\mu(t),\Sigma(t))=\gamma_\FR^\calN(N_0,N_1;t)$:}
\begin{itemize}

\item For $i\in\{0,1\}$, let $G_i=M_i\, D_i\, M_i^\top$, where 
\begin{eqnarray}
D_i&=&\matthreethree{\Sigma^{-1}_i}{0}{0}{0}{1}{0}{0}{0}{\Sigma_i},\\
M_i&=&\matthreethree{I_d}{0}{0}{\mu_i^\top}{1}{0}{0}{-\mu_i}{I_d},
\end{eqnarray} 
where $I_d$ denotes the identity matrix of shape $d\times d$.
That is, matrices $G_0$ and $G_1\in\Sym_+(2d+1,\bbR)$ can be expressed by {\em block Cholesky factorizations}.

\item Consider the Riemannian geodesic in $\Sym_+(2d+1,\bbR)$ with respect to the trace metric:
$$
G(t)=G_0^{\frac{1}{2}} \,\left(G_0^{-\frac{1}{2}}G_1G_0^{-\frac{1}{2}} \right)^t\, G_0^{\frac{1}{2}}.
$$

In order to compute the matrix power $G^p$ for $p\in\bbR$, we first calculate the Singular Value Decomposition 
 (SVD) of $G$: $G=O\, L\, O^\top$  (where $O$ is an orthogonal matrix and $L=\diag(\lambda_1,\ldots,\lambda_{2d+1})$ a diagonal matrix) and then get the matrix power as
$G^p=O\, L^p\, O^\top$ with $L^p=\diag(\lambda_1^p,\ldots, \lambda_{2d+1}^p)$.

\item  Retrieve $N(t)=\gamma_\FR^\calN(N_0,N_1;t)=N(\mu(t),\Sigma(t))$ from $G(t)$:

\begin{eqnarray}
\Sigma(t) &=& [G(t)]_{1:d,1:d}^{-1},\\
\mu(t)&=& \Sigma(t)\, [G(t)]_{1:d,d+1},
\end{eqnarray}
where
$[G]_{1:d,1:d}$ denotes the block matrix with rows and columns ranging from $1$ to $d$ extracted from $(2d+1)\times(2d+1)$ matrix $G$, and $[G]_{1:d,d+1}$ is similarly the column vector of $\bbR^d$ extracted from $G$.
\end{itemize}
}
}
\vskip 0.5cm
Appendix~\ref{sec:java} provides the Java\texttrademark{} source code.

Note that this technique also proves that the MVN geodesics are unique although $\calN$ is not NPC.
It is proven in~\cite{furuhata2021characterization} that the Gaussian manifold admits a solvable Lie group and hence is diffeomorphic to some Euclidean space.
Figure~\ref{fig:FRgeodesicBC} displays several bivariate normal Fisher-Rao geodesics with boundary conditions obtained by implementing this method~\cite{kobayashi2023geodesics}.
We display $N(\mu,\Sigma)$ by an ellipse $\calE=\{\mu+Lx \st \|x\|_2=1\}$ where $\Sigma=LL^\top$ (Cholesky decomposition).

\subsection{Fisher-Rao distances}\label{sec:FRapprox}

\subsubsection{Approximating Fisher-Rao lengths of curves}

The previous section reported the closed-form solutions for the Fisher-Rao geodesics $\gamma_\FR^\calN(N_0,N_1;t)$.
We shall now explain a method to approximate their lengths and hence the Fisher-Rao distances:
$$
\rho_\FR(N_0,N_1)=\Length(\gamma_\FR^\calN(N_0,N_1;t)).
$$

Consider discretizing regularly $t\in[0,1]$ using $T+1$ steps:
$\frac{0}{T}=0, \frac{1}{T}, \ldots, \frac{T-1}{T}, \frac{T}{T}=1$.
Since geodesics are totally 1D submanifolds, we have 
$$
\rho_\FR(N_0,N_1)=\sum_{i=0}^{T-1} \rho_\FR(\gamma_\FR\left(N_{\frac{i}{T}},N_{\frac{i+1}{T}}\right)).
$$

By choosing $T$ large enough, we have $N=N_{\frac{i}{T}}$ close to $N'=N_{\frac{i+1}{T}}$, and we can approximate the geodesic distance as follows:
$$
\rho_\FR(N,N')\approx\ds_\Fisher(N)\approx \sqrt{\frac{2}{f''(1)}I_f(N,N')},
$$
where $I_f(p,q)$ is {\em any} $f$-divergence~\cite{fdiv-AliSilvey-1966,Csiszar-1967} between pdfs $p(x)$ and $q(x)$  induced by a strictly convex generator $f(u)$ satisfying $f(1)=0$:
$$
I_f(p,q)=\int p(x)f\left(\frac{q(x)}{p(x)}\right)\, \dx.
$$
Indeed, we have for two close distributions $p_\theta$ and $p_{\theta+\dtheta}$~\cite{IG-2016}:
$I_f(p_\theta,p_{\theta+\dtheta})\approx \frac{f''(1)}{2} \ds^2_\Fisher$.

We choose the Jeffreys $f$-divergence which is the arithmetic symmetrization of the Kullback-Leibler divergence obtained for the generator $f_J(u)=(u-1)\log u$ with $f_J''(1)=2$.
It follows that $D_J(N_1,N_2)=I_{f_J}(N_1,N_2)
=\tr\left(\frac{\Sigma_2^{-1}\Sigma_1+\Sigma_1^{-1}\Sigma_2}{2}-I\right) 
 +(\mu_2-\mu_1)^\top\frac{\Sigma_1^{-1}+\Sigma_2^{-1}}{2}(\mu_2-\mu_1)$.

Thus we get the following overall approximation of the Fisher-Rao distance:
\begin{equation}
 \tilde\rho_T(N_0,N_1)
=\sum_{i=0}^{T-1} \sqrt{D_J\left(N_{\frac{i}{T}},N_{\frac{i+1}{T}}\right)}
\approx\rho_\FR(N_0,N_1).
\end{equation}

In~\cite{gao2021information}, the authors choose $\ds_\Fisher(p)=\sqrt{2D_\KL(p_\theta,p_{\theta+\dtheta})}$ where $D_\KL=I_{f_\KL}$ is the Kullback-Leibler divergence, a $f$-divergence obtained for $f_\KL(u)=-\log u$.

\begin{figure}
\centering
\begin{tabular}{cc}
\fbox{\includegraphics[width=0.5\columnwidth]{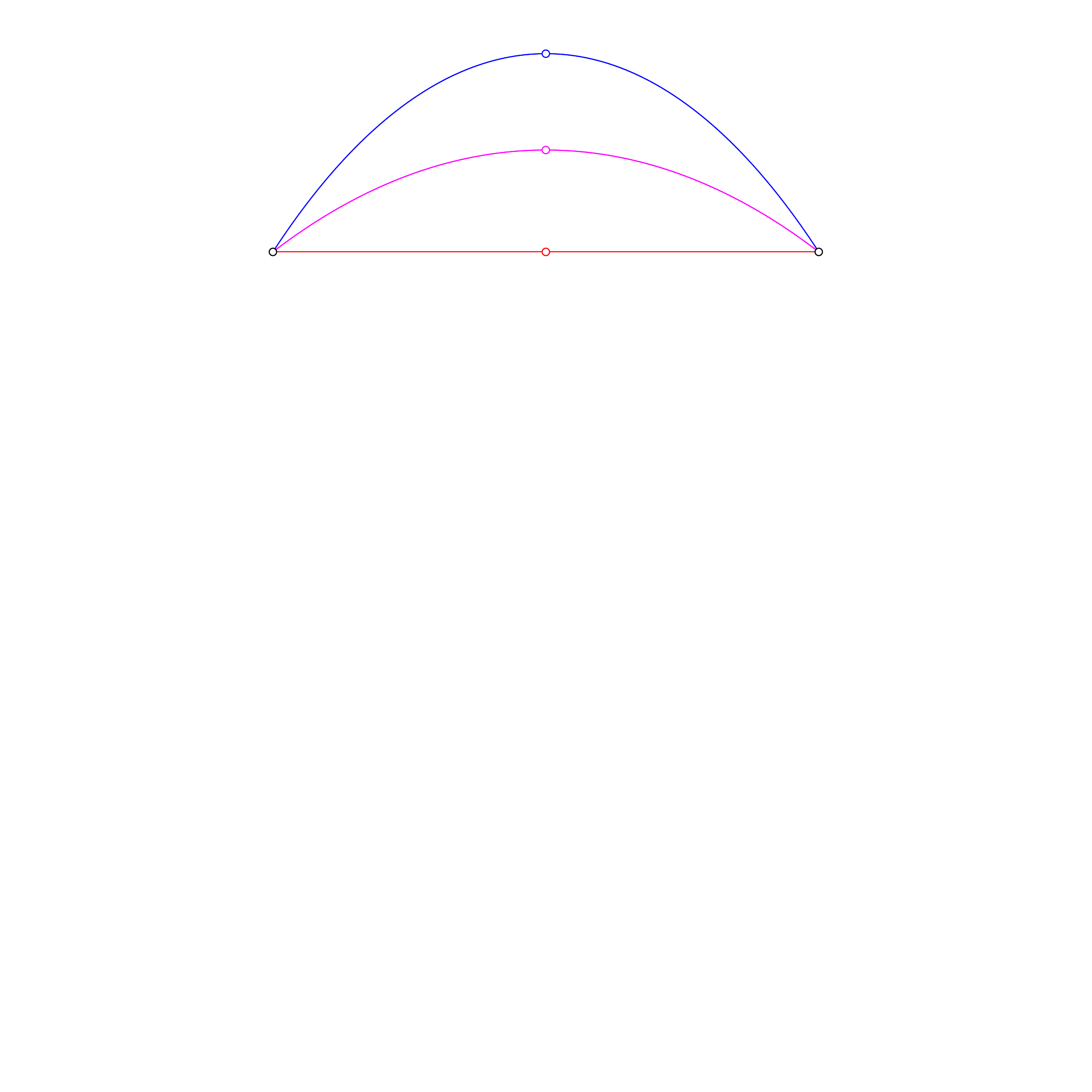}} &
\fbox{\includegraphics[width=0.5\columnwidth]{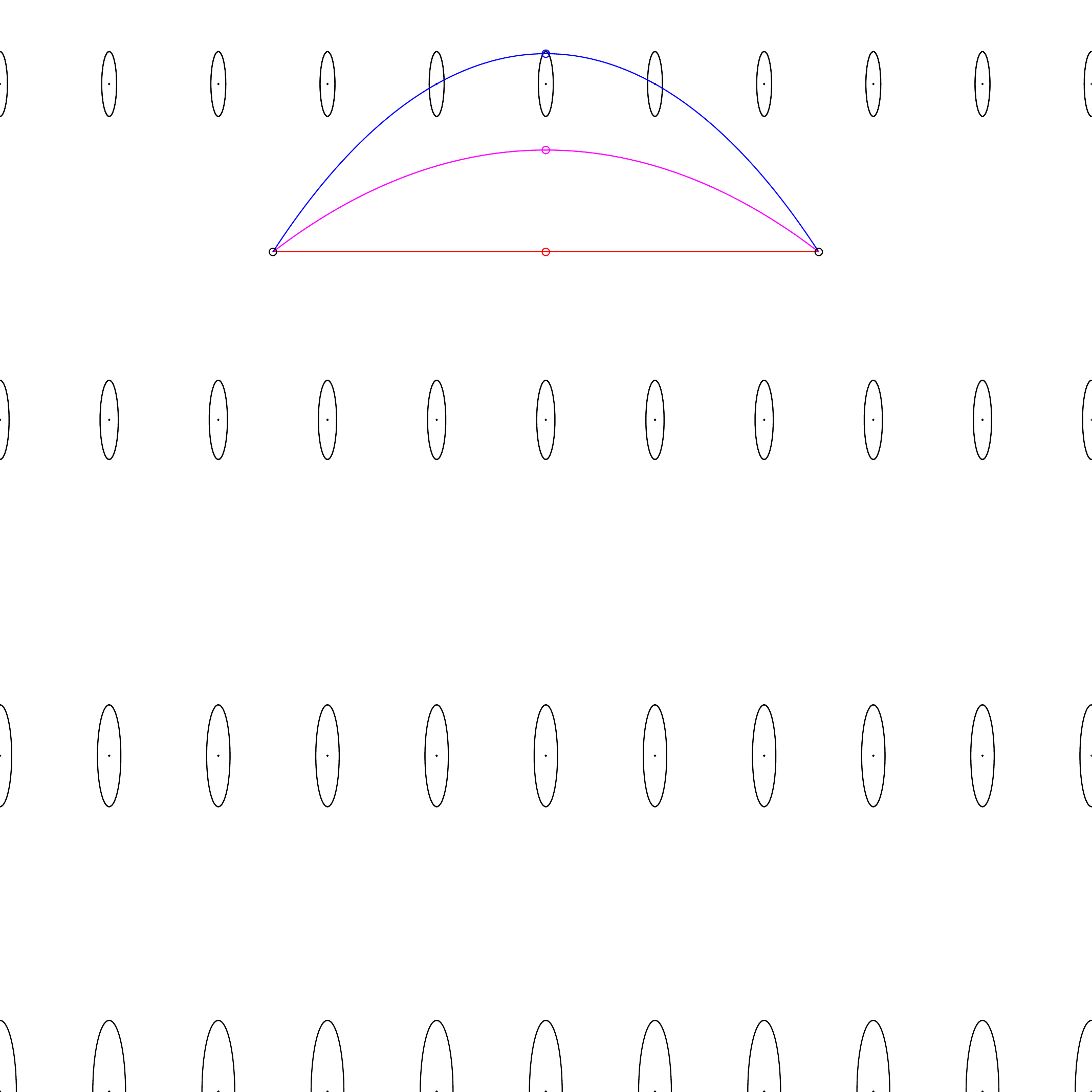}}  
\end{tabular}
\caption{Visualizing on the upper plane $(\mu,\sigma)$ the Fisher-Rao geodesic (purple), and dual exponential geodesic (red) and mixture geodesic (blue) between univariate normal distributions $N_0=N(0,1)$ and $N_1=N(1,1)$.
Left: geodesics with their corresponding midpoints. Right: same as Left with Tissot indicatrices shown
 \label{fig:geodesics}}
\end{figure}

\begin{property}[Fisher-Rao upper bound]\label{prop:UBJ}
The Fisher-Rao distance between   normal distributions is upper bounded by the square root of the Jeffreys divergence: 
$\rho_\FR(N_0,N_1) \leq\sqrt{D_J(N_0,N_1)}$.
\end{property}
The proof can be found in many places, e.g.~\cite{grosse2013annealing,IG-2016,rong2017intrinsic}.
Yet we report another proof in Appendix~\ref{sec:proof}.

Notice that we have $\rho_\FR(N_0,N_1)\leq \tilde\rho_T(N_0,N_1)$ for all $T>1$.
Define the energy of a curve $c(t)$ with $t\in[a,b]$ by 
$E(c)=\int_a^b \ds^2_\Fisher(t)\dt$.
We have 
$$
E(\gamma_e^\calN(N_0,N_1;t))=E(\gamma_m^\calN(N_0,N_1;t))=D_J(N_0,N_1),
$$
where $\gamma_e^\calN(N_0,N_1;t)=N(\mu_t^e,\Sigma_t^e)$ and $\gamma_m^\calN(N_0,N_1;t)=N(\mu_t^m,\Sigma_t^m)$ are the exponential and mixture geodesics in information geometry~\cite{IG-MVN-1999} given by $\mu_t^m =\bar\mu_t$ and $\Sigma_t^m = \bar\Sigma_t+t\mu_1\mu_1^\top+(1-t)\mu_2\mu_2^\top-\bar\mu_t\bar\mu_t^\top$
where   $\bar\mu_t=t\mu_1+(1-t)\mu_2$ and $\bar\Sigma_t=t\Sigma_1+(1-t)\Sigma_2$,
and $\mu_t^e = \bar\Sigma_t^H (t\Sigma_1^{-1}\mu_1+(1-t)\Sigma_2^{-1}\mu_2)$
and $\Sigma_t^e =\bar\Sigma^H_t$
where $\bar\Sigma^H_t=(t\Sigma_1^{-1}+(1-t)\Sigma_2^{-1})^{-1}$ is the matrix harmonic mean.
See Figure~\ref{fig:midETFR}. 
The mixture, Fisher-Rao, and exponential geodesics are $\alpha$-connection geodesics~\cite{furuhata2021characterization} for $\alpha=-1$, $\alpha=0$ and $\alpha=1$, respectively.
Notice that these e/m geodesics are computationally less intensive to evaluate than the Fisher-Rao geodesics.
Figure~\ref{fig:geodesics} displays an example of the Fisher-Rao geodesics and the dual e/m geodesics between univariate normal distributions.


\begin{figure}
\centering
\begin{tabular}{ccc}
\fbox{\includegraphics[width=0.3\columnwidth]{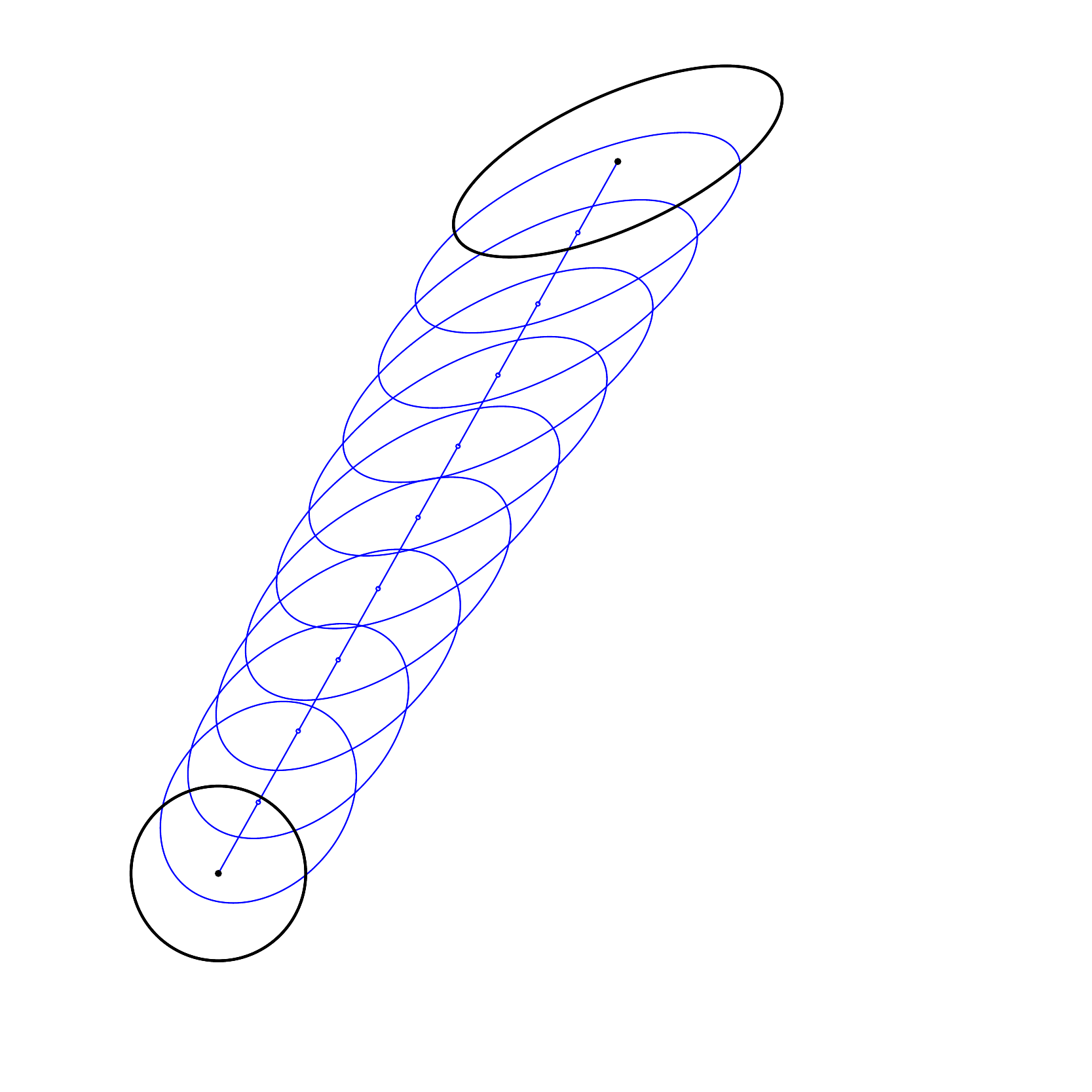}} &
\fbox{\includegraphics[width=0.3\columnwidth]{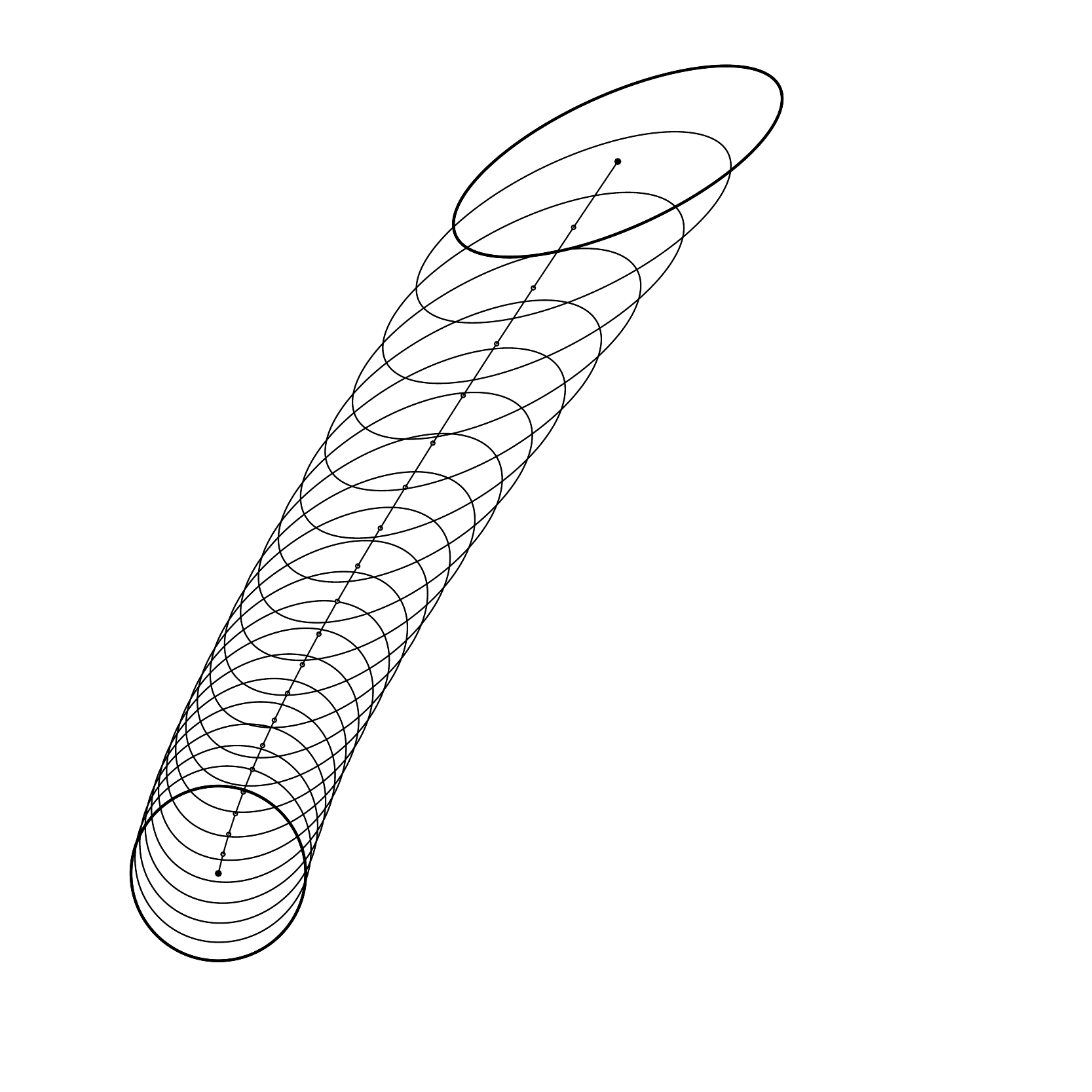}} &
\fbox{\includegraphics[width=0.3\columnwidth]{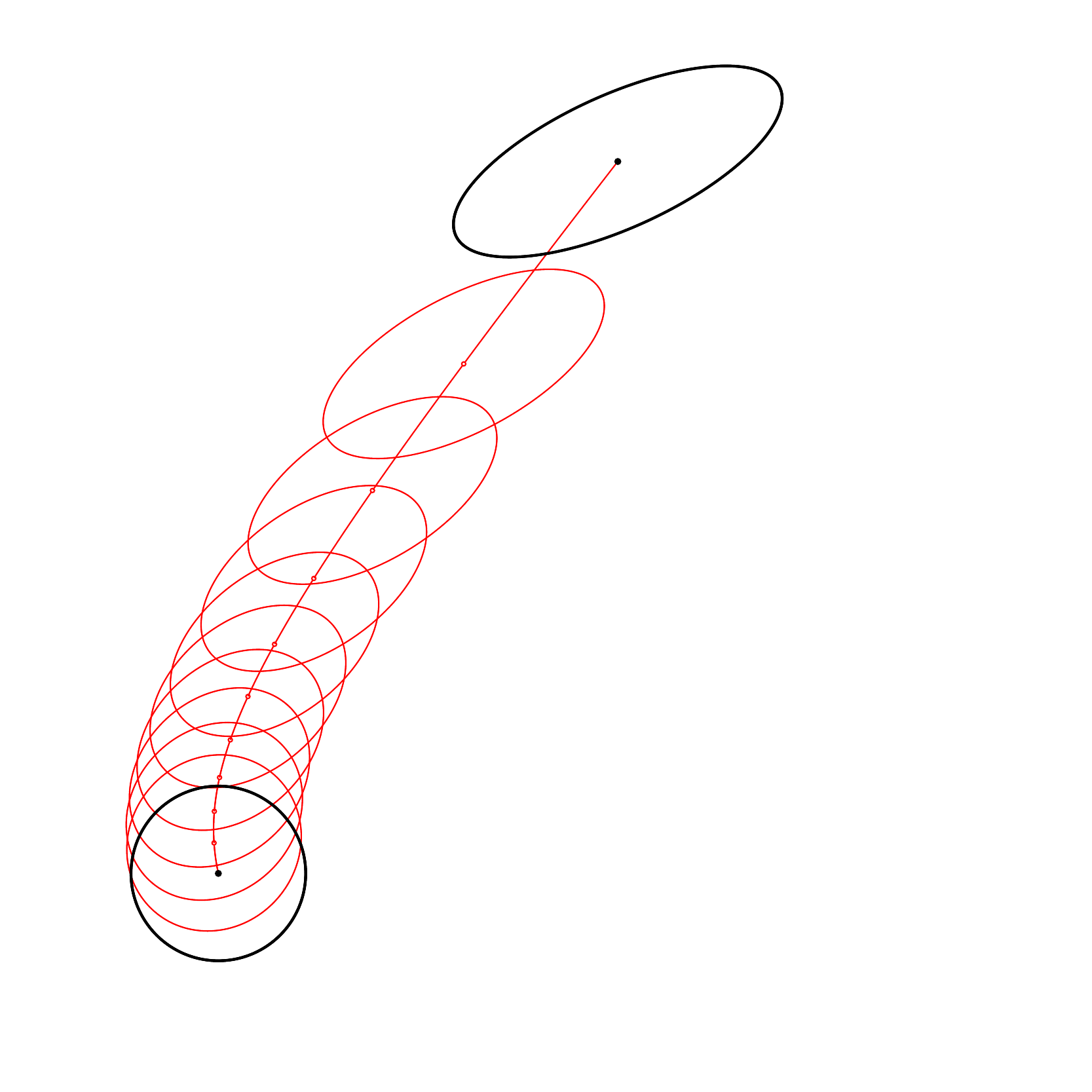}}\\
mixture geodesic $\gamma_m^\calN$ & Fisher-Rao geodesic $\gamma_\calN^\FR$ & exponential geodesic $\gamma^\calN_e$
\end{tabular}
\caption{Visualizing geodesics with respect to the mixture, Levi-Civita (Fisher-Rao), and exponential connections.
 \label{fig:midETFR}}
\end{figure}


Since the upper bound of Property~\ref{prop:UBJ} is tight infinitesimally, we get in the limit convergence to the Fisher-Rao distance:
$$
\lim_{T\rightarrow\infty} \tilde\rho_T(N_0,N_1) = \rho_\FR(N_0,N_1).
$$

\begin{example}
Let us consider the example of Han and Park~\cite{MVNGeodesicShooting-2014} (displayed in Figure~\ref{fig:FRgeodesicBC}(b)):

\noindent $N_0=N\left(\vectortwo{0}{0},\mattwotwo{1}{0}{0}{0.1}\right)$ and  
$N_1=N\left(\vectortwo{1}{1},\mattwotwo{0.1}{0}{0}{1}\right)$.
The time consuming geodesic shooting algorithm of~\cite{MVNGeodesicShooting-2014} evaluates the Fisher-Rao distance to 
$\rho_\calN(N_0,N_1) \approx 3.1329$.
We get the following approximations:
$\tilde\rho_T(N_0,N_1)=3.1996$ for $T=100$.
See Figure~\ref{fig:convergence} for the convergence curve of $\tilde\rho_T(N_0,N_1)$ as a function of $T$.
\end{example}

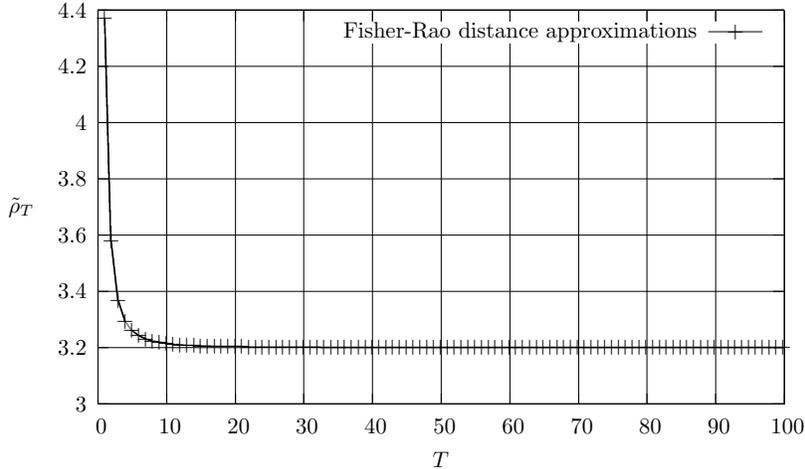
\begin {figure}
\begin{center}
\scalebox{0.85}{
\setlength{\unitlength}{0.240900pt}
\ifx\plotpoint\undefined\newsavebox{\plotpoint}\fi
\sbox{\plotpoint}{\rule[-0.200pt]{0.400pt}{0.400pt}}%
\begin{picture}(1500,900)(0,0)
\sbox{\plotpoint}{\rule[-0.200pt]{0.400pt}{0.400pt}}%
\put(171.0,131.0){\rule[-0.200pt]{305.461pt}{0.400pt}}
\put(171.0,131.0){\rule[-0.200pt]{4.818pt}{0.400pt}}
\put(151,131){\makebox(0,0)[r]{ 3}}
\put(1419.0,131.0){\rule[-0.200pt]{4.818pt}{0.400pt}}
\put(171.0,235.0){\rule[-0.200pt]{305.461pt}{0.400pt}}
\put(171.0,235.0){\rule[-0.200pt]{4.818pt}{0.400pt}}
\put(151,235){\makebox(0,0)[r]{ 3.2}}
\put(1419.0,235.0){\rule[-0.200pt]{4.818pt}{0.400pt}}
\put(171.0,339.0){\rule[-0.200pt]{305.461pt}{0.400pt}}
\put(171.0,339.0){\rule[-0.200pt]{4.818pt}{0.400pt}}
\put(151,339){\makebox(0,0)[r]{ 3.4}}
\put(1419.0,339.0){\rule[-0.200pt]{4.818pt}{0.400pt}}
\put(171.0,443.0){\rule[-0.200pt]{305.461pt}{0.400pt}}
\put(171.0,443.0){\rule[-0.200pt]{4.818pt}{0.400pt}}
\put(151,443){\makebox(0,0)[r]{ 3.6}}
\put(1419.0,443.0){\rule[-0.200pt]{4.818pt}{0.400pt}}
\put(171.0,547.0){\rule[-0.200pt]{305.461pt}{0.400pt}}
\put(171.0,547.0){\rule[-0.200pt]{4.818pt}{0.400pt}}
\put(151,547){\makebox(0,0)[r]{ 3.8}}
\put(1419.0,547.0){\rule[-0.200pt]{4.818pt}{0.400pt}}
\put(171.0,651.0){\rule[-0.200pt]{305.461pt}{0.400pt}}
\put(171.0,651.0){\rule[-0.200pt]{4.818pt}{0.400pt}}
\put(151,651){\makebox(0,0)[r]{ 4}}
\put(1419.0,651.0){\rule[-0.200pt]{4.818pt}{0.400pt}}
\put(171.0,755.0){\rule[-0.200pt]{305.461pt}{0.400pt}}
\put(171.0,755.0){\rule[-0.200pt]{4.818pt}{0.400pt}}
\put(151,755){\makebox(0,0)[r]{ 4.2}}
\put(1419.0,755.0){\rule[-0.200pt]{4.818pt}{0.400pt}}
\put(171.0,859.0){\rule[-0.200pt]{305.461pt}{0.400pt}}
\put(171.0,859.0){\rule[-0.200pt]{4.818pt}{0.400pt}}
\put(151,859){\makebox(0,0)[r]{ 4.4}}
\put(1419.0,859.0){\rule[-0.200pt]{4.818pt}{0.400pt}}
\put(171.0,131.0){\rule[-0.200pt]{0.400pt}{175.375pt}}
\put(171.0,131.0){\rule[-0.200pt]{0.400pt}{4.818pt}}
\put(171,90){\makebox(0,0){ 0}}
\put(171.0,839.0){\rule[-0.200pt]{0.400pt}{4.818pt}}
\put(298.0,131.0){\rule[-0.200pt]{0.400pt}{175.375pt}}
\put(298.0,131.0){\rule[-0.200pt]{0.400pt}{4.818pt}}
\put(298,90){\makebox(0,0){ 10}}
\put(298.0,839.0){\rule[-0.200pt]{0.400pt}{4.818pt}}
\put(425.0,131.0){\rule[-0.200pt]{0.400pt}{175.375pt}}
\put(425.0,131.0){\rule[-0.200pt]{0.400pt}{4.818pt}}
\put(425,90){\makebox(0,0){ 20}}
\put(425.0,839.0){\rule[-0.200pt]{0.400pt}{4.818pt}}
\put(551.0,131.0){\rule[-0.200pt]{0.400pt}{175.375pt}}
\put(551.0,131.0){\rule[-0.200pt]{0.400pt}{4.818pt}}
\put(551,90){\makebox(0,0){ 30}}
\put(551.0,839.0){\rule[-0.200pt]{0.400pt}{4.818pt}}
\put(678.0,131.0){\rule[-0.200pt]{0.400pt}{160.680pt}}
\put(678.0,839.0){\rule[-0.200pt]{0.400pt}{4.818pt}}
\put(678.0,131.0){\rule[-0.200pt]{0.400pt}{4.818pt}}
\put(678,90){\makebox(0,0){ 40}}
\put(678.0,839.0){\rule[-0.200pt]{0.400pt}{4.818pt}}
\put(805.0,131.0){\rule[-0.200pt]{0.400pt}{160.680pt}}
\put(805.0,839.0){\rule[-0.200pt]{0.400pt}{4.818pt}}
\put(805.0,131.0){\rule[-0.200pt]{0.400pt}{4.818pt}}
\put(805,90){\makebox(0,0){ 50}}
\put(805.0,839.0){\rule[-0.200pt]{0.400pt}{4.818pt}}
\put(932.0,131.0){\rule[-0.200pt]{0.400pt}{160.680pt}}
\put(932.0,839.0){\rule[-0.200pt]{0.400pt}{4.818pt}}
\put(932.0,131.0){\rule[-0.200pt]{0.400pt}{4.818pt}}
\put(932,90){\makebox(0,0){ 60}}
\put(932.0,839.0){\rule[-0.200pt]{0.400pt}{4.818pt}}
\put(1059.0,131.0){\rule[-0.200pt]{0.400pt}{160.680pt}}
\put(1059.0,839.0){\rule[-0.200pt]{0.400pt}{4.818pt}}
\put(1059.0,131.0){\rule[-0.200pt]{0.400pt}{4.818pt}}
\put(1059,90){\makebox(0,0){ 70}}
\put(1059.0,839.0){\rule[-0.200pt]{0.400pt}{4.818pt}}
\put(1185.0,131.0){\rule[-0.200pt]{0.400pt}{160.680pt}}
\put(1185.0,839.0){\rule[-0.200pt]{0.400pt}{4.818pt}}
\put(1185.0,131.0){\rule[-0.200pt]{0.400pt}{4.818pt}}
\put(1185,90){\makebox(0,0){ 80}}
\put(1185.0,839.0){\rule[-0.200pt]{0.400pt}{4.818pt}}
\put(1312.0,131.0){\rule[-0.200pt]{0.400pt}{160.680pt}}
\put(1312.0,839.0){\rule[-0.200pt]{0.400pt}{4.818pt}}
\put(1312.0,131.0){\rule[-0.200pt]{0.400pt}{4.818pt}}
\put(1312,90){\makebox(0,0){ 90}}
\put(1312.0,839.0){\rule[-0.200pt]{0.400pt}{4.818pt}}
\put(1439.0,131.0){\rule[-0.200pt]{0.400pt}{175.375pt}}
\put(1439.0,131.0){\rule[-0.200pt]{0.400pt}{4.818pt}}
\put(1439,90){\makebox(0,0){ 100}}
\put(1439.0,839.0){\rule[-0.200pt]{0.400pt}{4.818pt}}
\put(171.0,131.0){\rule[-0.200pt]{0.400pt}{175.375pt}}
\put(171.0,131.0){\rule[-0.200pt]{305.461pt}{0.400pt}}
\put(1439.0,131.0){\rule[-0.200pt]{0.400pt}{175.375pt}}
\put(171.0,859.0){\rule[-0.200pt]{305.461pt}{0.400pt}}
\put(30,495){\makebox(0,0){$\tilde\rho_T$}}
\put(805,29){\makebox(0,0){$T$}}
\put(1279,819){\makebox(0,0)[r]{Fisher-Rao distance approximations}}
\put(1299.0,819.0){\rule[-0.200pt]{24.090pt}{0.400pt}}
\put(184,844){\usebox{\plotpoint}}
\multiput(184.58,786.58)(0.492,-17.729){21}{\rule{0.119pt}{13.833pt}}
\multiput(183.17,815.29)(12.000,-383.288){2}{\rule{0.400pt}{6.917pt}}
\multiput(196.58,417.66)(0.493,-4.303){23}{\rule{0.119pt}{3.454pt}}
\multiput(195.17,424.83)(13.000,-101.831){2}{\rule{0.400pt}{1.727pt}}
\multiput(209.58,317.60)(0.493,-1.527){23}{\rule{0.119pt}{1.300pt}}
\multiput(208.17,320.30)(13.000,-36.302){2}{\rule{0.400pt}{0.650pt}}
\multiput(222.58,281.23)(0.492,-0.712){21}{\rule{0.119pt}{0.667pt}}
\multiput(221.17,282.62)(12.000,-15.616){2}{\rule{0.400pt}{0.333pt}}
\multiput(234.00,265.92)(0.652,-0.491){17}{\rule{0.620pt}{0.118pt}}
\multiput(234.00,266.17)(11.713,-10.000){2}{\rule{0.310pt}{0.400pt}}
\multiput(247.00,255.93)(1.123,-0.482){9}{\rule{0.967pt}{0.116pt}}
\multiput(247.00,256.17)(10.994,-6.000){2}{\rule{0.483pt}{0.400pt}}
\multiput(260.00,249.94)(1.651,-0.468){5}{\rule{1.300pt}{0.113pt}}
\multiput(260.00,250.17)(9.302,-4.000){2}{\rule{0.650pt}{0.400pt}}
\put(272,245.17){\rule{2.700pt}{0.400pt}}
\multiput(272.00,246.17)(7.396,-2.000){2}{\rule{1.350pt}{0.400pt}}
\put(285,243.17){\rule{2.700pt}{0.400pt}}
\multiput(285.00,244.17)(7.396,-2.000){2}{\rule{1.350pt}{0.400pt}}
\put(298,241.17){\rule{2.500pt}{0.400pt}}
\multiput(298.00,242.17)(6.811,-2.000){2}{\rule{1.250pt}{0.400pt}}
\put(310,239.67){\rule{3.132pt}{0.400pt}}
\multiput(310.00,240.17)(6.500,-1.000){2}{\rule{1.566pt}{0.400pt}}
\put(323,238.67){\rule{3.132pt}{0.400pt}}
\multiput(323.00,239.17)(6.500,-1.000){2}{\rule{1.566pt}{0.400pt}}
\put(349,237.67){\rule{2.891pt}{0.400pt}}
\multiput(349.00,238.17)(6.000,-1.000){2}{\rule{1.445pt}{0.400pt}}
\put(336.0,239.0){\rule[-0.200pt]{3.132pt}{0.400pt}}
\put(374,236.67){\rule{3.132pt}{0.400pt}}
\multiput(374.00,237.17)(6.500,-1.000){2}{\rule{1.566pt}{0.400pt}}
\put(361.0,238.0){\rule[-0.200pt]{3.132pt}{0.400pt}}
\put(437,235.67){\rule{3.132pt}{0.400pt}}
\multiput(437.00,236.17)(6.500,-1.000){2}{\rule{1.566pt}{0.400pt}}
\put(387.0,237.0){\rule[-0.200pt]{12.045pt}{0.400pt}}
\put(577,234.67){\rule{2.891pt}{0.400pt}}
\multiput(577.00,235.17)(6.000,-1.000){2}{\rule{1.445pt}{0.400pt}}
\put(450.0,236.0){\rule[-0.200pt]{30.594pt}{0.400pt}}
\put(184,844){\makebox(0,0){$+$}}
\put(196,432){\makebox(0,0){$+$}}
\put(209,323){\makebox(0,0){$+$}}
\put(222,284){\makebox(0,0){$+$}}
\put(234,267){\makebox(0,0){$+$}}
\put(247,257){\makebox(0,0){$+$}}
\put(260,251){\makebox(0,0){$+$}}
\put(272,247){\makebox(0,0){$+$}}
\put(285,245){\makebox(0,0){$+$}}
\put(298,243){\makebox(0,0){$+$}}
\put(310,241){\makebox(0,0){$+$}}
\put(323,240){\makebox(0,0){$+$}}
\put(336,239){\makebox(0,0){$+$}}
\put(349,239){\makebox(0,0){$+$}}
\put(361,238){\makebox(0,0){$+$}}
\put(374,238){\makebox(0,0){$+$}}
\put(387,237){\makebox(0,0){$+$}}
\put(399,237){\makebox(0,0){$+$}}
\put(412,237){\makebox(0,0){$+$}}
\put(425,237){\makebox(0,0){$+$}}
\put(437,237){\makebox(0,0){$+$}}
\put(450,236){\makebox(0,0){$+$}}
\put(463,236){\makebox(0,0){$+$}}
\put(475,236){\makebox(0,0){$+$}}
\put(488,236){\makebox(0,0){$+$}}
\put(501,236){\makebox(0,0){$+$}}
\put(513,236){\makebox(0,0){$+$}}
\put(526,236){\makebox(0,0){$+$}}
\put(539,236){\makebox(0,0){$+$}}
\put(551,236){\makebox(0,0){$+$}}
\put(564,236){\makebox(0,0){$+$}}
\put(577,236){\makebox(0,0){$+$}}
\put(589,235){\makebox(0,0){$+$}}
\put(602,235){\makebox(0,0){$+$}}
\put(615,235){\makebox(0,0){$+$}}
\put(627,235){\makebox(0,0){$+$}}
\put(640,235){\makebox(0,0){$+$}}
\put(653,235){\makebox(0,0){$+$}}
\put(666,235){\makebox(0,0){$+$}}
\put(678,235){\makebox(0,0){$+$}}
\put(691,235){\makebox(0,0){$+$}}
\put(704,235){\makebox(0,0){$+$}}
\put(716,235){\makebox(0,0){$+$}}
\put(729,235){\makebox(0,0){$+$}}
\put(742,235){\makebox(0,0){$+$}}
\put(754,235){\makebox(0,0){$+$}}
\put(767,235){\makebox(0,0){$+$}}
\put(780,235){\makebox(0,0){$+$}}
\put(792,235){\makebox(0,0){$+$}}
\put(805,235){\makebox(0,0){$+$}}
\put(818,235){\makebox(0,0){$+$}}
\put(830,235){\makebox(0,0){$+$}}
\put(843,235){\makebox(0,0){$+$}}
\put(856,235){\makebox(0,0){$+$}}
\put(868,235){\makebox(0,0){$+$}}
\put(881,235){\makebox(0,0){$+$}}
\put(894,235){\makebox(0,0){$+$}}
\put(906,235){\makebox(0,0){$+$}}
\put(919,235){\makebox(0,0){$+$}}
\put(932,235){\makebox(0,0){$+$}}
\put(944,235){\makebox(0,0){$+$}}
\put(957,235){\makebox(0,0){$+$}}
\put(970,235){\makebox(0,0){$+$}}
\put(983,235){\makebox(0,0){$+$}}
\put(995,235){\makebox(0,0){$+$}}
\put(1008,235){\makebox(0,0){$+$}}
\put(1021,235){\makebox(0,0){$+$}}
\put(1033,235){\makebox(0,0){$+$}}
\put(1046,235){\makebox(0,0){$+$}}
\put(1059,235){\makebox(0,0){$+$}}
\put(1071,235){\makebox(0,0){$+$}}
\put(1084,235){\makebox(0,0){$+$}}
\put(1097,235){\makebox(0,0){$+$}}
\put(1109,235){\makebox(0,0){$+$}}
\put(1122,235){\makebox(0,0){$+$}}
\put(1135,235){\makebox(0,0){$+$}}
\put(1147,235){\makebox(0,0){$+$}}
\put(1160,235){\makebox(0,0){$+$}}
\put(1173,235){\makebox(0,0){$+$}}
\put(1185,235){\makebox(0,0){$+$}}
\put(1198,235){\makebox(0,0){$+$}}
\put(1211,235){\makebox(0,0){$+$}}
\put(1223,235){\makebox(0,0){$+$}}
\put(1236,235){\makebox(0,0){$+$}}
\put(1249,235){\makebox(0,0){$+$}}
\put(1261,235){\makebox(0,0){$+$}}
\put(1274,235){\makebox(0,0){$+$}}
\put(1287,235){\makebox(0,0){$+$}}
\put(1300,235){\makebox(0,0){$+$}}
\put(1312,235){\makebox(0,0){$+$}}
\put(1325,235){\makebox(0,0){$+$}}
\put(1338,235){\makebox(0,0){$+$}}
\put(1350,235){\makebox(0,0){$+$}}
\put(1363,235){\makebox(0,0){$+$}}
\put(1376,235){\makebox(0,0){$+$}}
\put(1388,235){\makebox(0,0){$+$}}
\put(1401,235){\makebox(0,0){$+$}}
\put(1414,235){\makebox(0,0){$+$}}
\put(1426,235){\makebox(0,0){$+$}}
\put(1439,235){\makebox(0,0){$+$}}
\put(1349,819){\makebox(0,0){$+$}}
\put(589.0,235.0){\rule[-0.200pt]{204.765pt}{0.400pt}}
\put(171.0,131.0){\rule[-0.200pt]{0.400pt}{175.375pt}}
\put(171.0,131.0){\rule[-0.200pt]{305.461pt}{0.400pt}}
\put(1439.0,131.0){\rule[-0.200pt]{0.400pt}{175.375pt}}
\put(171.0,859.0){\rule[-0.200pt]{305.461pt}{0.400pt}}
\end{picture}
}  
\end{center}
\caption{Convergence of $\tilde\rho_T(N_0,N_1)$ to $\rho_\FR(N_0,N_1)$.}\label{fig:convergence}
\end {figure}

A lower bound for the Fisher-Rao MVN distance between MVNs was given in~\cite{SDPMVN-1990} by using a Fisher isometric embedding into the SPD cone of dimension $d+1$.
Calvo and Oller~\cite{SDPMVN-1990} showed that  mapping $N=N(\mu,\Sigma)$ by
\begin{equation}
\barN= f(N) \eqdef
  \mattwotwo{\Sigma+\mu\mu^\top}{\mu}{\mu^\top}{1}\in\calP(d+1),
\end{equation}
embeds the Gaussian manifold into a submanifold $\barcalN$ in the SPD cone $\calP(d+1)$ of codimension $1$.
Moreoever, when $\calP(d+1)$ is equipped with half the trace metric $\frac{1}{2}g_\trace$, the embedded MVN submanifold is Fisher isometric.  
However, the submanifold $\barcalN$ is not totally geodesic.
Thus by taking the Riemannian SPD distance in $\calP(d+1)$ induced by $\frac{1}{2}g_\trace$, we get a lower bound on the Fisher-Rao distance:

\begin{eqnarray*}
\rho_\CO(N_0,N_1) =  \frac{1}{\sqrt{2}} \sum_{i=1}^{d+1} \log^2 \lambda_i(\barN_0^{-\frac{1}{2}}\barN_1\barN_0^{-\frac{1}{2}}).
\end{eqnarray*}

\begin{proposition}\label{prop:LBCO}
A lower bound on the Fisher-Rao distance between $N_0=N(\mu_0,\Sigma_0)$ and $N_1=N(\mu_1,\Sigma_1)$ is
$$
\rho_\CO(N_0,N_1) =  \frac{1}{\sqrt{2}} \sum_{i=1}^{d+1} \log^2 \lambda_i(\barN_0^{-\frac{1}{2}}\barN_1\barN_0^{-\frac{1}{2}}),
$$
where $\barN_i=\mattwotwo{\Sigma_i+\mu_i\mu_i^\top}{\mu_i}{\mu_i^\top}{1}$ for $i\in\{0,1\}$, and the $\lambda_i(M)$'s denote the eigenvalues of matrix $M$.
\end{proposition}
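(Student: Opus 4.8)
The plan is to exploit that the Calvo--Oller map $f\colon N\mapsto\barN$ is a \emph{Fisher isometric embedding} of $(\calN,g_\Fisher)$ into $\left(\calP(d+1),\frac{1}{2}g_\trace\right)$, as recalled just above from~\cite{SDPMVN-1990}: since the image submanifold $\barcalN=f(\calN)$ is isometric but \emph{not} totally geodesic, the ambient geodesic distance between $\barN_0$ and $\barN_1$ can only underestimate the intrinsic Fisher--Rao distance, and the right-hand side of the claim is exactly that ambient distance written out in eigenvalues.

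First I would record the isometry in the form needed. Because $f^{*}\!\left(\frac{1}{2}g_\trace\right)=g_\Fisher$, the chain rule gives, for every smooth curve $c\colon[0,1]\to\calN$ with $c(0)=N_0$ and $c(1)=N_1$,
\begin{equation*}
\Length_{\frac{1}{2}g_\trace}(f\circ c)=\Length_{g_\Fisher}(c).
\end{equation*}
The curve $f\circ c$ joins $\barN_0$ to $\barN_1$ inside $\calP(d+1)$, so its $\frac{1}{2}g_\trace$-length is at least the Riemannian distance $\rho_{\frac{1}{2}g_\trace}(\barN_0,\barN_1)$. Taking the infimum over all admissible $c$ and invoking the variational characterization of $\rho_\FR$ from Section~\ref{sec:FRMVN} yields $\rho_{\frac{1}{2}g_\trace}(\barN_0,\barN_1)\le\rho_\FR(N_0,N_1)$.

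It then remains to identify the left-hand side with $\rho_\CO(N_0,N_1)$. Scaling the trace metric by $\frac{1}{2}$ multiplies every length, hence the geodesic distance, by $\frac{1}{\sqrt{2}}$, so $\rho_{\frac{1}{2}g_\trace}(\barN_0,\barN_1)=\frac{1}{\sqrt{2}}\,\rho_{g_\trace}(\barN_0,\barN_1)$. Using the closed-form SPD trace distance recalled earlier, $\rho_{g_\trace}(\barN_0,\barN_1)=\|\Log(\barN_0^{-1}\barN_1)\|_F=\sqrt{\sum_{i=1}^{d+1}\log^2\lambda_i(\barN_0^{-1}\barN_1)}$, and since $\barN_0^{-1}\barN_1=\barN_0^{-\frac{1}{2}}\bigl(\barN_0^{-\frac{1}{2}}\barN_1\barN_0^{-\frac{1}{2}}\bigr)\barN_0^{\frac{1}{2}}$ is similar to the SPD matrix $\barN_0^{-\frac{1}{2}}\barN_1\barN_0^{-\frac{1}{2}}$, its eigenvalues are the positive reals $\lambda_i(\barN_0^{-\frac{1}{2}}\barN_1\barN_0^{-\frac{1}{2}})$. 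Substituting gives exactly $\rho_\CO(N_0,N_1)$ and completes the proof.

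The computation is entirely routine; the one point demanding care is the \emph{direction} of the inequality. An isometrically immersed submanifold carries an intrinsic distance that dominates the ambient distance between the same pair of points, and equality would require $\barcalN$ to be totally geodesic in $\calP(d+1)$ --- which, as noted above, fails --- so one must bound the Fisher--Rao length of each competitor curve below by $\rho_{\frac{1}{2}g_\trace}(\barN_0,\barN_1)$ and only afterwards pass to the infimum, rather than trying to compare the two geodesics directly. The only external inputs to double-check are Calvo and Oller's isometry identity $f^{*}(\frac{1}{2}g_\trace)=g_\Fisher$ and the similarity/eigenvalue bookkeeping just performed; no claim of tightness is made, consistently with $\barcalN$ not being totally geodesic.
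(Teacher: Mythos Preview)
Your argument is correct and follows essentially the same line as the paper's: the paper does not give a formal proof of the proposition but simply records, in the paragraph preceding it, that Calvo and Oller's embedding $f$ is Fisher-isometric into $(\calP(d+1),\frac{1}{2}g_\trace)$ while $\barcalN$ is not totally geodesic, whence the ambient SPD distance lower-bounds $\rho_\FR$. Your write-up makes the variational step explicit (bounding each competitor curve's length below by the ambient distance before taking the infimum) and carries out the eigenvalue bookkeeping cleanly; note only that the displayed formula in the proposition is missing a square root over the sum---your computation in fact yields $\frac{1}{\sqrt{2}}\sqrt{\sum_i\log^2\lambda_i}$, matching Eq.~\eqref{eq:RaoCMVN}, not $\frac{1}{\sqrt{2}}\sum_i\log^2\lambda_i$.
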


Notice that by Nash embedding theorems, any Riemannian manifold $(M,g)$ can be embedded as a submanifold of the Euclidean manifold of higher dimension.
 
\subsubsection{A guaranteed $(1+\epsilon)$-approximation of the Fisher-Rao MVN distance}\label{sec:guar}
Using the lower and upper bounds on the Fisher-Rao distance and the closed-form solution for the Fisher-Rao geodesics with boundary value conditions, we can design a  guaranteed $(1+\epsilon)$-approximation recursive algorithm for the Fisher-Rao MVN distance as follows:

\begin{figure}
\begin{tabular}{cc}
\fbox{\includegraphics[width=0.45\textwidth]{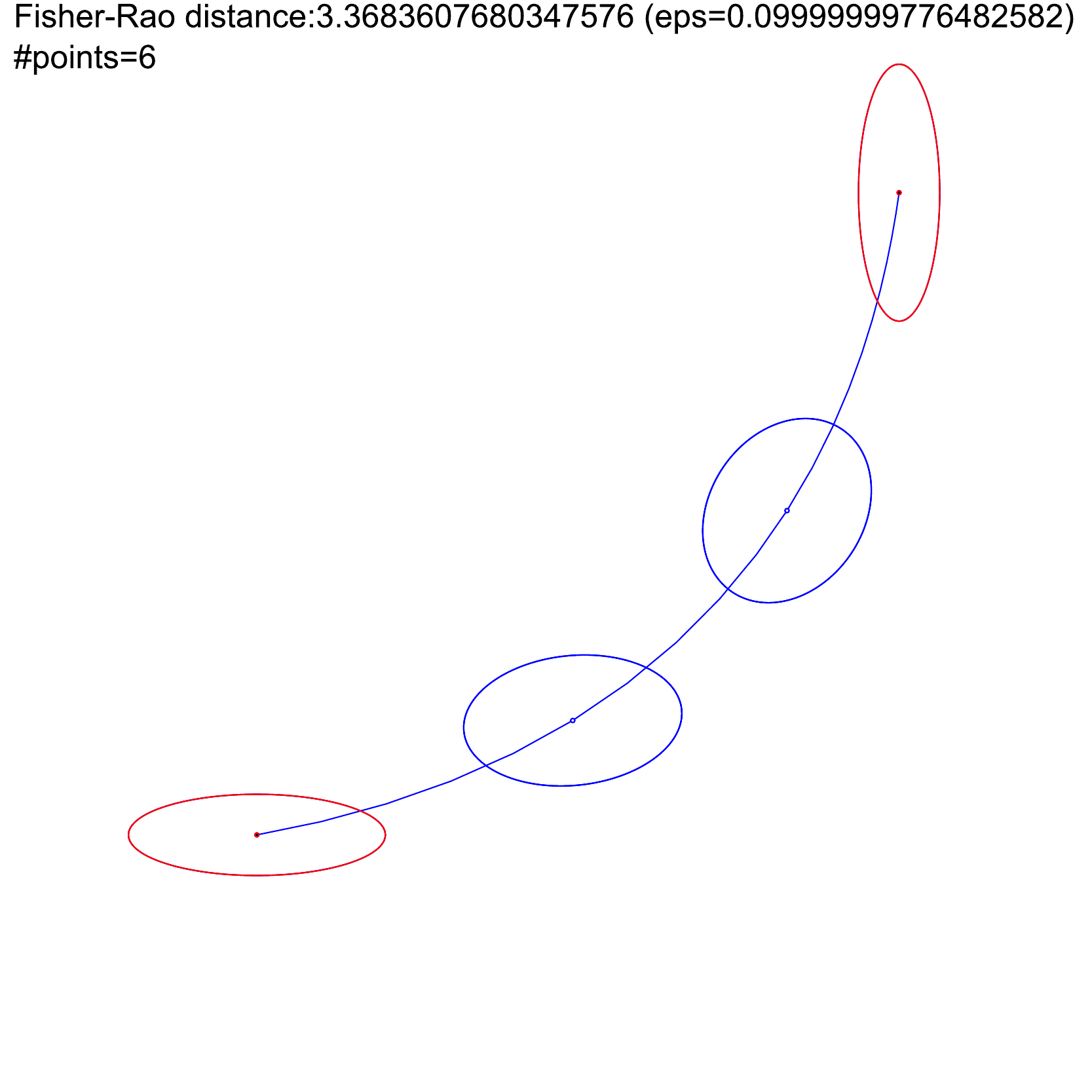}}&
\fbox{\includegraphics[width=0.45\textwidth]{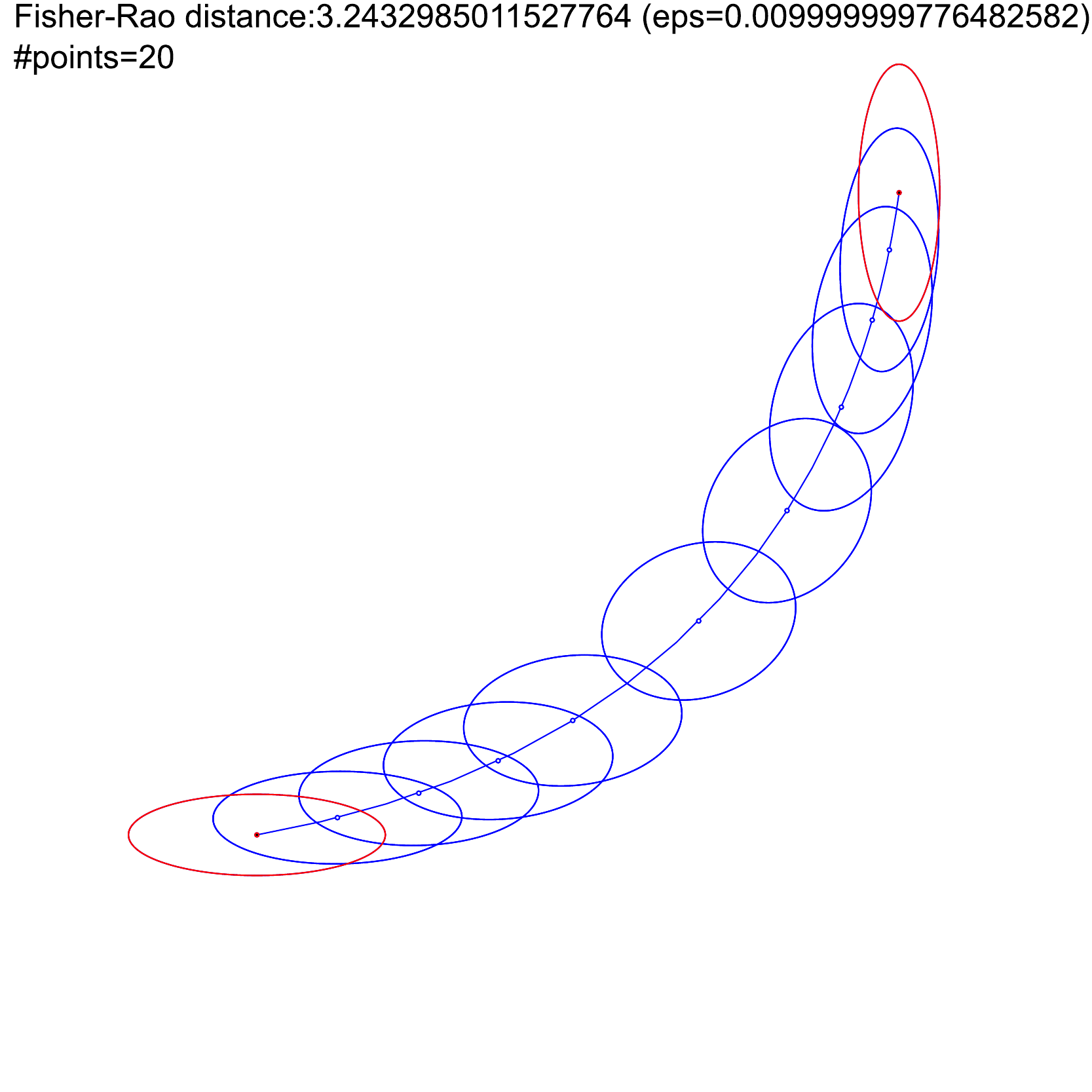}}\\
\fbox{\includegraphics[width=0.45\textwidth]{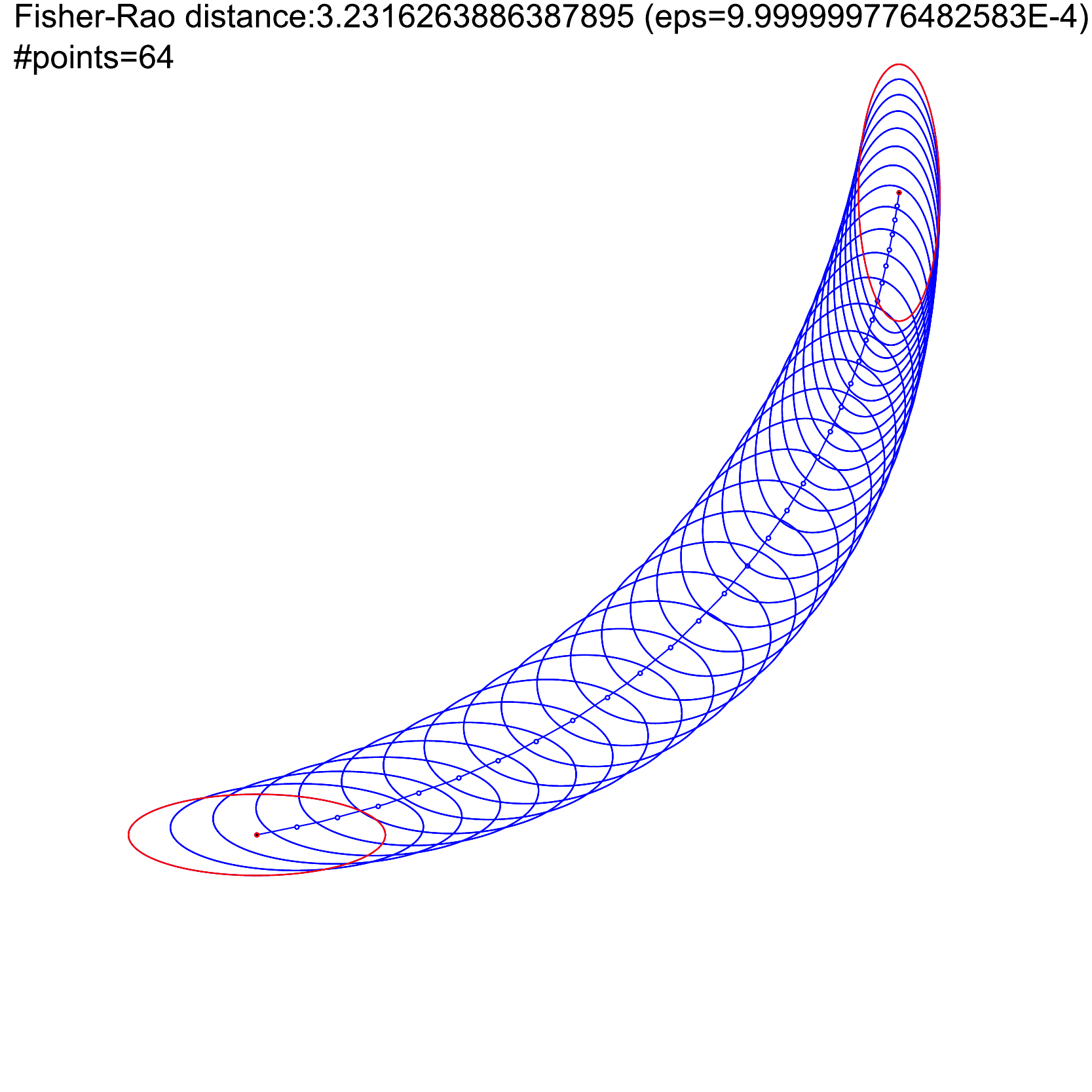}}&
\fbox{\includegraphics[width=0.45\textwidth]{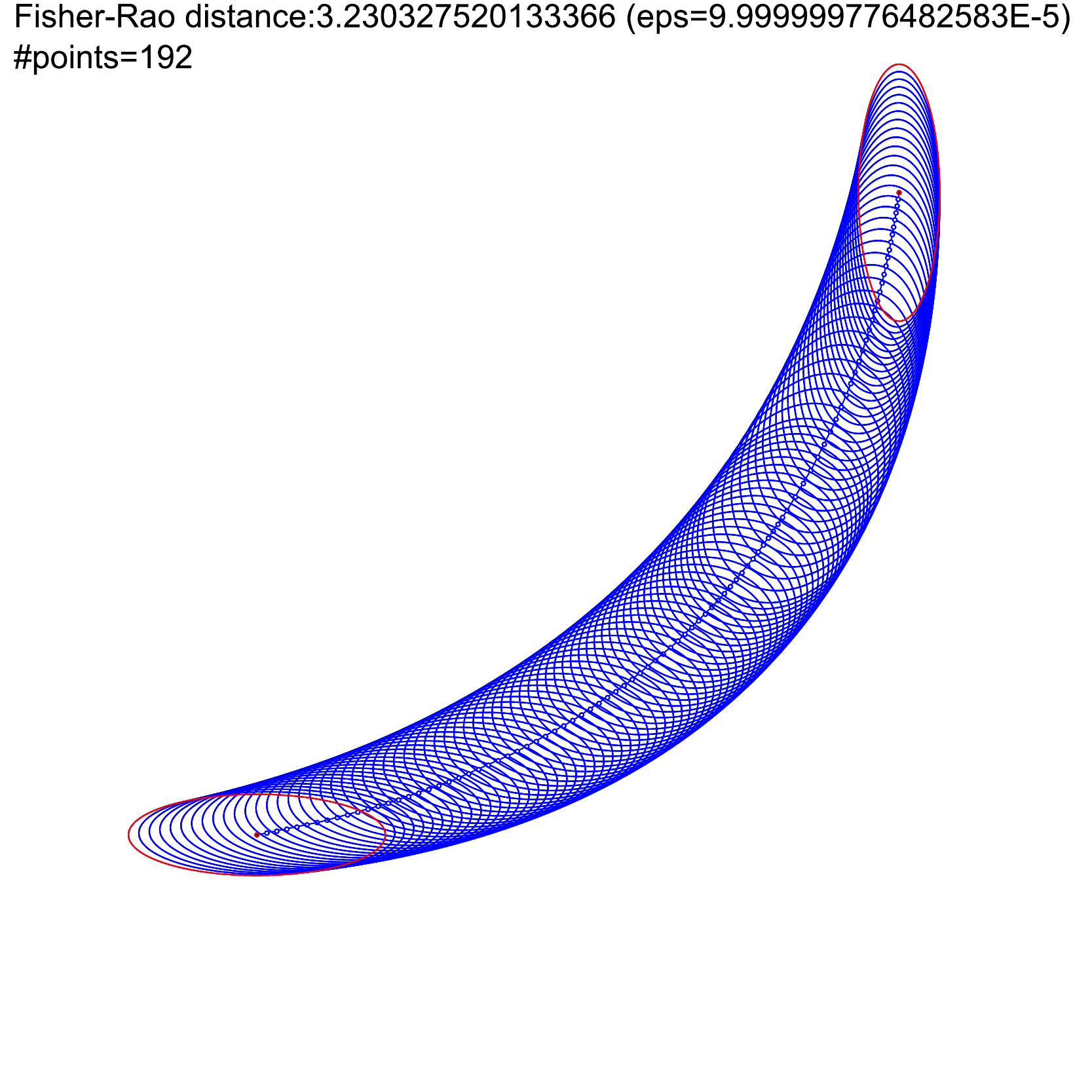}}
\end{tabular}
\caption{Some discretizations of the Fisher-Rao geodesics between two bivariate normals guaranteeing a $(1+\epsilon)$-approximation of the Fisher-Rao distance for $\epsilon\in\{0.1,0.01,0.001,0.0001\}$.\label{fig:ex1sample}}
\end{figure}

\vskip 0.5cm
\noindent\fbox{
\vbox{
\underline{Algorithm~2. $\tilde\rho_\FR(N_0,N_1)=\mathrm{ApproximateRaoMVN}(N_0,N_1,\epsilon)$}:

\begin{itemize}
	\item $l=\rho_\CO(N_0,N_1)$; /* Calvo \& Oller lower bound (Proposition~\ref{prop:LBCO}) */
	\item $u=\sqrt{D_J(N_0,N_1)}$; /* Jeffreys divergence $D_J$ (Proposition~\ref{prop:UBJ}) */
	\item if $\left(\frac{u}{l}>1+\epsilon\right)$
	\begin{itemize}
		\item $N=\mathrm{GeodesicMidpoint}(N_0,N_1)$;  /* see Algorithm~1 for $t=\frac{1}{2}$. */
		\item return $\mathrm{ApproximateRaoMVN}(N_0,N,\epsilon)+\mathrm{ApproximateRaoMVN}(N,N_1,\epsilon)$;
	\end{itemize}
	else
	return u;
\end{itemize}
}}

\begin{theorem}\label{thm:guar}
We can compute a $(1+\epsilon)$-approximation of the Fisher-Rao distance between two multivariate normal distributions for any $\epsilon>0$.
\end{theorem}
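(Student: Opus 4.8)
\emph{Proof plan.} I would prove Theorem~\ref{thm:guar} by analysing Algorithm~2 in three stages: a sandwiching invariant, termination, and correctness of the returned number. For the invariant: at every recursive call, on a pair of normals $P,Q$ lying on the Fisher--Rao geodesic $\gamma\eqdef\gamma_\FR^\calN(N_0,N_1;\cdot)$, the quantities computed are $l=\rho_\CO(P,Q)$ and $u=\sqrt{D_J(P,Q)}$, and by Proposition~\ref{prop:LBCO} and Property~\ref{prop:UBJ} they satisfy $l\le\rho_\FR(P,Q)\le u$; all of $l$, $u$, and the geodesic midpoint $\gamma_\FR^\calN(P,Q;\tfrac12)$ are computable with finitely many standard matrix operations (all eigenvalues of the $(d+1)\times(d+1)$ Calvo--Oller matrices for $l$, the explicit Jeffreys formula for $u$, Algorithm~1 at $t=\tfrac12$ for the midpoint). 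Two structural facts go with this. First, the Calvo--Oller map is injective, so $P\ne Q\Rightarrow l>0$ and the test $u/l>1+\epsilon$ is well posed; and since Fisher--Rao geodesics are unique, the arc of $\gamma$ between any two of its points is \emph{the} geodesic between them, so every pair produced anywhere in the recursion tree lies on the fixed compact arc $\Gamma\eqdef\gamma([0,1])$. Second, because $\gamma$ is parametrised by arc length, the two children of a pair of Fisher--Rao length $\ell$ each have Fisher--Rao length $\ell/2$, and one has the exact additivity $\rho_\FR(P,Q)=\rho_\FR(P,R)+\rho_\FR(R,Q)$ for $R$ the midpoint.

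The crux is termination, and here I would show that \emph{both} bounds are first--order accurate along $\Gamma$: as $P,Q\in\Gamma$ collapse, $\sqrt{D_J(P,Q)}/\rho_\FR(P,Q)\to1$ because $D_J$, being an $f$-divergence with $f_J''(1)=2$, satisfies $D_J(p_\theta,p_{\theta+\dtheta})=\ds_\Fisher^2+o(\|\dtheta\|^2)$, while $\rho_\CO(P,Q)/\rho_\FR(P,Q)\to1$ because the Calvo--Oller map is a Fisher isometric embedding of $\calN$ onto $\barcalN\subset\calP(d+1)$, so the intrinsic geodesic distance of $\barcalN$ is exactly $\rho_\FR$ and the ambient $\tfrac12 g_\trace$-distance $\rho_\CO$ differs from it only at third order (second fundamental form). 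Both limits are uniform for the base point ranging over the compact set $\Gamma$, so there is $\delta>0$, depending on $\epsilon$, on $(N_0,N_1)$ and on the geometry along $\Gamma$, with $\rho_\FR(P,Q)<\delta\Rightarrow u/l<1+\epsilon$ for $P,Q\in\Gamma$. Picking $K$ with $2^{-K}\rho_\FR(N_0,N_1)<\delta$, every pair at recursion depth $K$ has Fisher--Rao length $2^{-K}\rho_\FR(N_0,N_1)<\delta$, the test fails, and the call returns; hence the recursion tree has depth at most $K$, at most $2^{K+1}-1$ nodes, each doing $O(\mathrm{poly}(d))$ linear algebra, and the algorithm halts.

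For correctness I would induct on the depth of the subtree rooted at a call $\mathrm{ApproximateRaoMVN}(P,Q,\epsilon)$, with claim $\rho_\FR(P,Q)\le(\text{returned value})\le(1+\epsilon)\rho_\FR(P,Q)$. At a leaf the value is $u$, where the failed test gives $u\le(1+\epsilon)l\le(1+\epsilon)\rho_\FR(P,Q)$ and $u\ge\rho_\FR(P,Q)$ by the invariant. At an internal node with midpoint $R$ the value is $v_1+v_2$ with $\rho_\FR(P,R)\le v_1\le(1+\epsilon)\rho_\FR(P,R)$ and $\rho_\FR(R,Q)\le v_2\le(1+\epsilon)\rho_\FR(R,Q)$ by the induction hypothesis; adding and using the additivity $\rho_\FR(P,Q)=\rho_\FR(P,R)+\rho_\FR(R,Q)$ closes the induction. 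At the root this gives $\rho_\FR(N_0,N_1)\le\tilde\rho_\FR(N_0,N_1)\le(1+\epsilon)\rho_\FR(N_0,N_1)$, proving the theorem, in fact with a one--sided guarantee.

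The step I expect to be the main obstacle is termination: one needs the \emph{ratio} $u/l$, and not merely the absolute gap $u-l$, to tend to $1$ on shrinking sub-arcs, which forces a genuine asymptotic--tightness argument for the Calvo--Oller \emph{lower} bound (not just for the Jeffreys upper bound), together with a compactness argument making the first--order estimates uniform over $\Gamma$ so that a single subdivision depth terminates every branch.
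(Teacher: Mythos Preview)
Your correctness argument is essentially the paper's: bound each leaf by $u\le(1+\epsilon)l\le(1+\epsilon)\rho_\FR$, then sum using additivity of $\rho_\FR$ along the geodesic. The paper states this directly for a flat partition into $T$ pieces rather than by induction on the recursion tree, but the content is identical, and your two-sided formulation (also recording $\rho_\FR\le u$) is a mild sharpening.

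Where you genuinely go beyond the paper is termination. The paper's proof does not argue that the recursion halts; it simply assumes a finite partition exists and verifies the $(1+\epsilon)$ bound. Your plan fills this gap, and the ingredients you name are the right ones: (i) $\sqrt{D_J}/\rho_\FR\to1$ from the $f$-divergence expansion with $f_J''(1)=2$; (ii) $\rho_\CO/\rho_\FR\to1$ because the Calvo--Oller embedding is Fisher-isometric, so the intrinsic distance on $\barcalN$ is exactly $\rho_\FR$ and the ambient $\calP(d+1)$ distance agrees with it to leading order; (iii) uniformity over the compact arc $\Gamma=\gamma([0,1])$ so that one depth bound works for all branches. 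This is correct and is a real addition to the paper's argument, which you rightly flag as the main obstacle.
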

 
\begin{proof}
The proof follows from the fact that when $\frac{u}{l}\leq 1+\epsilon$ then we have 
$$
\sqrt{D_J(N_0,N_1)}\leq (1+\epsilon) \rho_\CO(N_0,N_1) \leq \leq (1+\epsilon) \rho_\FR(N_0,N_1).
$$

Since we cut recursively along the geodesics $\gamma_\FR(N_0,N_1)$ into overall $T+1$ pieces $N_{\frac{i}{T}}$, we have
$$
\tilde\rho_\FR(N_0,N_1)=\sum_{i=0}^{T-1} \tilde\rho(N_{\frac{i}{T}},_{\frac{i+1}{T}})\leq \sum_{i=0}^{T-1}  (1+\epsilon) \rho_\FR((N_{\frac{i}{T}},_{\frac{i+1}{T}}).
$$
But $\gamma_\FR(N_0,N_1)$ is totally geodesic so that $\sum_{i=0}^{T-1} \rho_\FR((N_{\frac{i}{T}},_{\frac{i+1}{T}})=\rho_\FR(N_0,N_1)$.
It follows that 
$$
\tilde\rho_\FR(N_0,N_1)\leq (1+\epsilon)\rho_\FR(N_0,N_1).
$$
\end{proof}

Instead of computing exactly the geodesic midpoint $\gamma_\FR(N_0,N_1;\frac{1}{2})$, we may use the matrix arithmetic-harmonic mean algorithm of~\cite{nakamura2001algorithms} which convergence quadratically to the geometric matrix mean of $\barN_0$ and $\barN_1$ (i.e., the geodesic midpoint). The matrix AHM only requires to compute matrix inverses for calculating the matrix harmonic means
 (see Appendix~\ref{sec:ahm}).

Figure~\ref{fig:ex1sample} shows the discretization steps of the Fisher-Rao geodesic between two bivariate normal distributions for  
 $\epsilon\in\{0.1,0.01,0.001,0.0001\}$ with the number of control points on the geodesics and the guaranteed $(1+\epsilon)$-approximation of the Fisher-Rao distance. Figure~\ref{fig:ex2sample} illustrates several examples of discretized  Fisher-Rao geodesics between bivariate normal distributions obtained for $\epsilon=0.01$.

\begin{figure}
\begin{tabular}{cc}
\fbox{\includegraphics[width=0.45\textwidth]{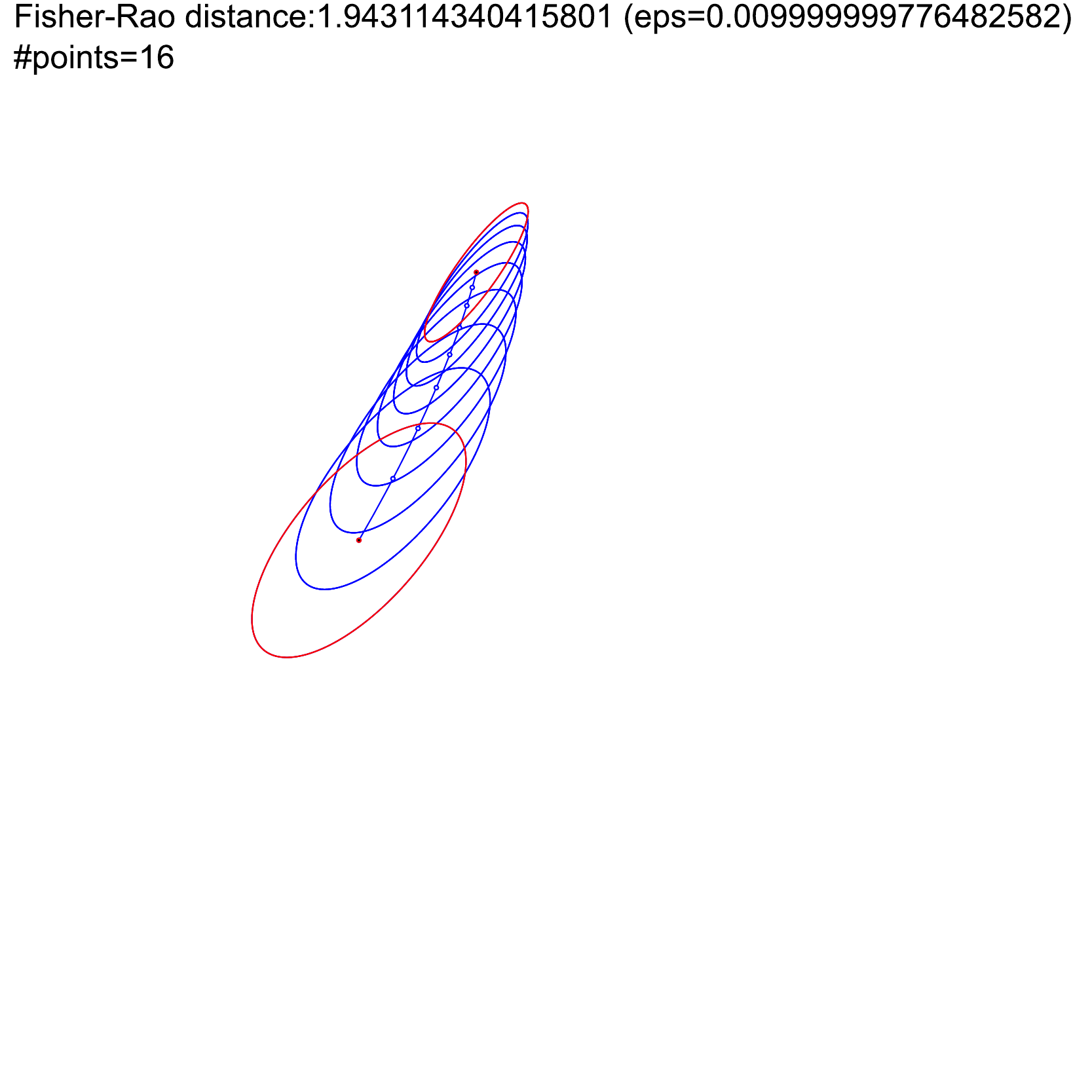}}&
\fbox{\includegraphics[width=0.45\textwidth]{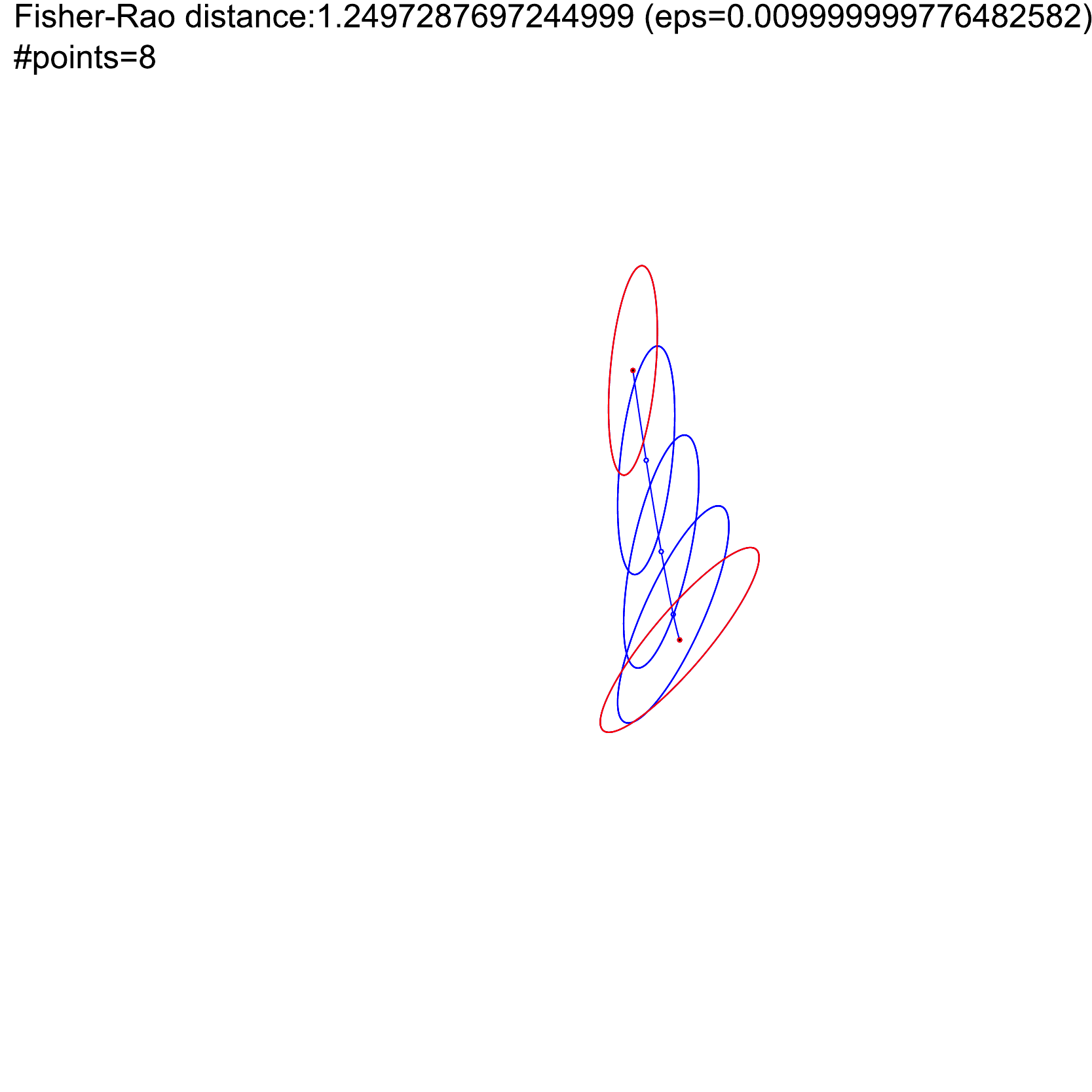}}\\
\fbox{\includegraphics[width=0.45\textwidth]{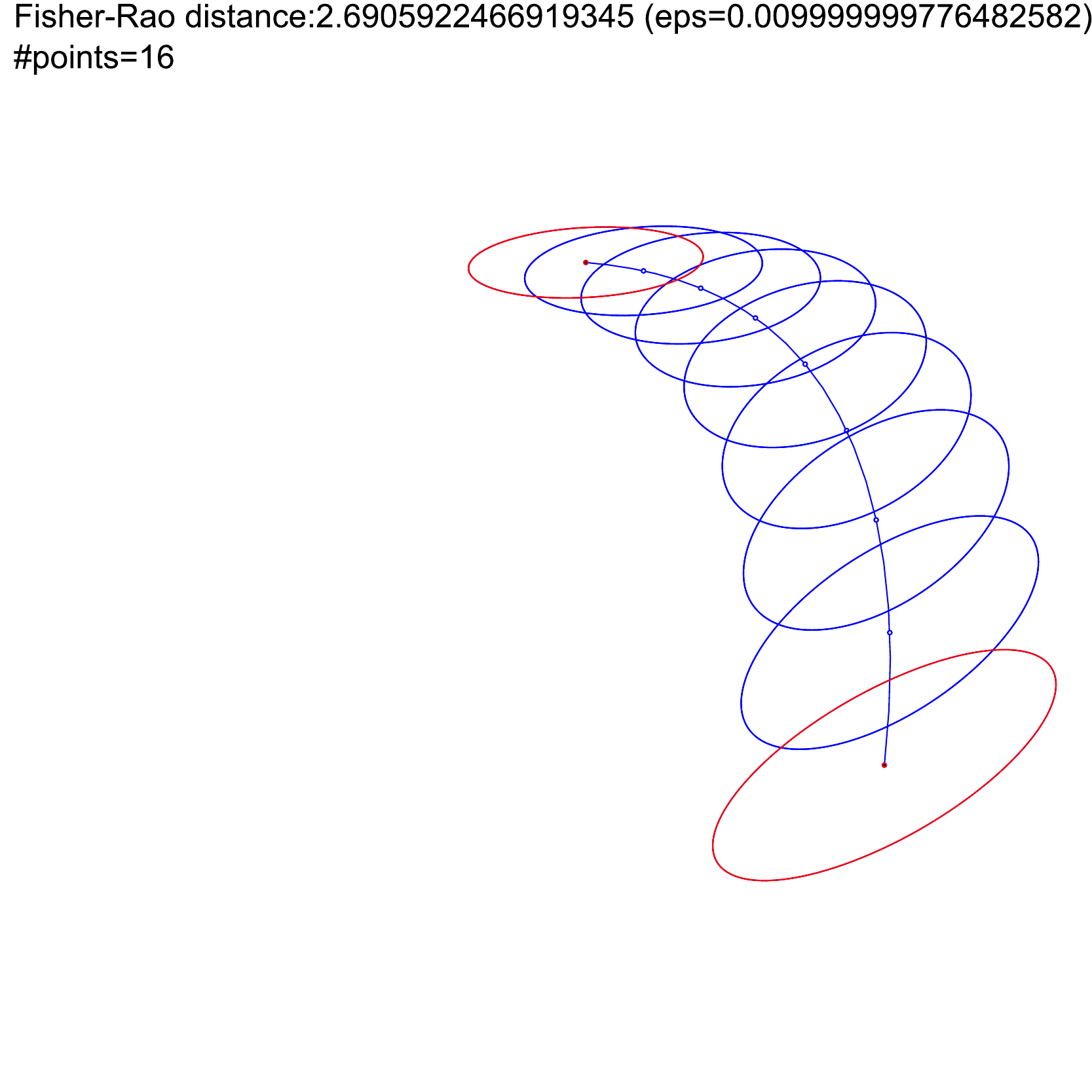}}&
\fbox{\includegraphics[width=0.45\textwidth]{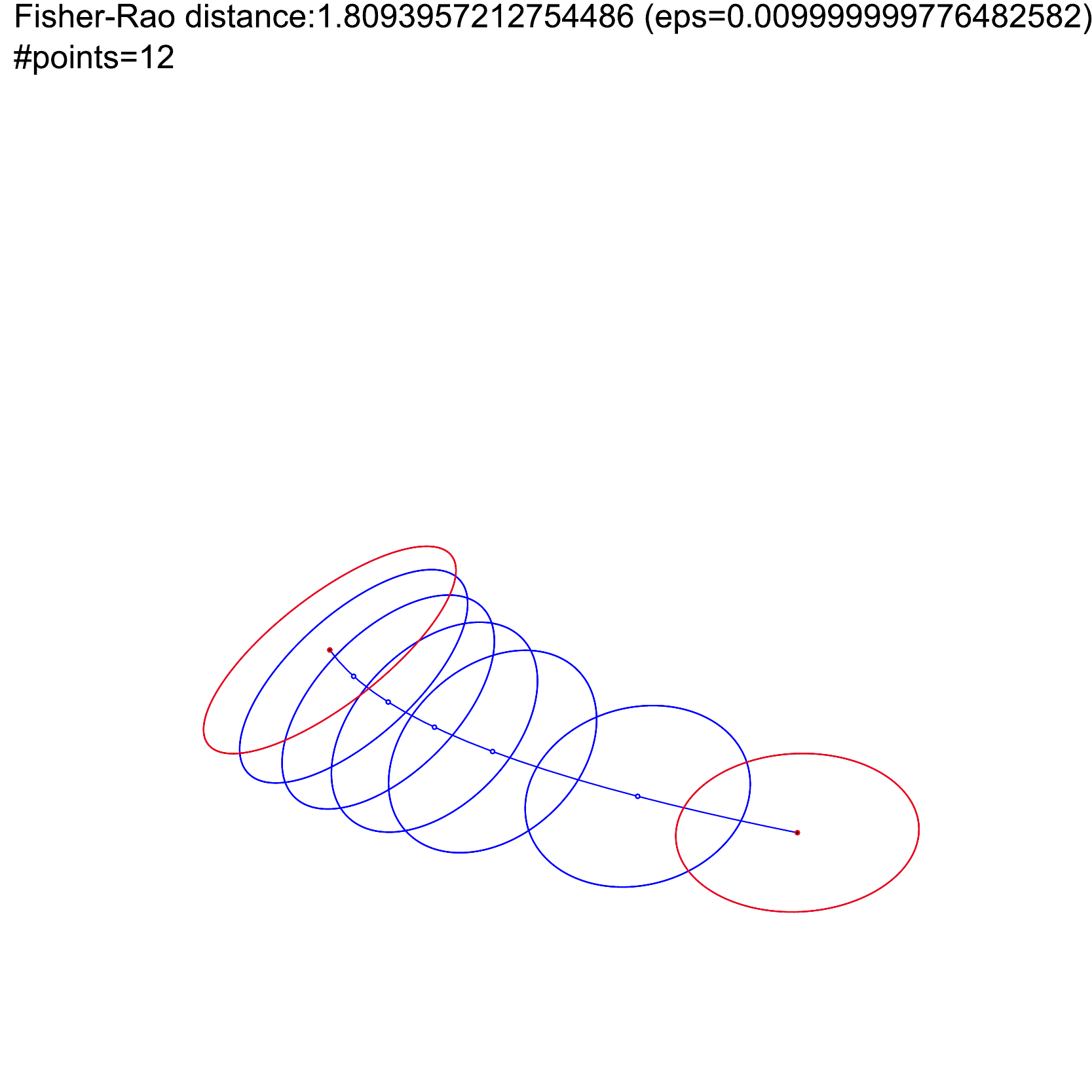}}
\end{tabular}
\caption{Some discretizations of the Fisher-Rao geodesics between two random bivariate normals guaranteeing a $(1+\epsilon)$-approximation of the Fisher-Rao distance for $\epsilon=0.01$.\label{fig:ex2sample}}
\end{figure}

\section{Fisher-Rao clustering}\label{sec:clustering}
We shall consider two applications of the Fisher-Rao distance between MVNs using clustering:
 
The first application considers clustering weighted MVNs which is useful to simplify Gaussian Mixture Models~\cite{davis2006differential,strapasson2016clustering} (GMMs): A GMM $m(x)=\sum_{i=1}^n w_i p_{\mu_i,\Sigma_i}(x)$ with $n$ components is a weighted set of $n$ MVNs $N(\mu_i,\Sigma_i)$ and clustering this set into $k$-clusters allows one to simplify the GMM $m(x)$. 
For this task, we may use the $k$-means clustering~\cite{lloyd1982least} when centroids are available in closed-form~\cite{davis2006differential} (using the Kullback-Leibler divergence) or the $k$-medioid clustering~\cite{kaufman1990partitioning} when we choose the representative of clusters from the input otherwise (using the Fisher-Rao distance).

 The second application considers the quantization of sets of MVNs which is useful to further compress a set $\{m_1,\ldots, m_n\}$ of $n$ GMMs 
$m_i(x)=\sum_{j=1}^{n_i} w_{i,j} p_{\mu_{i,j},\Sigma_{i,j}}(x)$ with overall $N=\sum_{i=1}^n n_i$ MVNs $N(\mu_{i,j},\Sigma_{i,j})$.
We build a codebook of $k$ MVNs $N(m_i,S_i)$ by quantizing the $N$ non-weighted MVNs using the guaranteed $k$-center clustering of~\cite{gonzalez1985clustering} (also called $k$-centers clustering~\cite{dueck2007non}). Then each mixture $m_i(x)$ is quantized into a mixture 
$\tilde m_{w_i}(x)=\sum_{i=1}^{k} w_{i,j} p_{m_{i},S_{i}}(x)$. The advantage of quantization is that the original set 
$\{m_1,\ldots, m_n\}$ of GMMs is compactly represented by $n$ points in the $(k-1)$-dimensional standard simplex $\Delta_{k-1}$ encoding $\{\tilde m_1,\ldots, \tilde m_n\}$ since they share the same components. The set $\{\tilde m_w \st w\in\Delta_{k-1}\}$ form a mixture family in information geometry~\cite{IG-2016,nielsen2019monte} with a dually flat space which can be exploited algorithmically.

Notice that minimizing the objective functions of these $k$-means, $k$-medioid and $k$-center clustering objective are NP-hard when dealing with MVNs.

\subsection{Nearest neighbor queries}
In order to speed up these center-based clustering, we shall find  for a given MVN $N(\mu,\Sigma)$ (a query) its closest cluster center among $k$ MVNs $\{N(m_i,S_i)\}$ using Nearest Neighbor (NN) query search~\cite{andoni2009nearest,bhatia2010survey}.
There exist many data-structures for exact and approximate NN queries.
For example, the vantage point (VP) tree structure is well-suited in metric spaces~\cite{yianilos1993data} and has also been considered for NN queries with respect to the Kullback-Leibler divergence between MVNs~\cite{nielsen2009bregman}.
Although NN queries based on VP-trees still require linear time in the worst-case, they can also achieve logarithmic time in best cases.
At the heart of NN search using VP-trees, we are given a query ball $\Ball(p,r)$ with center $p$ and radius $r$, and we need to find potential intersections with balls $\Ball(v,r_v)$ stored at nodes $v$ of the VP tree.
Thus when using the (fine approximation $\tilde\rho_T$) Fisher-Rao metric distance, we need to answer predicates of whether two Fisher-Rao balls $\Ball_\FR(N,r)$ and $\Ball_\FR(N',r')$ intersect or not:
This can be done by determining the sign of
$\rho_\FR(N,N')-(r+r')$.
When positive the balls do not intersect and when negative the balls intersect.
Since we handle some approximation errors by using $\tilde\rho_T$ instead of $\rho_\FR$, 
but since $\tilde\rho_T\geq \rho_\FR$ we 
need to explore both branches of a VP-tree if the balls $\Ball_\FR(N,r)$ and $\Ball_\FR(N',r')$ stored at the two siblings of a node $v$ are such that
$\tilde\rho_T(N,N')\leq r+r'$.

\subsection{\protect $k$-centers clustering and miniball}
For the quantization tasks, the $k$-center clustering heuristic of Gonzalez~\cite{gonzalez1985clustering} guarantees to find a good $k$-center clustering in metric spaces with an approximation factor upper bounded by $2$.
We can further refine the cluster representative of each cluster by computing approximations of the {\em smallest enclosing Fisher-Rao balls} (miniballs) of clusters.

A simple Riemannian approximation technique has been reported for approximating the smallest enclosing ball of $n$ points $\{p_1,\ldots,p_n\}$ on a Riemannian manifold $(M,g)$ with geodesic distance $\rho_g(p,p')$ and geodesics $\gamma_g(p,p';t)$ in~\cite{RieMinimax-2013}:

\noindent\underline{Miniball($\{p_1,\ldots,p_n\},\rho_g,T$):}
\begin{itemize}
	\item Let $c_1\leftarrow p_1$
	\item For $t=1$ to $T$
	\begin{itemize}
		\item Compute the index of the point which is farthest to current circumcenter $c_t$:
		$$
		f_t=\arg\max_{i\in\{1,\ldots,n\}} \rho_g(c_t,p_i)
		$$
		\item Update the circumcenter by walking along the geodesic linking $c_t$ to $p_{f_t}$:
		$$
		c_{t+1}=\gamma_g\left(c_t,p_{f_t};\frac{1}{t+1}\right)
		$$
		Recall that geodesics are parameterized by normalized arc length so that $\rho_g(c_t,c_{t+1})=\frac{1}{t+1}\rho_g(c_t,p_{f_t})$.
	\end{itemize}
	\item Return $c_T$
\end{itemize}

Conditions of convergence are analyzed in~\cite{RieMinimax-2013}:
For example, it always converge for Cartan-Hadamard manifolds (complete simply connected NPC manifolds like the SPD cone).

The Fisher-Rao distance $\rho_\FR(N_\Sigma(\mu_0),N_\Sigma(\mu_1))$ between two MVNs with same covariance matrix $\Sigma$ is
$$
\rho_\FR(N_\Sigma(\mu_0),N_\Sigma(\mu_1))=\sqrt{2}\, \arccosh\left(1+\frac{1}{4}\Delta_\Sigma^2(\mu_0,\mu_1)\right),
$$
where $\Delta_\Sigma(\mu_0,\mu_1)=\sqrt{(\mu_0-\mu_1)^\top\Sigma^{-1}(\mu_20-\mu_1)}$ is the Mahalanobis distance.
Therefore when all MVNs belong to the non-totally flat submanifold $\calN_\Sigma=\{N(\mu,\Sigma) \st \mu\in\bbR^d\}$, the smallest enclosing ball amounts to an Euclidean smallest enclosing ball~\cite{welzl2005smallest} since in that case $\rho_\FR$ is an increasing function of the Mahalanobis distance.


Since the computations of $\tilde\rho_T$ approximating $\rho_\FR$ is costly, the following section shall consider a new fast metric distance on $\calN$ which further relates to the Fisher-Rao distance.

\section{Pullback Hilbert cone distance}\label{sec:Hilbert}
Let us define dissimilarities and paths on $\calN(d)$ from dissimilarities and geodesics on $\calP(d+1)=\calN_0(d)$ by considering the following family of {\em diffeomorphic embeddings} $f_a:\calN(d)\rightarrow\calP(d+1)$ for $a\in\bbR_{>0}$ proposed in~\cite{SDPMVN-1990}:
\begin{equation}
f_{a}(N(\mu,\Sigma)) \eqdef
  \mattwotwo{\Sigma+a\mu\mu^\top}{a\mu}{a\mu^\top}{a}\in\calP(d+1).
\end{equation}
Let $\barcalN_a(d)=\{f_a(N) \st N\in\calN(d)\}\subset\calP(d+1)$ denote the embedded Gaussian submanifold in $\calP(d+1)$ of codimension $1$ that is obtained by pushing the normals to SPD matrices (Figure~\ref{fig:pushpull}).
We let $f_a^\inv:\barcalN_a(d)\rightarrow\calN(d)$ denote the functional inverse  so that $f_a\circ f_a^\inv=id_\calN$ is  the identity function $\id_\calN: \calN\rightarrow\calN$. Function $f_a^\inv$ pulls back the SPD matrices to the normal distributions (Figure~\ref{fig:pushpull}).
The notation $\inv$ in $f_a^\inv$ is chosen to avoid confusion with the matrix inverse $f_{a}(N(\mu,\Sigma))^{-1}$:
$$
f_{a}(N(\mu,\Sigma))^{-1}=\mattwotwo{\Sigma^{-1}}{-\Sigma^{-1}\mu}{-\mu^\top\Sigma^{-1}}{\mu^\top\Sigma^{-1}\mu+\frac{1}{a}}.
$$

\begin{figure}
\centering
 \includegraphics[width=0.750\columnwidth]{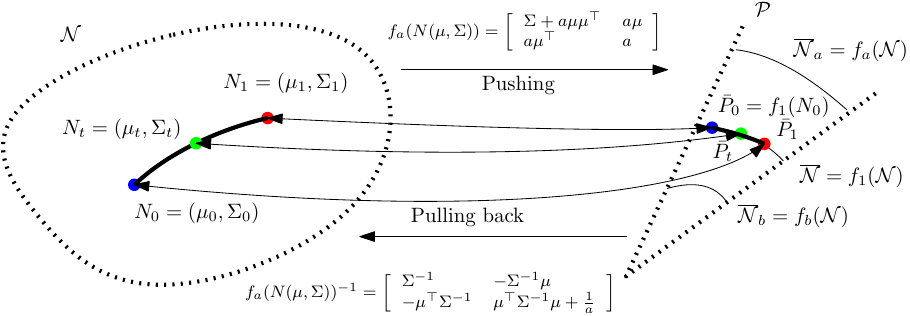}

\caption{Pushing the $d$-dimensional normals $\calN$ to a submanifold $\barcalN$ of the $(d+1)\times(d+1)$-dimensional SPD cone via a diffeomorphic embedding $f$, and pulling back a geodesic/curve on the SPD cone to the normal manifold with $f^{-1}$.}\label{fig:pushpull}
\end{figure}

The open SPD cone $\calP(d+1)$ can thus be foliated by the family of submanifolds $\barcalN_a$~\cite{SDPMVN-1990}:
$\calP(d+1) = \left\{a \times \barcalN_a \st a\in\bbR_{>0}\right\}$.
We let $f=f_1$ and $f^\inv=f_1^\inv$, and $\barcalN=\barcalN_1$.

 Calvo and Oller~\cite{SDPMVN-1990} proved that $(\calN(d),g_\Fisher)$ is isometrically embedded into $(\calP(d+1),\frac{1}{2}g_\trace)$ but that $\barcalN$ is not totally geodesic.
Thus we have 
\begin{eqnarray*}
\rho_\CO(N_0,N_1) &=& \rho_\FR(N(0,f(N_0)),N(0,f(N_1))),\\
&=& \rho_\calP(\barN_0,\barN_1)\geq \rho_\FR(N_0,N_1),
\end{eqnarray*}
where $\barN_i=f(N_i)$. See Eq.~\ref{eq:FRsamemu}.
It follows that we get a series of lower bound for $\rho_\FR(N_0,N_1)$:
$$
\rho_{\CO,T}(N_0,N_1) = \sum_{i=0}^{T-1} \rho_\calP\left(N_{\frac{i}{T}},N_{\frac{i+1}{T}}\right),
$$
such that for all $T$, $\tilde\rho_T\geq \rho_\FR \geq \rho_{\CO,T}$.

We can also approximate the smallest enclosing Fisher-Rao ball of $\{N(\mu_i,\Sigma_i)\}$ on $\calN(d)$ by embedding the normals into $\barcalN$ as $\{\barP_i=f(N(\mu_i,\Sigma_i))\}$. We then apply the above iterative smallest enclosing ball approximation Miniball~\cite{RieMinimax-2013} to get $\tilde C_{T}\in\calP(d+1)$ after $T$ iterations. Then we project orthogonally with respect to the trace metric $\tilde C_T$ onto $\barcalN$ as $\bar C_T=\proj_{\barN}(\tilde C_T)$ and maps back to the Gaussian manifold using $f^\inv$ to get the approximate normal circumcenter.

The following proposition describes the orthogonal projection operation $\barP_\perp=\proj_{\barN}(P)$ 
of $P=[P_{i,j}]\in\calP(d+1)$ onto $\barN$ based on the analysis reported in the Appendix of~\cite{SDPMVN-1990} (page 239):

\begin{proposition}\label{prop:proj}
Let $\beta=P_{d+1,d+1}$ and write  $P=\mattwotwo{\Sigma+\beta\mu\mu^\top}{\beta\mu}{\beta\mu^\top}{\beta}$.
Then the orthogonal projection at $P\in\calP$ onto $\barN$ is: 
\begin{equation}
\barP_\perp:=\proj_{\barN}(P)=\mattwotwo{\Sigma+\mu\mu^\top}{\mu^\top}{\mu}{1},
\end{equation}
and the SPD trace distance between $P$ and $\barP_\perp$  is 
\begin{equation}
\rho_\calP(P,\barP_\perp)=|\log\beta|.
\end{equation}
\end{proposition}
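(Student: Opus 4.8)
The plan is to verify the projection formula by exploiting the structure of the foliation $\calP(d+1) = \{a \times \barcalN_a\}$, which tells us that every $P \in \calP(d+1)$ lies on exactly one leaf $\barcalN_a$, with $a = \beta = P_{d+1,d+1}$. First I would observe that given the block decomposition $P = \mattwotwo{\Sigma+\beta\mu\mu^\top}{\beta\mu}{\beta\mu^\top}{\beta}$, the parametrization $P = f_\beta(N(\mu,\Sigma))$ holds, so $P$ sits on the leaf $\barcalN_\beta$. The candidate projection $\barP_\perp = \mattwotwo{\Sigma+\mu\mu^\top}{\mu}{\mu^\top}{1} = f_1(N(\mu,\Sigma))$ is the point on $\barcalN = \barcalN_1$ obtained by ``sliding along the leaves'' from $P$ to the $a=1$ leaf while keeping $(\mu,\Sigma)$ fixed. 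The geometric content is that this leaf-sliding curve meets $\barcalN$ orthogonally and is itself a geodesic realizing the distance.

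The key steps, in order: (i) write down the one-parameter curve $c(s) = f_{e^s}(N(\mu,\Sigma))$ for $s \in \bbR$, passing through $P$ at $s = \log\beta$ and through $\barP_\perp$ at $s = 0$; (ii) show $c(s)$ is a geodesic of $(\calP(d+1), g_\trace)$ — concretely, verify that $c(s) = \barP_\perp^{1/2}\,(\barP_\perp^{-1/2} P \,\barP_\perp^{-1/2})^{t(s)}\,\barP_\perp^{1/2}$ for an appropriate affine reparametrization, or equivalently compute $\Log(\barP_\perp^{-1} P)$ and check it is a constant tangent direction along $c$; (iii) compute the Fisher-Rao/trace distance along this curve, which by Eq.~\eqref{eq:FRsamemu} reduces to analyzing the eigenvalues of $\barP_\perp^{-1/2} P \,\barP_\perp^{-1/2}$, and show these are all $1$ except for a single eigenvalue equal to $\beta$, giving $\rho_\calP(P,\barP_\perp) = \frac{1}{\sqrt 2}\sqrt{\log^2\beta} = \frac{1}{\sqrt 2}|\log\beta|$ — note the stated formula omits the $\frac{1}{\sqrt 2}$ factor because $\calP(d+1)$ carries $\frac{1}{2}g_\trace$, under which the distance is $|\log\beta|$; and (iv) establish orthogonality: show $\dot c(0)$ is $g_\trace$-orthogonal to $T_{\barP_\perp}\barcalN$, i.e. to every tangent vector of the form $\frac{d}{du}\big|_{u=0} f_1(N(\mu(u),\Sigma(u)))$. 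Combined with the fact that $\barcalN$ is a codimension-one embedded submanifold and $c$ is the geodesic through $P$ hitting $\barcalN$ orthogonally, minimality of $\barP_\perp$ follows, at least locally; one then invokes the NPC (Cartan--Hadamard) structure of the SPD cone to upgrade this to the global nearest point.

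A clean way to organize steps (ii)--(iv) simultaneously is to directly compute $M := \barP_\perp^{-1} P$. Using the inverse formula for $f_1(N(\mu,\Sigma))^{-1}$ given just before the proposition (with $a=1$), multiply it by $P = f_\beta(N(\mu,\Sigma))$ and simplify; the shared $(\mu,\Sigma)$ structure should make most terms collapse, leaving $M$ similar to $\diag(1,\ldots,1,\beta)$ (or $\diag(\beta,1,\ldots,1)$). From this, $\Log(\barP_\perp^{-1} P)$ has Frobenius norm $|\log\beta|$, which gives both the trace-metric distance $\|\Log(\barP_\perp^{-1}P)\|_F = |\log\beta|$ under $g_\trace$ (consistent with the displayed formula under $\frac{1}{2}g_\trace$ up to the constant — I would state the convention explicitly to avoid the discrepancy) and, via $\Log_{\barP_\perp}(P) = \barP_\perp^{1/2}\Log(\barP_\perp^{-1/2}P\barP_\perp^{-1/2})\barP_\perp^{1/2}$, an explicit expression for the initial velocity of the connecting geodesic whose orthogonality to $T_{\barP_\perp}\barcalN$ can then be checked by a trace computation.

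I expect the main obstacle to be step (iv), the orthogonality verification: one must correctly identify the tangent space $T_{\barP_\perp}\barcalN$ as a subspace of $\Sym(d+1,\bbR)$ by differentiating the embedding $f_1$, and then show the $g_\trace$-inner product (which involves conjugation by $\barP_\perp^{-1}$, not just the plain Frobenius product) of the geodesic's velocity with an arbitrary such tangent vector vanishes. This is the computation that genuinely uses the specific form of the embedding $f_a$ and is where the analysis in the Appendix of~\cite{SDPMVN-1990} (page 239) does the real work; the distance formula itself is comparatively routine once $M = \barP_\perp^{-1}P$ is in hand. A secondary point requiring care is going from ``critical point of the distance function along the orthogonal geodesic'' to ``global minimizer,'' which I would justify by convexity of the squared distance function on the Cartan--Hadamard manifold $\calP(d+1)$ together with convexity (or at least geodesic convexity / the submersion structure) of $\barcalN$ viewed appropriately, or simply by noting the leaf-sliding geodesic is the unique geodesic from $P$ meeting the codimension-one submanifold $\barcalN$ orthogonally.
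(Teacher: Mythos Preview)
The paper does not actually prove this proposition: it states the result and attributes the analysis to the Appendix of Calvo and Oller~\cite{SDPMVN-1990} (page 239). So there is no in-paper argument to compare against; your proposal stands as an independent proof, and it is essentially correct.

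A couple of remarks that may sharpen your write-up. First, the orthogonality step (iv) that you flag as the main obstacle is in fact very clean once you push the computation through: with $\dot c(0)=\mattwotwo{\mu\mu^\top}{\mu}{\mu^\top}{1}$ one finds $\barP_\perp^{-1}\dot c(0)\,\barP_\perp^{-1}=\mattwotwo{0}{0}{0}{1}$, so $g_{\barP_\perp}^\trace(\dot c(0),V)=\tr(\barP_\perp^{-1}\dot c(0)\,\barP_\perp^{-1}V)=V_{d+1,d+1}$, which vanishes for every $V\in T_{\barP_\perp}\barcalN$ since the $(d+1,d+1)$ entry of $f_1$ is identically $1$. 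Second, your verification that $c(s)=f_{e^s}(N(\mu,\Sigma))$ is a trace-metric geodesic can be done directly by checking that $L(s):=\barP_\perp^{-1}c(s)=\mattwotwo{I}{0}{(e^s-1)\mu^\top}{e^s}$ satisfies $L(s_1)L(s_2)=L(s_1+s_2)$, hence is a one-parameter group; this both confirms geodesicity and gives the eigenvalues $(1,\ldots,1,e^s)$ immediately. Your observation about the $\tfrac{1}{\sqrt 2}$ normalization is also correct: the stated formula $\rho_\calP(P,\barP_\perp)=|\log\beta|$ is the trace-metric distance $\|\Log(\barP_\perp^{-1}P)\|_F$, not the half-trace (Fisher-Rao on $\calN_0$) distance used elsewhere in the paper, and it is worth saying so explicitly.
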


Consider pulling back SPD cone dissimilarities and geodesics of $\calP(d+1)$ onto $\calN(d)$ as follows:

\begin{definition}[Pullback dissimilarities]\label{def:SPDdis}
A dissimilarity $D(N_0,N_1)$ (not necessarily be a metric distance nor a smooth divergence)
on $\calN(d)$ (with $N_0\eqdef N(\mu_0,\Sigma_0)$ and $N_1 \eqdef N(\mu_1,\Sigma_1)$) 
can be obtained from any dissimilarity $D_\SPD(\cdot,\cdot)$ on the SPD cone by pulling back the SPD matrix cone dissimilarity using 
 $f$:
\begin{equation}
D(N_0,N_1)\equaldef D_\SPD(f(N_0),f(N_1)).
\end{equation}
\end{definition}

Similarly, we pullback cone geodesics onto $\calN$:
 \begin{definition}[Pullback curves]\label{def:SPDcurve}
A path $c_{\gamma}(N_0,N_1;t)$ joining $N_0=c_{\gamma}(N_0,N_1;0)$ and $N_1=c_{\gamma}(N_0,N_1;1)$ can be defined by the pullback of any geodesic $\gamma(f(N_0),f(N_1);t)$ on the SPD cone:
\begin{equation}
c_{\gamma}(N_0,N_1;t)\equaldef f^\inv(\gamma(f(N_0),f(N_1);t)).
\end{equation}
\end{definition}

Hence, we can leverage the rich literature on dissimilarities and geodesics on the SPD cone (e.g.,~\cite{hero2001alpha,chebbi2012means,sra2016positive,baggio2018conal,CTP-2021}).
Note that the Riemannian SPD trace metric geodesic is also the geodesic for Finslerian distances
$\rho_h(P_0,P_1)\equaldef \left\| 
\Log \left(P_0^{-\frac{1}{2}}\, P_1\, P_0^{-\frac{1}{2}} \right)
  \right\|_h$ where $h$ is a totally symmetric gauge function (i.e., $h(x_1,\ldots, x_n)=h(\sigma(x_1,\ldots, x_n))$ for any permutation $\sigma$)) and $\|P\|_h \equaldef h(\lambda_1(P),\ldots,\lambda_d(P))$.
When $h(x)=h_p(x)=\|x\|_p=\left(\sum_{i=1}^d x_i^p\right)^{\frac{1}{p}}$ is the $p$-norm for $1\leq p<\infty$, we get the Schatten matrix $p$-norms~\cite{SPD-Bhatia-2009}.

\begin{figure}
\centering
 \fbox{\includegraphics[width=0.50\columnwidth]{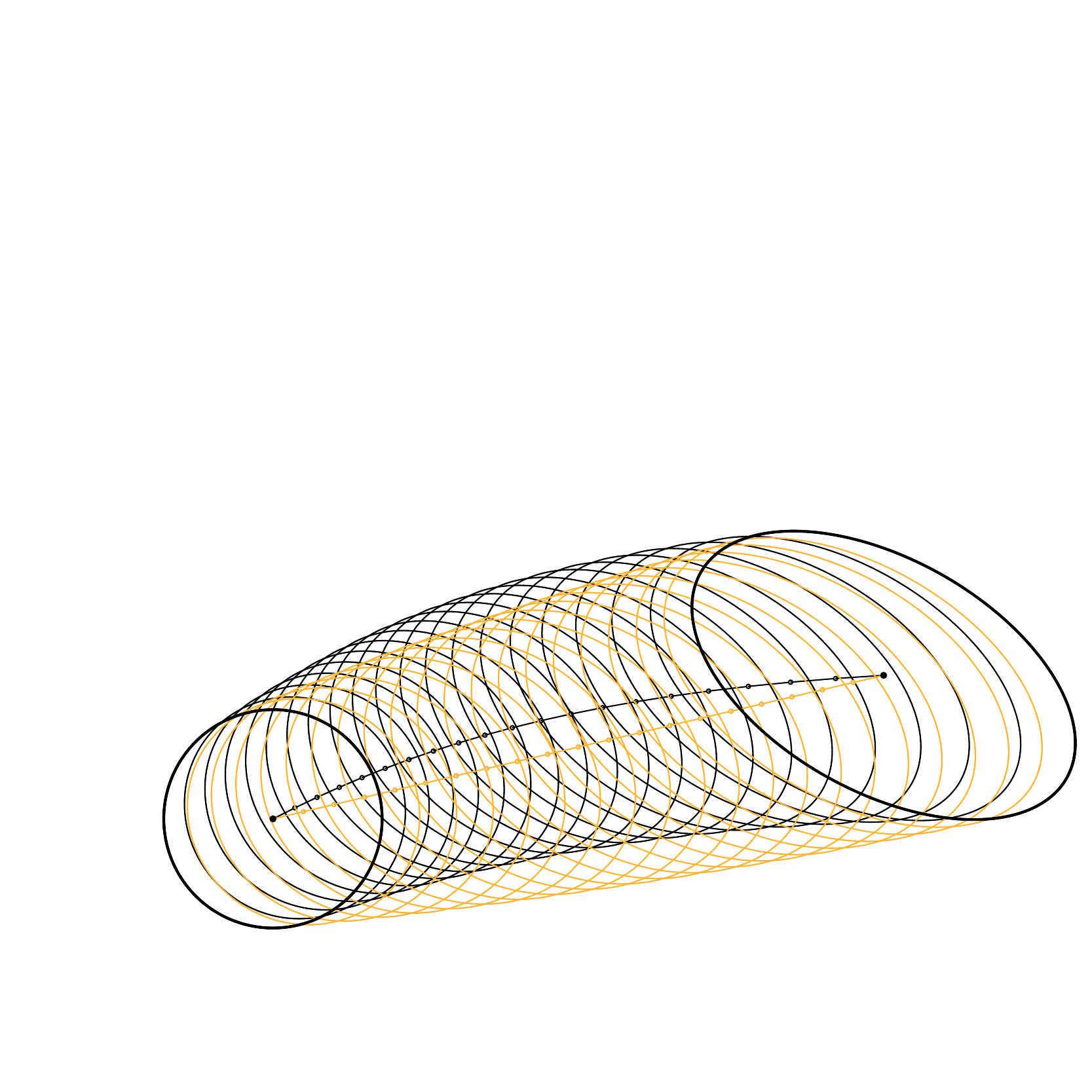}}
 
%
\caption{Comparing the pullback Hilbert geodesic (orange, coinciding with the mixture geodesic) with the exact Fisher-Rao geodesic displayed in black.}\label{fig:COHilbert}
\end{figure}

The {\em Hilbert projective distance} also called {\em Birkhoff projective distance}~\cite{Hilbert-1895,Birkhoff-1957,CTP-2021} on the SPD cone $\Sym_+(d,\bbR)$ is defined by
\begin{eqnarray*}
\rho_\Hilbert(P_0,P_1)
&=& \log\left(\frac{\lambda_{\mmax}(P_0^{-\frac{1}{2}}P_1 P_0^{-\frac{1}{2}})}{\lambda_{\mmin}(P_0^{-\frac{1}{2}}P_1 P_0^{-\frac{1}{2}})}\right),\\
 &=& \log\left(\frac{\lambda_{\mmax}(P_0^{-1}P_1)}{\lambda_{\mmin}(P_0^{-1}P_1)}\right).
\end{eqnarray*}
It is a {\em projective distance} (or quasi-metric distance) because it is symmetric and satisfies the triangular inequality but we have $\rho_\Hilbert(P_0,P_1)=0$ if and only if $P_0=\lambda P_1$ for some $\lambda>0$.
It was proven in~\cite{UniversalHilbertProjectiveMetric-1982} (Theorem 4.1) that
 any projective distance with strict contraction ratio under linear mapping is necessarily a scalar function of the Hilbert's projective metric, and that the Hilbert projective distance has the lowest  possible contraction ratio.

However, the pullback Hilbert distance on $\calN$, $\rho_{\Hilbert}(N_0,N_1):=\rho_\Hilbert(f(N_0),f(N_1))$,
 is a proper metric distance on $\barcalN$
since $f(N_0)=f(N_1)$ if and only if $\lambda=1$ because the array  element at last row and last column 
$[f(N_0)]_{d+1,d+1}=[f(N_1)]_{d+1,d+1}=1$ is identical. Thus $f(N_0)=\lambda f(N_1)$ for $\lambda=1$.
The pullback Hilbert cone distance only requires to calculate the {\em extreme eigenvalues} of the matrix product $f(N_0)^{-1}f(N_1)$. Thus we can bypass a costly SVD and compute approximately these extreme eigenvalues using the power method~\cite{trevisan2017lecture} (Appendix~\ref{sec:powermethod}).

The geodesic in the Hilbert SPD cone are straight lines~\cite{nussbaum1994finsler} parameterized as follows:
$$
\gamma_\Hilbert(P_0,P_1;t):=\left(\frac{\beta\alpha^t-\alpha\beta^t}{\beta-\alpha}\right)P_0+
\left(\frac{\beta^t-\alpha^t}{\beta-\alpha}\right)P_1,
$$
where $\alpha=\lambda_\mmin(P_1^{-1}P_0)$ and $\beta=\lambda_\mmax(P_1^{-1}P_0)$.
Figure~\ref{fig:COHilbert} compares the pullback Hilbert geodesic curve with the Fisher-Rao geodesic.

A pregeodesic is a geodesic which may be arbitrarily reparameterized by another parameter $u=r(t)$ for some smooth function $r$.
That is, a pregeodesic is not necessarily parameterized by arc length.
Let us notice that the weighted arithmetic mean $\LERP(P_0,P_1;u)=(1-u)P_0+uP_1$ is a pregeodesic of $\gamma_\Hilbert(P_0,P_1;t)$.
Although Hilbert SPD space is not a Riemannian space, it enjoys non-positive curvature properties according to various definitions of curvatures~\cite{alabdulsada2019non,karlsson2000hilbert}.

We can adapt the approximation of the minimum enclosing Hilbert ball by replacing $\rho_\FR$ by $\rho_\Hilbert$ and cutting  metric geodesic $\gamma_\Hilbert$ instead of geodesics $\gamma_\FR^\calN$ (see Figure~\ref{fig:Miniball}).

\begin{figure}
\centering

\fbox{\includegraphics[width=0.5\columnwidth]{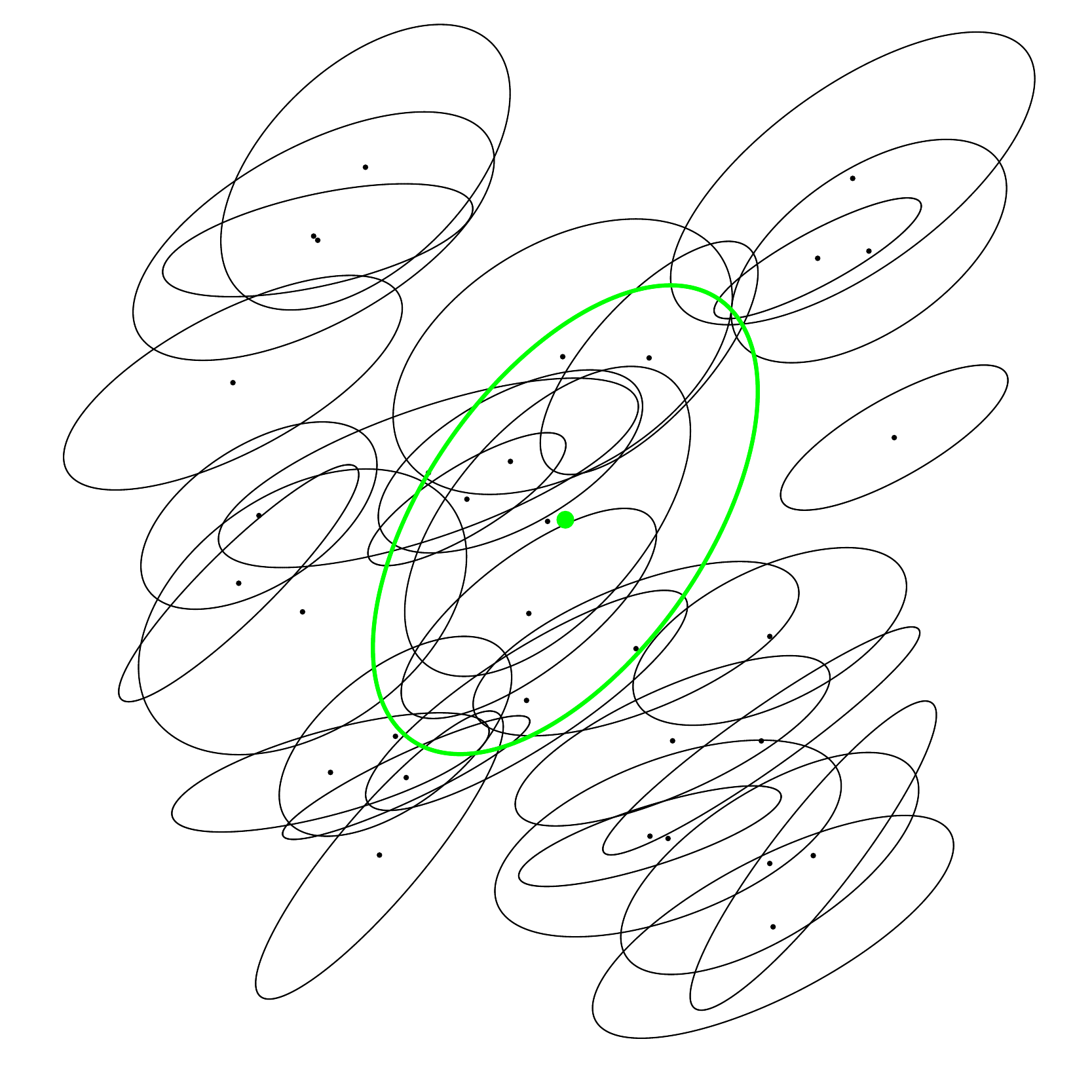}}
\caption{Approximating the Hilbert smallest enclosing ball of a set of bivariate normal distributions.
 The approximated minimax center is shown in green.}\label{fig:Miniball}
\end{figure}

First, the diffeomorphic embedding $f$ exhibits several interesting features:
\begin{proposition}\label{prop:sameJD}
The Jeffreys divergence   between 
$p_{\mu_1,\Sigma_1}$ and $p_{\mu_2,\Sigma_2}$ amounts to the Jeffreys divergence
between $q_{\bar P_1}=p_{0,f(\mu_1,\Sigma_1)}$ and $q_{\bar P_2}=p_{0,f(\mu_2,\Sigma_2)}$ where $\bar P_i=f(\mu_i,\Sigma_i)$:
$D_J(p_{\mu_1,\Sigma_1},p_{\mu_2,\Sigma_2})=D_J(q_{\bar P_1},q_{\bar P_2})$.
\end{proposition}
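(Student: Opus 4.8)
The plan is to verify the identity by a direct block-matrix computation, using the closed-form expression for $D_J$ recalled just above Property~\ref{prop:UBJ} together with the explicit formula for the matrix inverse $f(N(\mu,\Sigma))^{-1}$ displayed at the beginning of Section~\ref{sec:Hilbert}. Write $\bar P_i=f(\mu_i,\Sigma_i)$ for $i\in\{1,2\}$. Since $q_{\bar P_i}=p_{0,\bar P_i}$ is a \emph{centered} $(d+1)$-variate normal, the mean term in the $D_J$ formula vanishes, so the first observation is that
\[
D_J(q_{\bar P_1},q_{\bar P_2})=\tr\!\left(\frac{\bar P_2^{-1}\bar P_1+\bar P_1^{-1}\bar P_2}{2}-I_{d+1}\right),
\]
and the whole statement reduces to computing $\tr(\bar P_2^{-1}\bar P_1)$; the symmetric term $\tr(\bar P_1^{-1}\bar P_2)$ then follows by swapping the roles of $1$ and $2$.

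Next I would carry out the block product
\[
\bar P_2^{-1}\bar P_1=\mattwotwo{\Sigma_2^{-1}}{-\Sigma_2^{-1}\mu_2}{-\mu_2^\top\Sigma_2^{-1}}{\mu_2^\top\Sigma_2^{-1}\mu_2+1}\,\mattwotwo{\Sigma_1+\mu_1\mu_1^\top}{\mu_1}{\mu_1^\top}{1},
\]
retaining only its two diagonal blocks, since the off-diagonal blocks do not contribute to the trace. The top-left block simplifies to $\Sigma_2^{-1}\Sigma_1+\Sigma_2^{-1}(\mu_1-\mu_2)\mu_1^\top$ and the bottom-right scalar to $1+\mu_2^\top\Sigma_2^{-1}(\mu_2-\mu_1)$. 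Using the cyclic identity $\tr\!\big(\Sigma_2^{-1}(\mu_1-\mu_2)\mu_1^\top\big)=\mu_1^\top\Sigma_2^{-1}(\mu_1-\mu_2)$ and combining with the scalar term, the two pieces that are linear in the means merge into a single quadratic form, giving
\[
\tr(\bar P_2^{-1}\bar P_1)=\tr(\Sigma_2^{-1}\Sigma_1)+(\mu_1-\mu_2)^\top\Sigma_2^{-1}(\mu_1-\mu_2)+1,
\]
and, symmetrically, $\tr(\bar P_1^{-1}\bar P_2)=\tr(\Sigma_1^{-1}\Sigma_2)+(\mu_2-\mu_1)^\top\Sigma_1^{-1}(\mu_2-\mu_1)+1$. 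Averaging these two identities and subtracting $\tr(I_{d+1})=d+1$ (the two extra unit constants being absorbed via $\tr(I_{d+1})-\tr(I_d)=1$) leaves precisely
\[
\tr\!\left(\frac{\Sigma_2^{-1}\Sigma_1+\Sigma_1^{-1}\Sigma_2}{2}-I_d\right)+(\mu_2-\mu_1)^\top\frac{\Sigma_1^{-1}+\Sigma_2^{-1}}{2}(\mu_2-\mu_1),
\]
which is the closed form of $D_J(p_{\mu_1,\Sigma_1},p_{\mu_2,\Sigma_2})$; this proves the proposition.

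I do not expect any genuine obstacle: the claim is a finite algebraic identity and the argument is purely computational. The only points demanding a little care are checking that the off-diagonal blocks of $\bar P_2^{-1}\bar P_1$ really drop out of the trace, and pairing the two linear-in-$\mu$ contributions correctly so that they assemble into $(\mu_1-\mu_2)^\top\Sigma_2^{-1}(\mu_1-\mu_2)$. A conceptual sanity check for why the constants close up so cleanly is that $\bar P_i$ is the second-moment matrix $E[\tilde X\tilde X^\top]$ of the homogeneous lift $\tilde X=(X^\top,1)^\top\in\bbR^{d+1}$ of $X\sim N(\mu_i,\Sigma_i)$, whose last coordinate is deterministically $1$; it is exactly this fixed $(d+1,d+1)$ entry equal to $1$ that is responsible for the extra unit constants cancelling against $\tr(I_{d+1})-\tr(I_d)$.
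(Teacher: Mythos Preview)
Your proof is correct and rests on the same core computation as the paper's: the block-matrix identity $\tr(\bar P_2^{-1}\bar P_1)=1+\tr(\Sigma_2^{-1}\Sigma_1)+(\mu_1-\mu_2)^\top\Sigma_2^{-1}(\mu_1-\mu_2)$. The only organizational difference is that the paper proves the slightly stronger statement $D_\KL(p_{\mu_1,\Sigma_1},p_{\mu_2,\Sigma_2})=D_\KL(q_{\bar P_1},q_{\bar P_2})$ and then symmetrizes; to make the KL identity go through it must also check $\det(f(\mu,\Sigma))=\det(\Sigma)$ via the block-determinant formula, so that the $\log(|P_2|/|P_1|)$ term matches. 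By working directly with $D_J$ you sidestep the determinant computation altogether (the log-determinant contributions cancel in the symmetrization), which is a mild economy; the paper's route, on the other hand, yields the KL equality as a bonus.
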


\begin{proof}
Since $D_J(p,q)=D_\KL(p,q)+D_\KL(q,p)$, we shall prove that $D_\KL(p_{\mu_1,\Sigma_1},p_{\mu_2,\Sigma_2})=D_\KL(q_{\bar P_1},q_{\bar P_2})$.
The KLD between two centered $(d+1)$-variate normals $q_{P_1}=p_{0,P_1}$ and $q_{P_2}=p_{0,P_2}$ is
$$
D_\KL(q_{P_1},q_{P_2})
=\frac{1}{2}\left(
\tr(P_2^{-1}P_1)-d-1+\log\frac{|P_2|}{|P_1|}
\right).$$
This divergence can be interpreted as the matrix version of the Itakura-Saito divergence~\cite{davis2006differential}.
It is a matrix spectral distance since we can write
$D_\KL(q_{P_1},q_{P_2})=(h_\KL\circ\lambda^\sp)(\Sigma_2^{-1}\Sigma_1)$, where $\lambda^\sp(S)=(\lambda_1(S),\ldots,\lambda_d(S))$
 and $h_\KL(u_1,\ldots,u_d)=\frac{1}{2}\left(u_i-1-\log u_i\right)$ (a gauge function).
Similarly, the Jeffreys divergence between two centered MVNs is a matrix spectral distance with gauge function $h_J(u)=\sum_{i=1}^d \left(\sqrt{u_i}-\frac{1}{\sqrt{u_i}}\right)^2$.

The SPD cone equipped with $\frac{1}{2}$ of the trace metric can be interpreted as Fisher-Rao centered normal manifolds (isometry): 
$\forall\mu, (\calN_\mu,g^\Fisher_{\calN_\mu})\cong(\calP,\frac{1}{2}g^\trace)$.

Since the determinant of a block matrix is
$
\det\left(\mattwotwo{A}{B}{C}{D}\right)=\det\left(A-BD^{-1}C\right),
$
 we get with $D=1$: $\det(f(\mu,\Sigma)) = \det(\Sigma+\mu\mu^\top-\mu\mu^\top)=\det(\Sigma)$.

Let $\bar P_1=f(\mu_1,\Sigma_1)$ and 
$\bar P_2=f(\mu_2,\Sigma_2)$.
Checking $D_\KL[p_{\mu_1,\Sigma_1}:p_{\mu_2,\Sigma_2}]=D_\KL[q_{\bar P_1}:q_{\bar P_2}]$ where $q_{\barP}=p_{0,\barP}$ amounts to verify that
$
\tr(\bar P_2^{-1}\bar P_1)=1+\tr(\Sigma_2^{-1}\Sigma_1+\Delta_\mu^\top\Sigma_2^{-1}\Delta_\mu)$.
Indeed, using the inverse matrix  
$$f(\mu,\Sigma)^{-1}=
\mattwotwo{\Sigma^{-1}}{-\Sigma^{-1}\mu}{-\mu^\top\Sigma^{-1}}{1+\mu^\top \Sigma^{-1}\mu}
,$$
we have 
$\tr(\bar P_2^{-1}\bar P_1)=\tr\left(
\mattwotwo{\Sigma^{-1}_2}{-\Sigma^{-1}_2\mu_2}{-\mu_2^\top\Sigma_2^{-1}}{1+\mu_2^\top \Sigma^{-1}_2\mu_2}\ \mattwotwo{\Sigma_1+\mu_1\mu_1^\top}{\mu_1}{\mu_1^\top}{1}
\right)
= 1+\tr(\Sigma_2^{-1}\Sigma_1+\Delta_\mu^\top\Sigma_2^{-1}\Delta_\mu)$.
Thus even if the dimension of the sample spaces of $p_{\mu,\Sigma}$ and $q_{\barP=f(\mu,\Sigma)}$ differs by one, we get the same KLD and Jeffreys divergence by Calvo and Oller's isometric mapping $f$.
\end{proof}

Second, the mixture geodesics are preserved by the embedding $f$:
\begin{proposition}\label{prop:geo}
The mixture geodesics are preserved by the embedding $f$: 
$$
f(\gamma_m^\calN(N_0,N_1;t))=\gamma^\calP_m(f(N_0),f(N_1);t).
$$
\end{proposition}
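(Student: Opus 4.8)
The plan is to verify the identity by direct computation on both sides using the explicit formulas for the mixture geodesic already given in the excerpt. Recall that the mixture geodesic in $\calN$ between $N_0=N(\mu_0,\Sigma_0)$ and $N_1=N(\mu_1,\Sigma_1)$ is $\gamma_m^\calN(N_0,N_1;t)=N(\mu_t^m,\Sigma_t^m)$ with $\mu_t^m=\bar\mu_t=t\mu_1+(1-t)\mu_0$ and $\Sigma_t^m=\bar\Sigma_t+t\mu_1\mu_1^\top+(1-t)\mu_0\mu_0^\top-\bar\mu_t\bar\mu_t^\top$, where $\bar\Sigma_t=t\Sigma_1+(1-t)\Sigma_0$. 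The key structural observation is that the natural (expectation) parameters of a centered normal $q_P=p_{0,P}$ on $\bbR^{d+1}$ are, up to a linear reparameterization, exactly the second-moment matrix $P$; and the mixture geodesic of an exponential family is the line segment in the expectation parameters. So the first step is to identify $f(N(\mu,\Sigma))$ with the second-moment/expectation parameter of the corresponding centered $(d+1)$-variate normal, and to note that the mixture geodesic in that centered family is affine in $P$.

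Concretely, I would first observe that $f$ maps $N(\mu,\Sigma)$ to the $(d+1)\times(d+1)$ matrix $\barP=\left[\begin{smallmatrix}\Sigma+\mu\mu^\top & \mu\\ \mu^\top & 1\end{smallmatrix}\right]$, which is precisely the matrix of second moments $E[YY^\top]$ where $Y=(X,1)$ with $X\sim N(\mu,\Sigma)$ (this matches the displayed computation $\det f(\mu,\Sigma)=\det\Sigma$ and the block structure used in Proposition~\ref{prop:sameJD}). The mixture geodesic $\gamma_m^\calP(\barP_0,\barP_1;t)$ between two centered normals $q_{\barP_0}$ and $q_{\barP_1}$ is, by the general information-geometric description recalled in the excerpt ($\alpha=-1$ connection = linear interpolation in the expectation coordinates), simply $t\barP_1+(1-t)\barP_0$. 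So the right-hand side of the claimed identity equals $t\,f(N_1)+(1-t)\,f(N_0)$.

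The second step is then a one-line matrix-block computation: $t\,f(N_1)+(1-t)\,f(N_0)$ has bottom-right entry $1$, bottom-left block $t\mu_1+(1-t)\mu_0=\bar\mu_t$, and top-left block $t(\Sigma_1+\mu_1\mu_1^\top)+(1-t)(\Sigma_0+\mu_0\mu_0^\top)=\bar\Sigma_t+t\mu_1\mu_1^\top+(1-t)\mu_0\mu_0^\top$. Comparing with the definition of $f$ applied to $\gamma_m^\calN(N_0,N_1;t)=N(\bar\mu_t,\Sigma_t^m)$, whose image under $f$ has top-left block $\Sigma_t^m+\bar\mu_t\bar\mu_t^\top$, I see that equality of the top-left blocks is exactly the stated formula $\Sigma_t^m=\bar\Sigma_t+t\mu_1\mu_1^\top+(1-t)\mu_0\mu_0^\top-\bar\mu_t\bar\mu_t^\top$, while the other blocks match trivially. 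Hence $f(\gamma_m^\calN(N_0,N_1;t))=t\,f(N_1)+(1-t)\,f(N_0)=\gamma_m^\calP(f(N_0),f(N_1);t)$, which is the claim.

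The only genuine subtlety — and the step I would be most careful about — is justifying that the mixture geodesic in the embedded centered $(d+1)$-variate family is literally the straight segment $t\barP_1+(1-t)\barP_0$ in the ambient $\Sym(d+1,\bbR)$, i.e. that linear interpolation of the expectation parameters of $N(0,P)$ coincides with linear interpolation of $P$ itself. This is immediate once one writes the centered normal as an exponential family with sufficient statistic $-\tfrac12 yy^\top$ and notes its expectation parameter is an affine function of $P$ (so mixture geodesics, being affine in the expectation parameters, are affine in $P$); the plane $\{\barP:\barP_{d+1,d+1}=1\}$ containing $\barcalN$ is itself affine, so the segment stays in $\barcalN$ and the pullback is well defined. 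Everything else is the routine block-matrix bookkeeping indicated above, and no analytic estimate or convergence argument is needed.
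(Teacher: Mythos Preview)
Your proposal is correct and follows the same route as the paper: the paper's justification is the single sentence ``We check that $f(\LERP(N_0,N_1;t))=\LERP(\barP_0,\barP_1;t)$,'' and your block-matrix computation is precisely that check written out in full, together with the (correct) identification of $\gamma_m^\calP$ with the affine segment in $P$. Your added remark that the segment stays in the affine slice $\{\barP_{d+1,d+1}=1\}$ is a useful clarification the paper leaves implicit.
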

We check that $f(\LERP(N_0,N_1;t))=\LERP(\barP_0,\barP_1;t)$.
Thus the pullback of the Hilbert cone geodesics are thus coinciding with the mixture geodesics on $\calN$.

Therefore all algorithms on $\calN$ which only require $m$-geodesics or $m$-projections~\cite{IG-2016} by minimizing the right-hand side of the KLD can be implemented by algorithms on $\calP$ by using the $f$-embedding. 
On $\calP$, the minimizing problems amounts to a logdet minimization problem well-studied in  the both optimization community  and information geometry community information projections~\cite{tsuda2003algorithm}.

However, the exponential geodesics are preserved only for submanifolds $\calN_\mu$ of $\calN$ with fixed mean $\mu$.
Thus $\barN_\mu$ preserve both mixture and exponential geodesics: 
The submanifolds $\barN_\mu$ are said to be {\em doubly auto-parallel}~\cite{ohara2019doubly}.

Instead of considering the minimax center (i.e., circumcenter), we can also compute iteratively the Riemannian SPD Fr\'echet mean 
of a finite set of $n$ SPD matrices $S_1,\ldots, S_n$:\\

\underline{{\sc RieStoCentroid}}:\\
First, we let $C_1=S_{f_1}$ where $f_1$ is chosen uniformly randomly in $\{1,\ldots, n\}$. Then we iteratively update $C_i=C_{i-1}\#_{\frac{1}{i}} S_{f_i}$  for $i>1$, where $f_i$ is chosen uniformly randomly in $\{1,\ldots, n\}$.
Convergence in probability is reported for non-positive curvature spaces in~\cite{cheng2016recursive}, and more generally for length spaces in~\cite{sturm2003probability}. 
The SPD cone is NPC but not the normal manifold which has some positive sectional curvatures~\cite{Skovgaard-1984}.
To approximate the Riemannian Fisher normal centroid of n MVNs $N(\mu_1,\Sigma_1),\ldots, N(\mu_n,\Sigma_n)$, we lift them onto the higher-dimensional SPD cone: $\bar N_i=f(N(\mu_i,\Sigma_i))$. We then apply the {\sc RieStoCentroid} algorithm for $T$ iterations, and pullback $C_T$: $\tilde N=f^{-1}(C_T)$.
See Figure~\ref{fig:FRcentroid} for some illustrations.

\begin{figure}
\centering

\begin{tabular}{cc}
\fbox{\includegraphics[width=0.45\columnwidth]{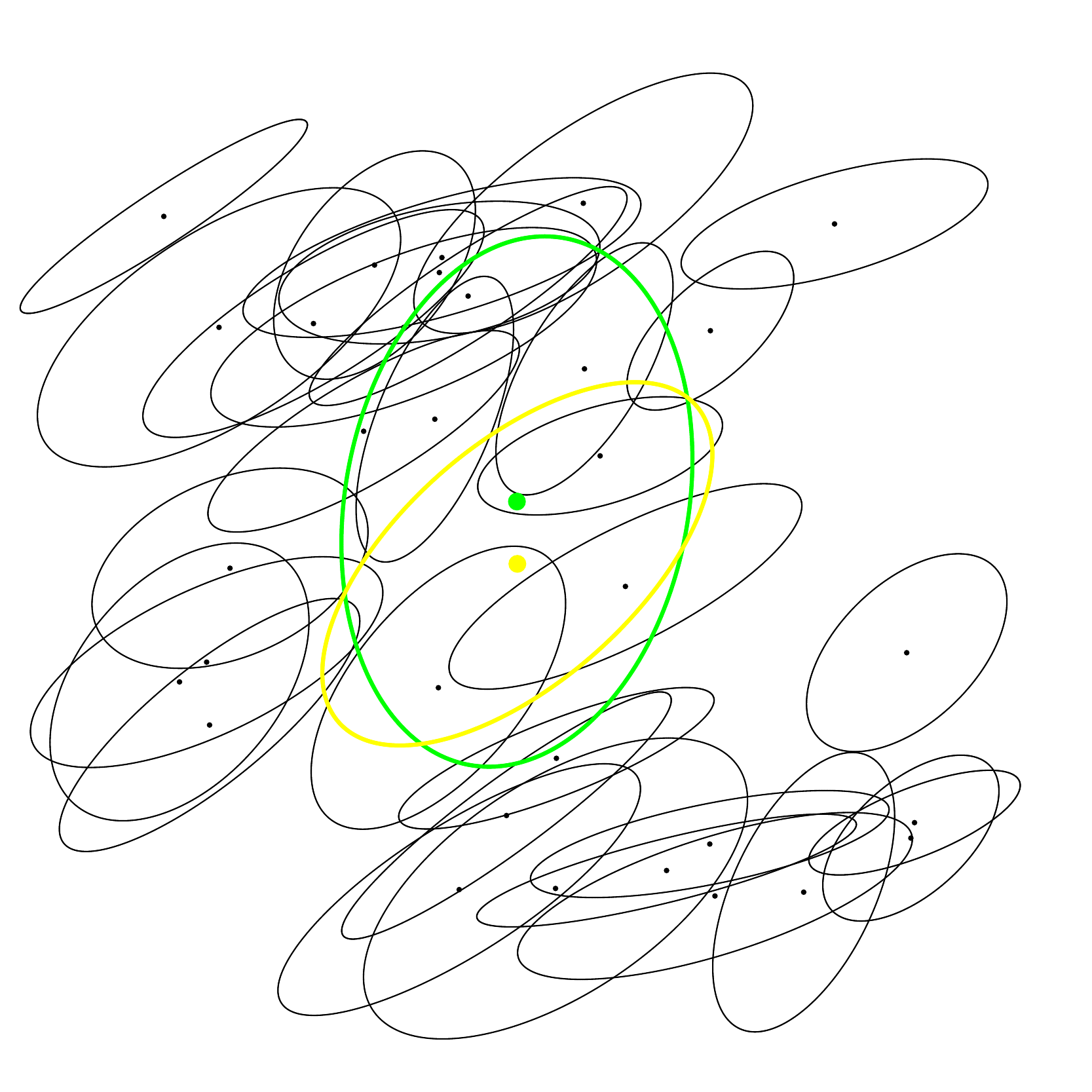}} &
\fbox{\includegraphics[width=0.45\columnwidth]{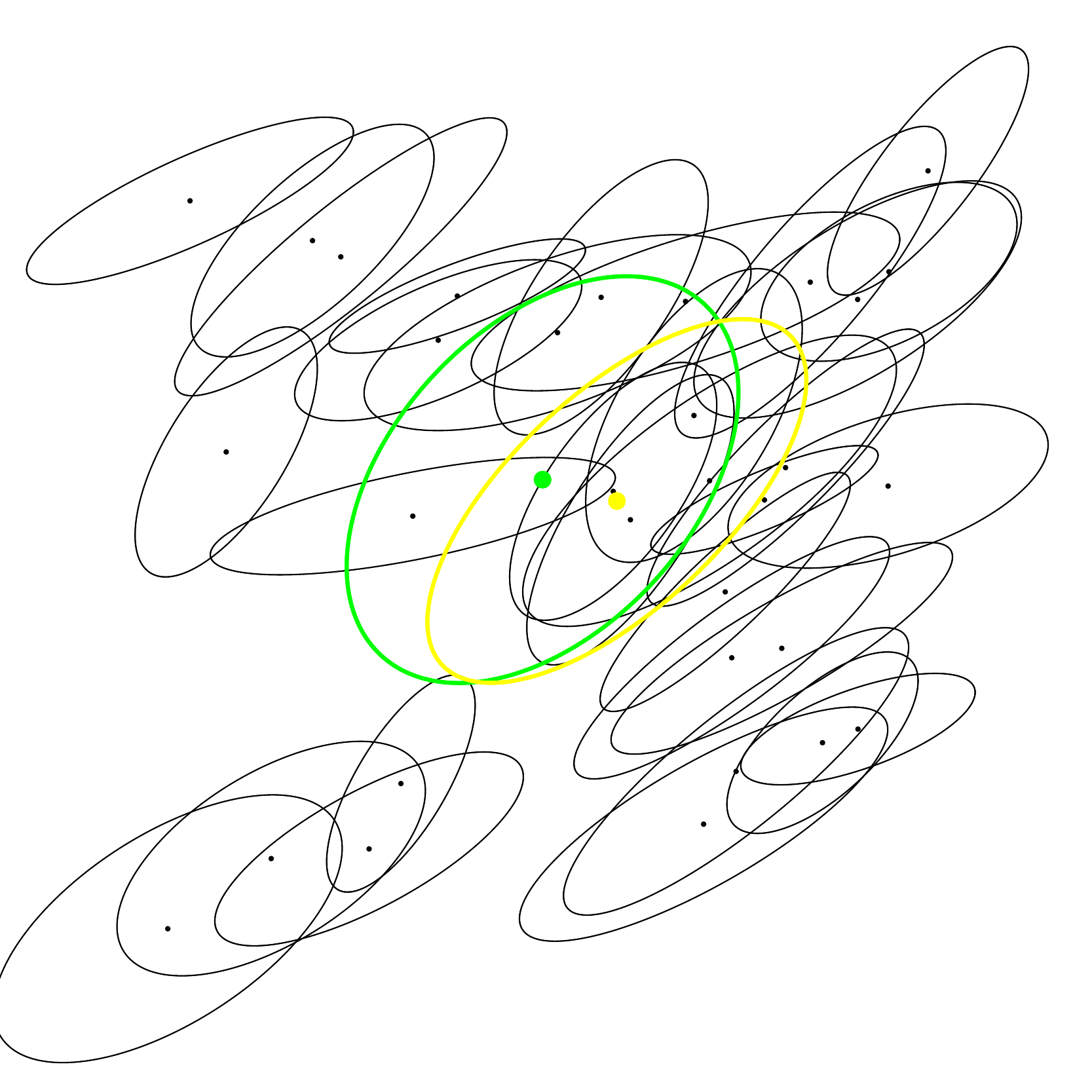}}\cr
(a) & (b)
\end{tabular}

\caption{Approximate Fisher centroid of $n$ bivariate normals (yellow) vs approximate Hilbert minimax center (green).
\label{fig:FRcentroid}}

\end{figure}

\vskip 0.5cm
\noindent Online materials are available at \url{https://franknielsen.github.io/FisherRaoMVN/}

\bibliographystyle{plain}

\bibliography{RaoMVNTAGBIB-arxiv}

 
\appendix

\section{Fisher-Rao geodesics between MVNs with initial value conditions}\label{sec:gs}

The Fisher-Rao geodesics $\gamma(t)$ are smooth curves which are autoparallel with respect to the Levi-Civita connection $\nabla^g$ induced by the metric tensor $g$: $\nabla^g_{\dot\gamma} \dot\gamma=0$.
On the MVN manifold, the system of Riemannian geodesic equations~\cite{Skovgaard-1984} is
$$
\left\{ \begin{array}{lcl}
\ddot\mu-\dot\Sigma\Sigma^{-1}\dot\mu &=& 0,\\
\ddot\Sigma+\dot\mu\dot\mu^\top-\dot\Sigma\Sigma^{-1}\dot\Sigma &=& 0.
\end{array}
\right.
$$

We may solve the above differential equation system either using initial value conditions (IVPs) by prescribing $N_0=(\mu(0),\Sigma(0))$ and a tangent vector $\dot N(0)=(\dot\mu(0),\dot\Sigma(0))$, or with boundary value conditions (BVPs) by prescribing 
$N_0=(\mu(0),\Sigma(0))$  and $N_1=(\mu(1),\Sigma(1))$.
Let $\gamma_\calN^\Fisher(N_0,\dot N_0;t)$ and $\gamma_\calN^\Fisher(N_0,N_01;t)$ denote these two types of geodesics.

Without loss of generality, let us assume  $N_0=N(0,I)$ (standard normal distribution).
The task is to perform geodesic shooting, i.e., calculate $N(t)=\gamma_\calN^\Fisher(N_0,v_0;t)$ with 
some prescribed initial condition $(v,S)=\dot\gamma_\calN^\Fisher(N_0,v_0;0)\in T_{N\std}\calM$ and $t\geq 0$.

\subsection{Eriksen's homogeneous symmetric space solution}\label{app:Eriksen}

We use the following natural parameterization  of the normal distributions:
$$
\left(\xi=\Sigma^{-1}\mu,\Xi=\Sigma^{-1}\right).
$$ 
The initial conditions are given by $(a=\dot\xi(0),B=\dot\Xi(0))=\dot\gamma_\calN^\Fisher(N_\std,v_0;0)$ at the standard normal distribution $N_\std$.
Eriksen's method proceeds as follows to calculate $\underline{\gamma}_\FR^\calN(N_\std,v_0;t)$:
\begin{itemize}
\item Build a $(2d+1)\times (2d+1)$ matrix $A$:
$$
A=\matthreethree{-B}{a}{0}{a^\top}{0}{-a^\top}{0}{-a}{B}.
$$
\item Compute matrix $M(t)=\exp(tA)$ for the prescribed value $t>0$.
To calculate $\exp(M)$, we first compute the eigen decomposition of 
$AM=O\, \diag(\lambda_1,\ldots,\lambda_d)\, O^\top$ and then reports $O\, \diag(\exp(\lambda_1),\ldots,\exp(\lambda_d))\, O^\top$.

\item Extract $(\xi(t),\Xi(t))$ from $M(t)$ and convert to $(\mu(t),\Sigma(t))$:
\begin{eqnarray*}
\Sigma(t) &=& [M(t)]^{-1}_{1:d,1:d},\\
\mu(t) &=& \Sigma(t) [M(t)]_{1:d,d+1}.
\end{eqnarray*}

\item Report $\underline{\gamma}_\FR^\calN(N_\std,v_0;t)=(\mu(t),\Sigma(t))$.
\end{itemize}

Eriksen's method yields a pregeodesic and not a geodesic.
For geodesics, we have the following Fisher-Rao distance property $\rho_\calN({\gamma}_\FR^\calN(N_\std,v_0;s),{\gamma}_\FR^\calN(N_\std,v_0;t))=|s-t|\, \rho_\calN({\gamma}_\FR^\calN(N_0,N_1)$ where $N_0=N_\std$ and $N_1={\gamma}_\FR^\calN(N_\std,v_0;1)$.

\subsection{Calvo and Oller's direction solution}\label{app:CO}
We report the solution given in~\cite{calvo1991explicit} which relies on the following natural parameterization 
of the normal distributions
$$
\left(\xi=\Sigma^{-1}\mu,\Xi=\Sigma^{-1}\right).
$$ 
The initial conditions are given by $(a=\dot\xi(0),B=\dot\Xi(0))=\dot\gamma_\calN^\Fisher(N_0,v_0;0)$.

The method of~\cite{calvo1991explicit}  first calculate those quantities:
\begin{eqnarray*}
B &=& -\Xi(0)^{-\frac{1}{2}}\, \dot\Xi(0)\, \Xi(0)^{-\frac{1}{2}},\\
a &=& \Xi(0)^{-\frac{1}{2}}\dot\xi(0)+B\Xi_0^{-\frac{1}{2}}\xi(0),\\
G &=& (B^2+2aa^\top)^{\frac{1}{2}}.
\end{eqnarray*}

Furthermore, let $G^\dagger=G^{-1}$ when $G$ is invertible or $G^\dagger=(G^\top G)^{-1}G^\top$  the Moore-Penrose generalized pseudo-inverse matrix of $G$ otherwise (or any kind of generalized matrix inverse $G^-$~\cite{calvo1991explicit}, see).

Then we have $(\xi(t),\Xi(t))=\gamma_\calN^\Fisher(N_0,v_0;t)$ with
\begin{eqnarray*}
\Xi(t) &=& \Xi(0)^{\frac{1}{2}}\, R(t)R(t)^\top\, \Xi(0)^{\frac{1}{2}},\\
\xi(t) &=& 2\Xi(0)^{\frac{1}{2}}\, R(t)\Sinh\left(\frac{1}{2} G  t\right)G^\dagger a+\Xi(t)\Xi^{-1}(0)\xi(0),
\end{eqnarray*}
and 
$$
R(t)=\Cosh\left(\frac{1}{2} G  t\right) -BG^\dagger\Sinh\left(\frac{1}{2} G  t\right).
$$

The matrix hyperbolic cosine and sinus functions of $M$ are calculated from the eigen decomposition of 
$M=O\, \diag(\lambda_1,\ldots,\lambda_d)\, O^\top$ as follows:
\begin{eqnarray*}
\Sinh(M) &=& O\, \diag(\sinh(\lambda_1),\ldots,\sinh(\lambda_d))\, O^\top,\quad \sinh(u)=\frac{e^u-e^{-u}}{2}=
\sum_{i=0}^\infty \frac{u^{2i+1}}{(2i+1)!}, \\
\Cosh(M) &=& O\, \diag(\cosh(\lambda_1),\ldots,\cosh(\lambda_d))\, O^\top,\quad \cosh(u)=\frac{e^u+e^{-u}}{2}
=\sum_{i=0}^\infty \frac{u^{2i}}{(2i)!}.
\end{eqnarray*}

For the general case $\gamma_\calN^\Fisher(N,v_0;t)$ with arbitrary $N=(\Sigma,\mu)$,  
we use the affine equivariance property of the Fisher-Rao geodesics with $P=\Sigma^{-\frac{1}{2}}$:

\begin{equation}
\gamma_\calN^\Fisher(N,v_0;t) =  (-P\mu,P^{-1}).\gamma_\calN^\Fisher(N_\std,(Pa,-PBP^\top);t).
\end{equation}

Figure~\ref{fig:geoIVP} displays several examples of geodesics from the standard normal distribution with various initial value conditions.  

\begin{figure}
\centering
\begin{tabular}{cc}
\fbox{\includegraphics[width=0.45\textwidth]{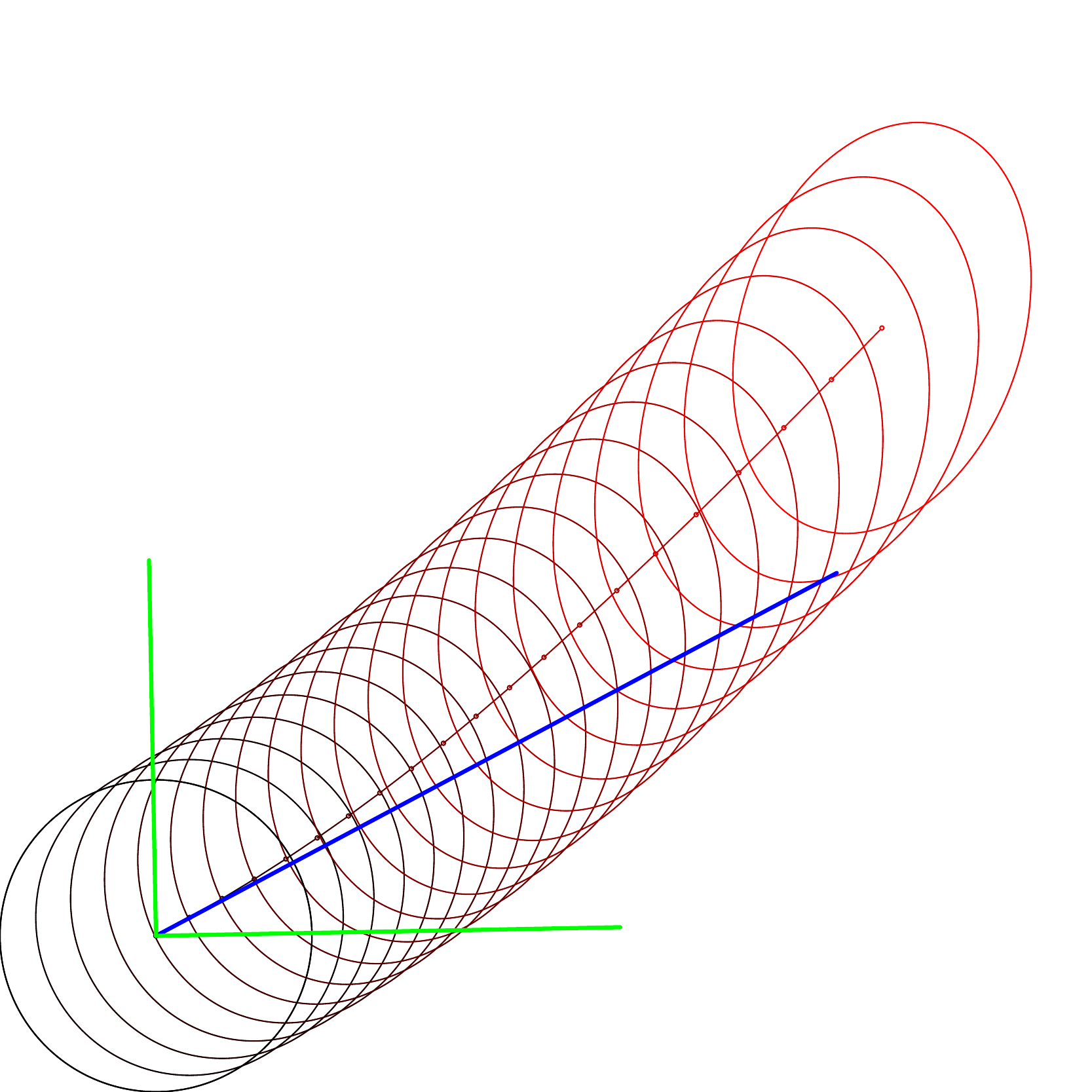}} &
\fbox{\includegraphics[width=0.45\textwidth]{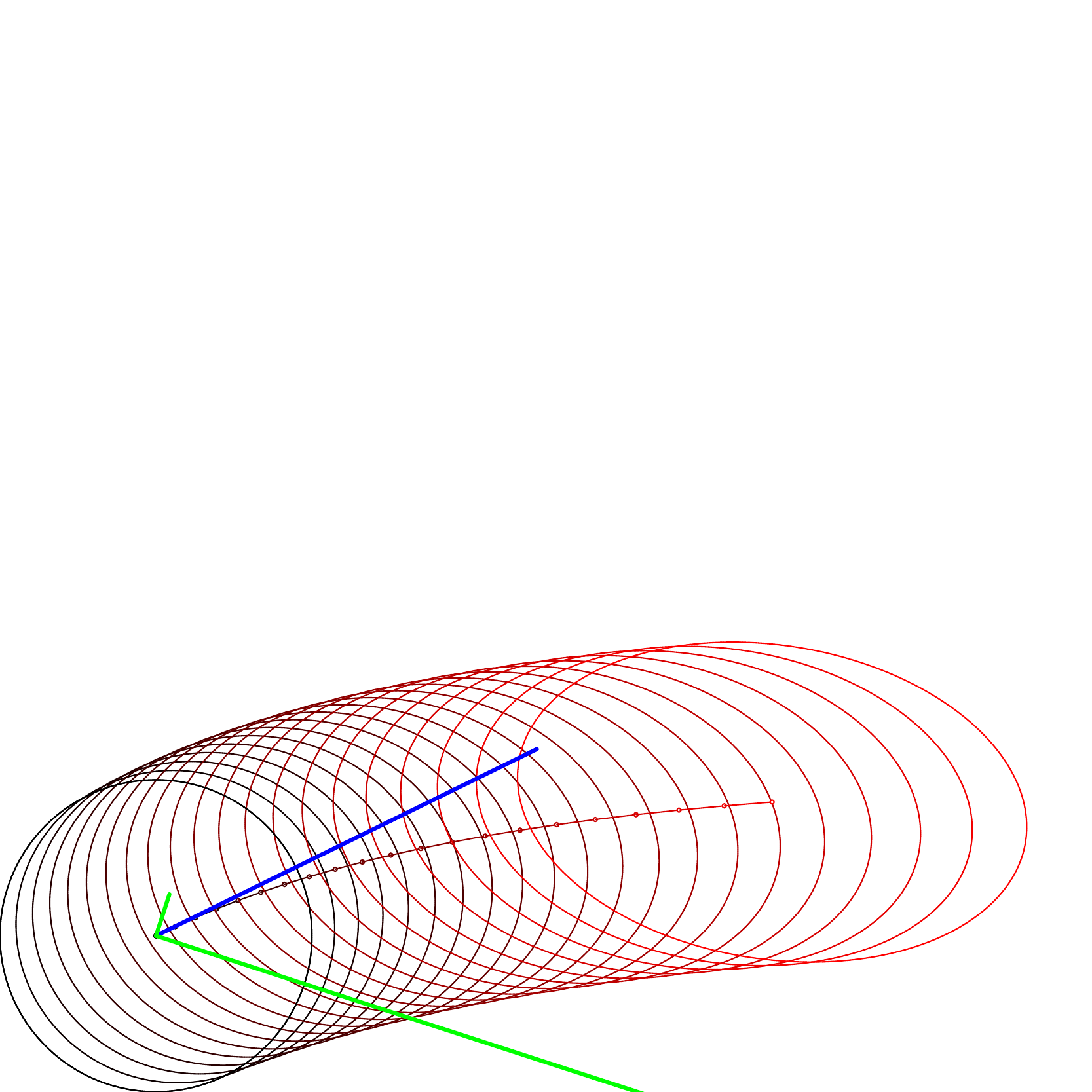}} \cr
\fbox{\includegraphics[width=0.45\textwidth]{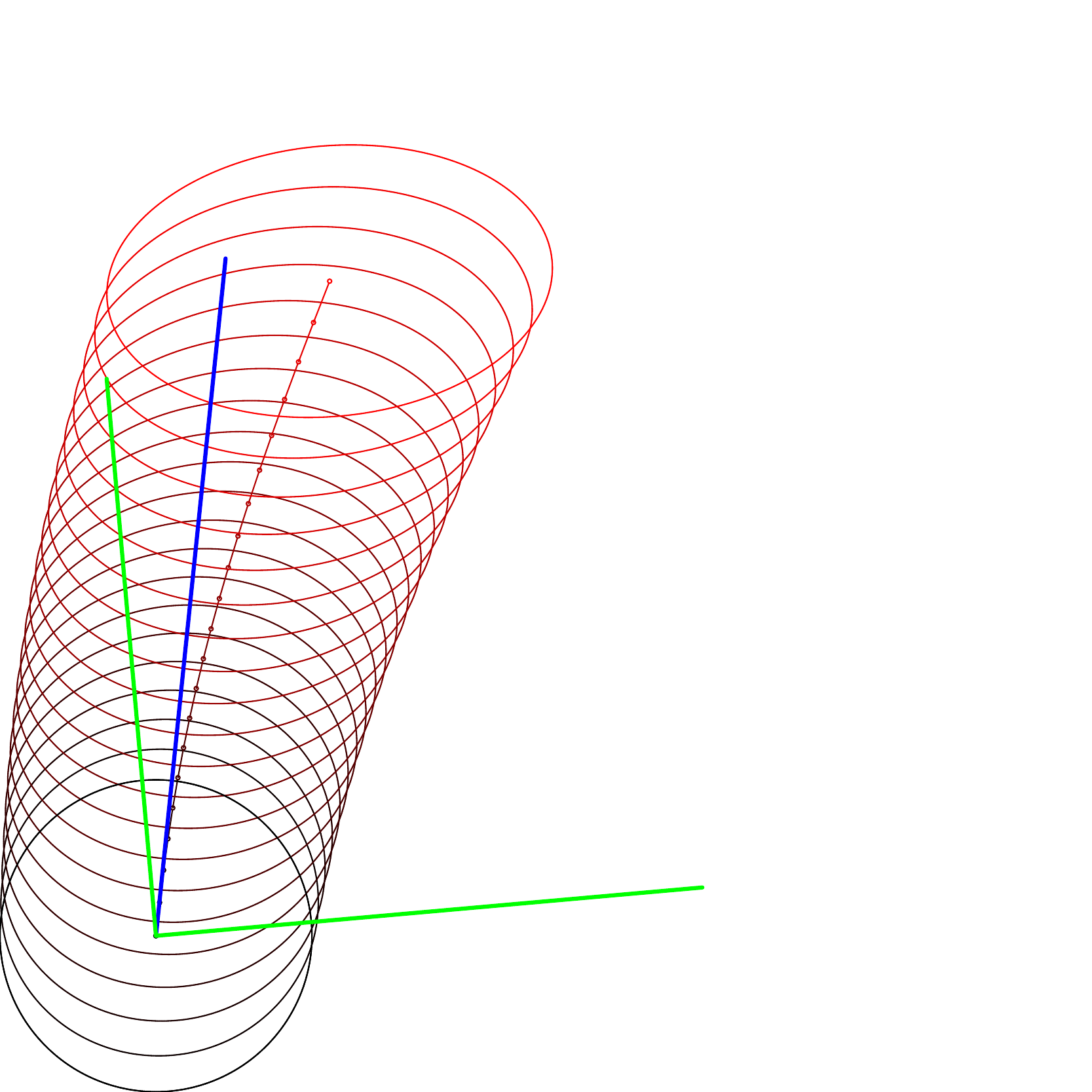}} &
\fbox{\includegraphics[width=0.45\textwidth]{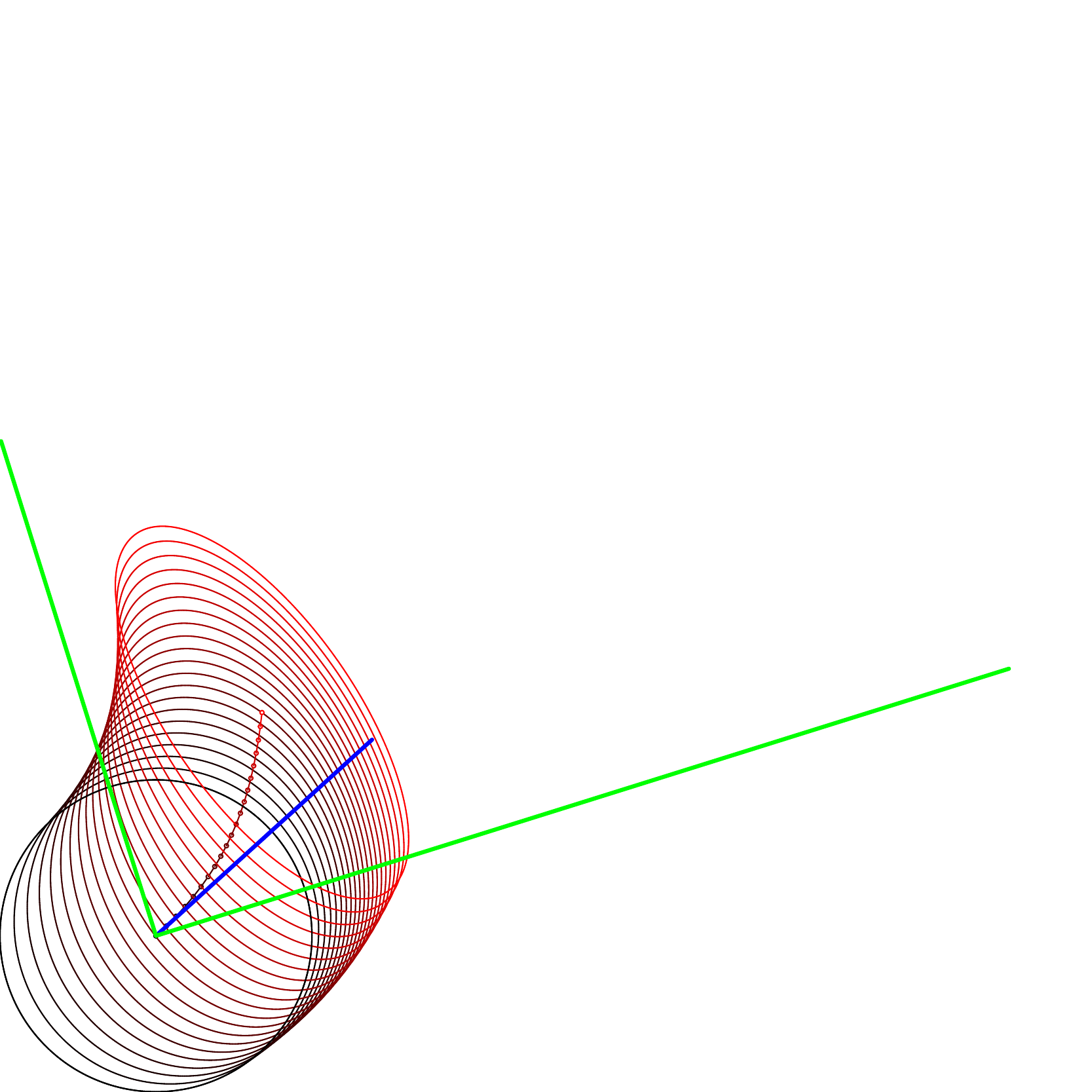}} \cr
\end{tabular}
\caption{
Examples of Fisher-Rao geodesics $(\mu(t),\Sigma(t))$ emanating from the standard bivariate normal distribution 
$(\mu(0),\Sigma(0))=N(0,I)$ with initial value conditions $(\dot\mu(0),\dot\Sigma(0))$. 
Vectors $\dot\mu(0)$ are shown in blue and symmetric matrices $\dot\Sigma(0)=\lambda_1v_1v_1^\top+\lambda_2v_2v_2^\top$ are visualized by
their two scaled eigenvectors $\lambda_1 v_1$ and $\lambda_2 v_2$ shown in green.
 \label{fig:geoIVP}}
\end{figure}

The geodesics with initial values let us define the Riemannian exponential map $\exp: T_N\calM\rightarrow \calM$:
$$
\exp_N(v)=\gamma_\calN^\Fisher(N,v;1).
$$
The inverse map is the Riemannian logarithm map.
In a geodesically complete manifold (e.g., $\calN_\mu$), we can express the Fisher-Rao distance as:
$$
\rho_\calN(N_1,N_2)=\|\Log_{N_1}(N_2)\|_{N_1}.
$$
Thus computing the Fisher-Rao distance can be done by computing the Riemannian MVN logarithm.

\section{Proof of square root of Jeffreys upper bound}\label{sec:proof}

Let us prove that the Fisher-Rao distance between  normal distributions is upper bounded by the square root of the Jeffreys divergence: 
$$
\rho_\calN(N_1,N_2) \leq \sqrt{D_J(N_1,N_2)}.
$$

\begin{property}
We have 
$$
D_J[p_{\lambda_1},p_{\lambda_2}]=\int_0^1 \ds_\calN^2(\gamma^m_\calN(p_{\lambda_1},p_{\lambda_2};t))\dt =
\int_0^1 \ds_\calN^2(\gamma^e_\calN(p_{\lambda_1},p_{\lambda_2};t))\dt.
$$
\end{property}

Let $S_F(\theta_1;\theta_2)=B_F(\theta_1:\theta_2)+B_F(\theta_2:\theta_1)$ be a symmetrized Bregman divergence.
Let $\ds^2=\dtheta^\top \nabla^2 F(\theta)\dtheta$ denote the squared length element on the Bregman manifold 
and denote by $\gamma(t)$ and $\gamma^*(t)$ the dual geodesics connecting $\theta_1$ to $\theta_2$.
We can express $S_F(\theta_1;\theta_2)$ as integral energies on dual geodesics:

\begin{property}
We have $S_F(\theta_1;\theta_2)=\int_0^1 \ds^2(\gamma(t))\dt=\int_0^1 \ds^2(\gamma^*(t))\dt$.
\end{property}

\begin{proof}
The proof that the symmetrized Bregman divergence amount to these energy integrals
is based on the first-order and second-order directional derivatives.
The first-order directional derivative $\nabla_u F(\theta)$ with respect to vector $u$ is defined by 
$$
\nabla_u F(\theta)=\lim_{t\rightarrow 0} \frac{F(\theta+tv)-F(\theta)}{t}=v^\top \nabla F(\theta).
$$
 
The second-order directional derivatives $\nabla_{u,v}^2 F(\theta)$ is
\begin{eqnarray*}
\nabla_{u,v}^2 F(\theta) &=& \nabla_{u} \nabla_v F(\theta),\\
 &=& \lim_{t\rightarrow 0} \frac{v^\top \nabla F(\theta+tu)-v^\top\nabla F(\theta)}{t},\\
&=& u^\top \nabla^2 F(\theta) v.
\end{eqnarray*}

Now consider the squared length element $\ds^2(\gamma(t))$ on the primal geodesic $\gamma(t)$ expressed using the primal coordinate system $\theta$:
$\ds^2(\gamma(t))=\dtheta(t)^\top \nabla^2F(\theta(t)) \dtheta(t)$ with $\theta(\gamma(t))=\theta_1+t(\theta_2-\theta_1)$ and $\dtheta(t)=\theta_2-\theta_1$.
Let us express the $\ds^2(\gamma(t))$   using the second-order directional derivative:
$$
\ds^2(\gamma(t))=\nabla^2_{\theta_2-\theta_1}  F(\theta(t)).
$$
Thus we have $\int_0^1 \ds^2(\gamma(t))\dt=[\nabla_{\theta_2-\theta_1}  F(\theta(t))]_0^1$,
where the first-order directional derivative is $\nabla_{\theta_2-\theta_1}  F(\theta(t))=(\theta_2-\theta_1)^\top \nabla F(\theta(t))$.
Therefore we get $\int_0^1 \ds^2(\gamma(t))\dt=(\theta_2-\theta_1)^\top (\nabla F(\theta_2)-\nabla F(\theta_1))=S_F(\theta_1;\theta_2)$.

Similarly, we express the squared length element $\ds^2(\gamma^*(t))$ using the dual coordinate system $\eta$ as the second-order directional derivative of $F^*(\eta(t))$ with $\eta(\gamma^*(t))=\eta_1+t(\eta_2-\eta_1)$:
$$
\ds^2(\gamma^*(t))=\nabla^2_{\eta_2-\eta_1}  F^*(\eta(t)).
$$
Therefore, we have  $\int_0^1 \ds^2(\gamma^*(t))\dt=[\nabla_{\eta_2-\eta_1}  F^*(\eta(t))]_0^1=S_{F^*}(\eta_1;\eta2)$.
Since $S_{F^*}(\eta_1;\eta_2)=S_F(\theta_1;\theta_2)$, we conclude that
$$
S_F(\theta_1;\theta_2)=\int_0^1 \ds^2(\gamma(t))\dt=\int_0^1 \ds^2(\gamma^*(t))\dt
$$

Note that in 1D, both pregeodesics $\gamma(t)$ and $\gamma^*(t)$ coincide. We have $\ds^2(t)=(\theta_2-\theta_1)^2 f''(\theta(t))=(\eta_2-\eta_1){f^*}''(\eta(t))$ so that we check that $S_F(\theta_1;\theta_2)=\int_0^1 \ds^2(\gamma(t))\dt=(\theta_2-\theta_1)[f'(\theta(t))]_0^1=(\eta_2-\eta_1)[{f^*}'(\eta(t))]_0^1=(\eta_2-\eta_1)(\theta_2-\theta_2)$.
 \end{proof}

\begin{property}[\cite{IG-2016}]\label{prop:geolengthJeffreys}
We have $$
D_J[p_{\lambda_1},p_{\lambda_2}]=\int_0^1 \ds_\calN^2(\gamma^m_\calN(p_{\lambda_1},p_{\lambda_2};t))\dt =
\int_0^1 \ds_\calN^2(\gamma^e_\calN(p_{\lambda_1},p_{\lambda_2};t))\dt.
$$
\end{property}

\begin{proof}
Let us report a proof of this remarkable fact  in the general setting of Bregman manifolds.
Indeed, since 
$$
D_J[p_{\lambda_1},p_{\lambda_2}]=D_\KL[p_{\lambda_1}:p_{\lambda_2}]+D_\KL[p_{\lambda_2}:p_{\lambda_1}],
$$
and $D_\KL[p_{\lambda_1}:p_{\lambda_2}]=B_F(\theta(\lambda_2):\theta(\lambda_1))$, where $B_F$ denotes the Bregman divergence induced by the cumulant function of the multivariate normals and $\theta(\lambda)$ is the natural parameter corresponding to $\lambda$, we have
\begin{eqnarray*}
D_J[p_{\lambda_1},p_{\lambda_2}]&=&B_F(\theta_1:\theta_2)+B_F(\theta_2:\theta_1),\\
&=& S_F(\theta_1;\theta_2)=(\theta_2-\theta_1)^\top (\eta_2-\eta_1)=S_{F^*}(\eta_1;\eta_2),
\end{eqnarray*}
where $\eta=\nabla F(\theta)$ and $\theta=\nabla F^*(\eta)$ denote the dual parameterizations obtained by the Legendre-Fenchel convex conjugate  $F^*(\eta)$ of $F(\theta)$. Moreover, we have $F^*(\eta)=-h(p_{\mu,\Sigma})$~\cite{IG-2016}, i.e., the convex conjugate function is Shannon negentropy.

Then we conclude by using the fact that $S_F(\theta_1;\theta_2)=\int_0^1 \ds^2(\gamma(t))\dt=\int_0^1 \ds^2(\gamma^*(t))\dt$, 
i.e., the symmetrized Bregman divergence amounts to integral energies on dual geodesics on a Bregman manifold.
The proof of this general property is reported in Appendix~\ref{sec:proof}.
\end{proof}
 
 \begin{property}[Fisher--Rao upper bound] 
The Fisher-Rao distance between   normal distributions is upper bounded by the square root of the Jeffreys divergence: $\rho_\calN(N_1,N_2) \leq\sqrt{D_J(N_1,N_2)}$.
\end{property}

\begin{proof}
Consider the Cauchy-Schwarz inequality for positive functions $f(t)$ and $g(t)$:
 $$
\int_0^1 f(t)\, g(t)\dt\leq\sqrt{(\int_0^1 f(t)^2\dt)\, (\int_0^1 g(t)^2\dt)},
$$
and let $f(t)=\ds_\calN(\gamma^c_\calN(p_{\lambda_1},p_{\lambda_2};t)$ and $g(t)=1$. 
Then we get: 
$$
\left(\int_0^1 \ds_\calN(\gamma^c_\calN(p_{\lambda_1},p_{\lambda_2};t)\dt\right)^2
\leq \left(\int_0^1 \ds_\calN^2(\gamma^c_\calN(p_{\lambda_1},p_{\lambda_2};t)\dt\right) 
\left( \underbrace{\int_0^1 1^2 \dt}_{=1} \right).
$$
Furthermore since by definition of $\gamma_\calN^{\mathrm{FR}}$, we have 
$$
\int_0^1 \ds_\calN(\gamma^c_\calN(p_{\lambda_1},p_{\lambda_2};t)\dt\geq \int_0^1 \ds_\calN(\gamma^{\mathrm{FR}}_\calN(p_{\lambda_1},p_{\lambda_2};t)\dt=:\rho_\calN(N_1,N_2).
$$

It follows for $c=\gamma^e_\calN$ (i.e., $e$-geodesic) using Property~\ref{prop:geolengthJeffreys} that we have:

$$
\rho_\calN(N_1,N_2)^2 \leq \int_0^1 \ds_\calN^2(\gamma^e_\calN(p_{\lambda_1},p_{\lambda_2};t)\dt = D_J(N_1,N_2).
$$
Thus we conclude that $\rho_\calN(N_1,N_2) \leq\sqrt{D_J(N_1,N_2)}$.

Note that in Riemannian geometry, a curve $\gamma$ minimizes the energy $E(\gamma)=\int_0^1 \|\dot\gamma(t)\|^2\dt$ if it minimizes the length $\Length(\gamma)=\int_0^1 \|\dot\gamma(t)\|\dt$ and $\|\dot\gamma(t)\|$ is constant. Using Cauchy-Schwartz inequality, we can show that 
$\Length(\gamma)\leq E(\gamma)$.
\end{proof}

\section{Riemannian SPD geodesic and the arithmetic-harmonic inductive mean}\label{sec:ahm}

\def\AHM{\mathrm{AHM}}
The Riemannian SPD geodesic $\gamma(X,Y;t)$ joining two SPD matrices $X$ and $Y$ with respect to the trace metric can be expressed using the weighted matrix geometric mean:
\begin{equation}\label{eq:nwmm}
\gamma(X,Y;t)=X\#_t Y= 
X^{\frac{1}{2}}\, \left(X^{-\frac{1}{2}}\, Y\, X^{-\frac{1}{2}}\right)^t\, X^{\frac{1}{2}}.
\end{equation}
We denote by $X\# Y=X\#_{\frac{1}{2}} Y=$.
For $t\in (0,1)$, $X\#_t Y$ defines a matrix mean~\cite{bhatia2006noncommutative} of $X$ and $Y$ which is asymmetric when $t\not=\frac{1}{2}$.

The  matrix geometric mean can be computed inductively using the following arithmetic-harmonic sequence:
\begin{eqnarray*}
A_{t+1} &=& A(A_t,H_t),\\
H_{t+1} &=& H(A_t,H_t),
\end{eqnarray*}
where the matrix arithmetic mean is $A(X,Y)=\frac{X+Y}{2}$ 
and the matrix harmonic mean is $H(X,Y)=2(X^{-1}+Y^{-1})^{-1}$.
The sequence is initialized with $A_0=X$ and $H_0=Y$.
We have $\AHM(X,Y)=\lim_{t\rightarrow\infty} A_t=\lim_{t\rightarrow\infty} H_t=X\#_{\frac{1}{2}} Y$, 
and the convergence is of quadratic order~\cite{nakamura2001algorithms}.
This iterative method converges to $X\# Y$, the non-weighted matrix geometric mean. 
In general, taking weighted arithmetic and harmonic means $A(X,Y)=(1-\alpha)X+\alpha Y$ and 
$H(X,Y)=((1-\alpha)X^{-1}+\alpha Y^{-1})^{-1}$ yields convergence to a matrix which is not the weighted geometric mean $X\#_\alpha Y$ (except when $\alpha=\frac{1}{2}$.
The method requires to compute the matrix harmonic mean which requires to inverse matrices.
The closed-form formula of the matrix weighted geometric mean of Eq.~\ref{eq:nwmm} requires to compute a matrix fractional power which can be done from a matrix eigen decomposition.

\section{Power method to approximate the largest and smallest eigenvalues}\label{sec:powermethod}
We concisely recall the power method and its computational complexity to approximate the largest eigenvalue $\lambda_1$ of a $d\times d$ symmetric positive-definite matrix $P$ following~\cite{trevisan2017lecture}:
\begin{itemize}
	\item Pick uniformly at random $x^{(0)}\in \{-1,1\}^d$
	\item For $t\in (1,\ldots,T)$ do $x^{(t)}\leftarrow P\, x^{(t-1)}$
	\item Return $\tilde\lambda_1=\frac{ \inner{x^{(T)}}{Px^{(T)}}}{\inner{x^{(T)}}{x^{(T)}}}$, where $\inner{x}{y}=x^\top y$ denotes the dot product.
\end{itemize}
The complexity of the power method with $T$ iterations is $O(T(d+m))$ where $m=O(d^2)$ is the number of non-zero entries of $P$. 
Furthermore, with probability $\geq\frac{3}{16}$, the iterative power method with $T=O\left(\frac{d}{\epsilon}d\right)$ iterations yields 
$\tilde\lambda_1\geq (1-\epsilon)\lambda_1$ for any $\epsilon>0$~\cite{trevisan2017lecture}.
Due to its vector-matrix product operations, the power method can be efficiently implemented on GPU~\cite{ballard2011efficiently}.

To compute an approximation $\tilde\lambda_d$ of the smallest eigenvalues $\lambda_d$ of $P$, we first compute the matrix inverse $P^{-1}$ and then compute the approximation of the largest eigenvalue of $P^{-1}$.
We report $\tilde\lambda_d(P)=\tilde\lambda_1(P^{-1})$.
The complexity of computing a matrix inverse is as hard as computing the matrix product~\cite{cormen2022introduction}.
The current best algorithm requires $O(d^\omega)$ operations with $\omega=2.373$~\cite{alman2021refined}.

\section{Source code for Fisher-Rao normal geodesics}\label{sec:java}

\lstinputlisting[language = Java]{StandAloneFisherRaoMVN.java}

\end{document}